\definecolor{iccvblue}{rgb}{0.21,0.49,0.74}
\pgfplotsset{compat=1.18}
\definecolor{codedarkgreen}{RGB}{51, 133, 4}
\definecolor{codemaroon}{RGB}{133, 5, 63}
\definecolor{codeteal}{RGB}{0, 128, 96}
\lstdefinelanguage{Macaulay2}{
basicstyle=\small\ttfamily,
alsoletter=",
classoffset=1,
keywords={sub,join,matrix,minors,gb,transpose,det,ideal,apply,subsets,ker,gens,fold,flatten,entries},
keywordstyle={\color{blue}},
classoffset=2,
morekeywords={from, to, list},
keywordstyle={\color{codemaroon}},
classoffset=3,
morekeywords={QQ},
keywordstyle={\color{codedarkgreen}},
classoffset=4,
morekeywords={MonomialOrder},
keywordstyle={\color{codeteal}},
xleftmargin=1.5cm,
xrightmargin=1em,
columns=fullflexible,
keepspaces=true,
stepnumber=1,
numbers=none,
captionpos=b,
showspaces=false,
frame=none
}
\renewcommand{\vec}[1]{\boldsymbol{#1}}
\newcommand{\todo}[1]{{\color{red} #1}}
\newcommand{\tp}{\intercal}
\newcommand{\PP}{\mathbb P}
\newcommand{\RR}{\mathbb R}
\newcommand{\ee}{\boldsymbol{\varepsilon}}
\newcommand{\xx}{\boldsymbol{x}}
\newcommand{\yy}{\boldsymbol{y}}
\newcommand{\norm}[1]{\left\|#1\right\|}
\newtheorem{theorem}{Theorem}
\numberwithin{theorem}{section}
\newtheorem{proposition}[theorem]{Proposition}
\newtheorem{lemma}[theorem]{Lemma}
\newtheorem{remark}[theorem]{Remark}
\theoremstyle{definition}
\newtheorem{exmp}{Example}[section]
\newcommand{\new}[1]{{\color{black}{#1}}}
\definecolor{maccolor}{rgb}{0.3,0.3,0.8}
\lstdefinelanguage{Macaulay2}
{
basicstyle={\ttfamily},
keywordstyle={\color{maccolor!80!black}},
commentstyle={\color{gray}},
stringstyle={\color{red!40!black}},
rulecolor=\color{maccolor},
basewidth={1.2ex}, 
sensitive=false,
morecomment=[l]{--},
morecomment=[s]{-*}{*-},
morestring=[b]",
escapechar={`},
escapebegin={\rmfamily},
morekeywords={about,abs,AbstractToricVarieties,accumulate,Acknowledgement,acos,acosh,acot,addCancelTask,addDependencyTask,addEndFunction,addHook,AdditionalPaths,addStartFunction,addStartTask,Adjacent,adjoint,AdjointIdeal,AffineVariety,AfterEval,AfterNoPrint,AfterPrint,agm,AInfinity,alarm,AlgebraicSplines,Algorithm,Alignment,all,AllCodimensions,allowableThreads,ambient,analyticSpread,Analyzer,AnalyzeSheafOnP1,ancestor,ancestors,ANCHOR,and,andP,AngleBarList,ann,annihilator,antipode,any,append,applicationDirectory,applicationDirectorySuffix,apply,applyKeys,applyPairs,applyTable,applyValues,apropos,argument,Array,arXiv,Ascending,ascii,asin,asinh,ass,assert,associatedGradedRing,associatedPrimes,AssociativeAlgebras,AssociativeExpression,atan,atan2,atEndOfFile,Authors,autoload,AuxiliaryFiles,backtrace,Bag,Bareiss,baseFilename,BaseFunction,baseName,baseRing,baseRings,BaseRow,BasicList,basis,BasisElementLimit,Bayer,BeforePrint,beginDocumentation,BeginningMacaulay2,Benchmark,benchmark,Bertini,BesselJ,BesselY,betti,BettiCharacters,BettiTally,between,BGG,BIBasis,Binary,BinaryOperation,Binomial,binomial,BinomialEdgeIdeals,Binomials,BKZ,BlockMatrix,BLOCKQUOTE,BODY,Body,BoijSoederberg,BOLD,Book3264Examples,Boolean,BooleanGB,borel,Boxes,BR,break,Browse,Bruns,cache,CacheExampleOutput,CacheFunction,CacheTable,cacheValue,CallLimit,cancelTask,capture,catch,Caveat,CC,CDATA,ceiling,Center,centerString,Certification,ChainComplex,chainComplex,ChainComplexExtras,ChainComplexMap,ChainComplexOperations,ChangeMatrix,char,CharacteristicClasses,characters,charAnalyzer,check,CheckDocumentation,chi,Chordal,class,Classic,clean,clearAll,clearEcho,clearOutput,close,closeIn,closeOut,ClosestFit,CODE,code,codim,CodimensionLimit,coefficient,CoefficientRing,coefficientRing,coefficients,Cofactor,CohenEngine,CohenTopLevel,CoherentSheaf,CohomCalg,cohomology,coimage,CoincidentRootLoci,coker,cokernel,collectGarbage,columnAdd,columnate,columnMult,columnPermute,columnRankProfile,columnSwap,combine,Command,commandInterpreter,commandLine,COMMENT,commonest,commonRing,comodule,CompactMatrix,compactMatrixForm,CompiledFunction,CompiledFunctionBody,CompiledFunctionClosure,Complement,complement,complete,CompleteIntersection,CompleteIntersectionResolutions,Complexes,ComplexField,components,compose,compositions,compress,concatenate,conductor,ConductorElement,cone,Configuration,ConformalBlocks,conjugate,connectionCount,Consequences,Constant,Constants,constParser,content,continue,contract,Contributors,ConvexInterface,conwayPolynomial,ConwayPolynomials,copy,copyDirectory,copyFile,copyright,Core,CorrespondenceScrolls,cos,cosh,cot,CotangentSchubert,cotangentSheaf,coth,cover,coverMap,cpuTime,createTask,Cremona,csc,csch,current,currentColumnNumber,currentDirectory,currentFileDirectory,currentFileName,currentLayout,currentLineNumber,currentPackage,currentString,currentTime,Cyclotomic,Database,Date,DD,dd,deadParser,debug,debugError,DebuggingMode,debuggingMode,debugLevel,DecomposableSparseSystems,Decompose,decompose,deepSplice,Default,default,defaultPrecision,Degree,degree,degreeLength,DegreeLift,DegreeLimit,DegreeMap,DegreeOrder,DegreeRank,Degrees,degrees,degreesMonoid,degreesRing,delete,demark,denominator,Dense,Density,Depth,depth,Descending,Descent,Describe,describe,Description,det,determinant,DeterminantalRepresentations,DGAlgebras,diagonalMatrix,diameter,Dictionary,dictionary,dictionaryPath,diff,DiffAlg,difference,dim,directSum,disassemble,discriminant,dismiss,Dispatch,distinguished,DIV,Divide,divideByVariable,DivideConquer,DividedPowers,Divisor,DL,Dmodules,do,doc,docExample,docTemplate,document,DocumentTag,Down,drop,DT,dual,eagonNorthcott,EagonResolution,echoOff,echoOn,EdgeIdeals,edit,EigenSolver,eigenvalues,eigenvectors,eint,EisenbudHunekeVasconcelos,elapsedTime,elapsedTiming,elements,Eliminate,eliminate,Elimination,EliminationMatrices,EllipticCurves,EllipticIntegrals,else,EM,Email,End,end,endl,endPackage,Engine,engineDebugLevel,EngineRing,EngineTests,entries,EnumerationCurves,environment,Equation,EquivariantGB,erase,erf,erfc,error,errorDepth,euler,EulerConstant,eulers,even,EXAMPLE,ExampleFiles,ExampleItem,examples,ExampleSystems,Exclude,exec,exit,exp,expectedReesIdeal,expm1,exponents,export,exportFrom,exportMutable,Expression,expression,Ext,extend,ExteriorIdeals,ExteriorModules,exteriorPower,Factor,factor,false,Fano,FastMinors,FastNonminimal,FGLM,File,fileDictionaries,fileExecutable,fileExists,fileExitHooks,fileLength,fileMode,FileName,FilePosition,fileReadable,fileTime,fileWritable,fillMatrix,findFiles,findHeft,FindOne,findProgram,findSynonyms,FiniteFittingIdeals,First,first,firstkey,FirstPackage,fittingIdeal,flagLookup,FlatMonoid,flatten,flattenRing,Flexible,flip,floor,flush,fold,FollowLinks,for,forceGB,fork,FormalGroupLaws,Format,format,formation,FourierMotzkin,FourTiTwo,fpLLL,frac,fraction,FractionField,frames,FrobeniusThresholds,from,fromDividedPowers,fromDual,Function,FunctionApplication,FunctionBody,functionBody,FunctionClosure,FunctionFieldDesingularization,fusePairs,futureParser,GaloisField,Gamma,gb,GBDegrees,gbRemove,gbSnapshot,gbTrace,gcd,gcdCoefficients,gcdLLL,GCstats,genera,GeneralOrderedMonoid,GenerateAssertions,generateAssertions,generator,generators,Generic,GenericInitialIdeal,genericMatrix,genericSkewMatrix,genericSymmetricMatrix,gens,genus,get,getc,getChangeMatrix,getenv,getGlobalSymbol,getNetFile,getNonUnit,getPrimeWithRootOfUnity,getSymbol,getWWW,GF,gfanInterface,Givens,GKMVarieties,GLex,Global,global,globalAssign,globalAssignFunction,GlobalAssignHook,globalAssignment,globalAssignmentHooks,GlobalDictionary,GlobalHookStore,globalReleaseFunction,GlobalReleaseHook,Gorenstein,GradedLieAlgebras,GradedModule,gradedModule,GradedModuleMap,gradedModuleMap,gramm,GraphicalModels,GraphicalModelsMLE,Graphics,graphIdeal,graphRing,Graphs,Grassmannian,GRevLex,GroebnerBasis,groebnerBasis,GroebnerBasisOptions,GroebnerStrata,GroebnerWalk,groupID,GroupLex,GroupRevLex,GTZ,Hadamard,handleInterrupts,HardDegreeLimit,hash,HashTable,hashTable,HEAD,HEADER1,HEADER2,HEADER3,HEADER4,HEADER5,HEADER6,HeaderType,Heading,Headline,Heft,heft,Height,height,help,Hermite,hermite,Hermitian,HH,hh,HigherCIOperators,HighestWeights,Hilbert,hilbertFunction,hilbertPolynomial,hilbertSeries,HodgeIntegrals,hold,Holder,Hom,homeDirectory,HomePage,Homogeneous,Homogeneous2,homogenize,homology,homomorphism,HomotopyLieAlgebra,hooks,horizontalJoin,HorizontalSpace,HR,HREF,HTML,html,httpHeaders,Hybrid,HyperplaneArrangements,Hypertext,hypertext,HypertextContainer,HypertextParagraph,icFracP,icFractions,icMap,icPIdeal,id,Ideal,ideal,idealizer,identity,if,IgnoreExampleErrors,ii,image,imaginaryPart,IMG,ImmutableType,importFrom,in,incomparable,Increment,independentSets,indeterminate,IndeterminateNumber,Index,index,indexComponents,IndexedVariable,IndexedVariableTable,indices,inducedMap,inducesWellDefinedMap,InexactField,InexactFieldFamily,InexactNumber,InfiniteNumber,infinity,info,InfoDirSection,infoHelp,Inhomogeneous,input,Inputs,insert,installAssignmentMethod,installedPackages,installHilbertFunction,installMethod,installMinprimes,installPackage,InstallPrefix,instance,instances,IntegralClosure,integralClosure,integrate,IntermediateMarkUpType,interpreterDepth,intersect,intersectInP,Intersection,intersection,interval,InvariantRing,inverse,InverseMethod,inversePermutation,Inverses,inverseSystem,InverseSystems,Invertible,InvolutiveBases,irreducibleCharacteristicSeries,irreducibleDecomposition,isAffineRing,isANumber,isBorel,isCanceled,isCommutative,isConstant,isDirectory,isDirectSum,isEmpty,isField,isFinite,isFinitePrimeField,isFreeModule,isGlobalSymbol,isHomogeneous,isIdeal,isInfinite,isInjective,isInputFile,isIsomorphism,isLinearType,isListener,isLLL,isMember,isModule,isMonomialIdeal,isNormal,isOpen,isOutputFile,isPolynomialRing,isPrimary,isPrime,isPrimitive,isPseudoprime,isQuotientModule,isQuotientOf,isQuotientRing,isReady,isReal,isReduction,isRegularFile,isRing,isSkewCommutative,isSorted,isSquareFree,isStandardGradedPolynomialRing,isSubmodule,isSubquotient,isSubset,isSupportedInZeroLocus,isSurjective,isTable,isUnit,isWellDefined,isWeylAlgebra,ITALIC,Iterate,Jacobian,jacobian,jacobianDual,Jets,Join,join,Jupyter,K3Carpets,K3Surfaces,Keep,KeepFiles,KeepZeroes,ker,kernel,kernelLLL,kernelOfLocalization,Key,keys,Keyword,Keywords,kill,koszul,Kronecker,KustinMiller,LABEL,last,lastMatch,LATER,LatticePolytopes,Layout,lcm,leadCoefficient,leadComponent,leadMonomial,leadTerm,Left,left,length,LengthLimit,letterParser,Lex,LexIdeals,LI,Licenses,LieTypes,lift,liftable,Limit,limitFiles,limitProcesses,Linear,LinearAlgebra,LinearTruncations,lineNumber,lines,LINK,linkFile,List,list,listForm,listLocalSymbols,listSymbols,listUserSymbols,LITERAL,LLL,LLLBases,lngamma,load,loadDepth,LoadDocumentation,loadedFiles,loadedPackages,loadPackage,Local,local,localDictionaries,LocalDictionary,localize,LocalRings,locate,log,log1p,LongPolynomial,lookup,lookupCount,LowerBound,LUdecomposition,M0nbar,M2CODE,Macaulay2Doc,makeDirectory,MakeDocumentation,makeDocumentTag,MakeHTML,MakeInfo,MakeLinks,makePackageIndex,MakePDF,makeS2,Manipulator,map,MapExpression,MapleInterface,markedGB,Markov,MarkUpType,match,mathML,Matrix,matrix,MatrixExpression,Matroids,max,maxAllowableThreads,maxExponent,MaximalRank,maxPosition,MaxReductionCount,MCMApproximations,member,memoize,memoizeClear,memoizeValues,MENU,merge,mergePairs,META,method,MethodFunction,MethodFunctionBinary,MethodFunctionSingle,MethodFunctionWithOptions,methodOptions,methods,midpoint,min,minExponent,mingens,mingle,minimalBetti,MinimalGenerators,MinimalMatrix,minimalPresentation,minimalPresentationMap,minimalPresentationMapInv,MinimalPrimes,minimalPrimes,minimalReduction,Minimize,minimizeFilename,MinimumVersion,minors,minPosition,minPres,minprimes,Minus,minus,Miura,MixedMultiplicity,mkdir,mod,Module,module,ModuleDeformations,modulo,MonodromySolver,Monoid,monoid,MonoidElement,Monomial,MonomialAlgebras,monomialCurveIdeal,MonomialIdeal,monomialIdeal,MonomialIntegerPrograms,MonomialOrbits,MonomialOrder,Monomials,monomials,MonomialSize,monomialSubideal,moveFile,multidegree,multidoc,multigraded,MultigradedBettiTally,MultiGradedRationalMap,multiplicity,MultiplicitySequence,MultiplierIdeals,MultiplierIdealsDim2,MultiprojectiveVarieties,mutable,MutableHashTable,mutableIdentity,MutableList,MutableMatrix,mutableMatrix,NAGtypes,Name,nanosleep,Nauty,NautyGraphs,NCAlgebra,NCLex,needs,needsPackage,Net,net,NetFile,netList,new,newClass,newCoordinateSystem,NewFromMethod,newline,NewMethod,newNetFile,NewOfFromMethod,NewOfMethod,newPackage,newRing,nextkey,nextPrime,nil,NNParser,NoetherianOperators,NoetherNormalization,NonminimalComplexes,nonspaceAnalyzer,NoPrint,norm,normalCone,Normaliz,NormalToricVarieties,not,Nothing,notify,notImplemented,NTL,null,nullaryMethods,nullhomotopy,nullParser,nullSpace,Number,number,NumberedVerticalList,numcols,numColumns,numerator,numeric,NumericalAlgebraicGeometry,NumericalCertification,NumericalImplicitization,NumericalLinearAlgebra,NumericalSchubertCalculus,numericInterval,NumericSolutions,numgens,numRows,numrows,odd,oeis,of,ofClass,OL,OldPolyhedra,OldToricVectorBundles,on,OneExpression,OnlineLookup,OO,oo,ooo,oooo,openDatabase,openDatabaseOut,openFiles,openIn,openInOut,openListener,OpenMath,openOut,openOutAppend,operatorAttributes,Option,OptionalComponentsPresent,optionalSignParser,Options,options,OptionTable,optP,or,Order,order,OrderedMonoid,orP,OutputDictionary,Outputs,override,pack,Package,package,PackageCitations,PackageDictionary,PackageExports,PackageImports,PackageTemplate,packageTemplate,pad,pager,PairLimit,pairs,PairsRemaining,PARA,Parametrization,parent,Parenthesize,Parser,Parsing,part,Partition,partition,partitions,parts,path,pdim,peek,PencilsOfQuadrics,Permanents,permanents,permutations,pfaffians,PHCpack,PhylogeneticTrees,pi,PieriMaps,pivots,PlaneCurveSingularities,plus,poincare,poincareN,Points,polarize,poly,Polyhedra,Polymake,PolynomialRing,Posets,Position,position,positions,PositivityToricBundles,POSIX,Postfix,Power,power,powermod,PRE,Precision,precision,Prefix,prefixDirectory,prefixPath,preimage,prepend,presentation,pretty,primaryComponent,PrimaryDecomposition,primaryDecomposition,PrimaryTag,PrimitiveElement,Print,print,printerr,printingAccuracy,printingLeadLimit,printingPrecision,printingSeparator,printingTimeLimit,printingTrailLimit,printString,printWidth,processID,Product,product,ProductOrder,profile,profileSummary,Program,programPaths,ProgramRun,Proj,Projective,ProjectiveHilbertPolynomial,projectiveHilbertPolynomial,ProjectiveVariety,promote,protect,Prune,prune,PruneComplex,pruningMap,Pseudocode,pseudocode,pseudoRemainder,Pullback,PushForward,pushForward,Python,QQ,QQParser,QRDecomposition,QthPower,Quasidegrees,QuaternaryQuartics,QuillenSuslin,quit,Quotient,quotient,quotientRemainder,QuotientRing,Radical,radical,RadicalCodim1,radicalContainment,RaiseError,random,RandomCanonicalCurves,RandomComplexes,RandomCurves,RandomCurvesOverVerySmallFiniteFields,RandomGenus14Curves,RandomIdeals,randomKRationalPoint,RandomMonomialIdeals,randomMutableMatrix,RandomObjects,RandomPlaneCurves,RandomPoints,RandomSpaceCurves,Range,rank,RationalMaps,RationalPoints,RationalPoints2,ReactionNetworks,read,readDirectory,readlink,readPackage,RealField,RealFP,realPart,realpath,RealQP,RealQP1,RealRoots,RealRR,RealXD,recursionDepth,recursionLimit,Reduce,reducedRowEchelonForm,reduceHilbert,reductionNumber,ReesAlgebra,reesAlgebra,reesAlgebraIdeal,reesIdeal,References,ReflexivePolytopesDB,regex,regexQuote,registerFinalizer,regSeqInIdeal,Regularity,regularity,relations,RelativeCanonicalResolution,relativizeFilename,Reload,remainder,RemakeAllDocumentation,remove,removeDirectory,removeFile,removeLowestDimension,reorganize,replace,RerunExamples,res,reshape,ResidualIntersections,ResLengthThree,Resolution,resolution,ResolutionsOfStanleyReisnerRings,restart,Result,resultant,Resultants,return,returnCode,Reverse,reverse,RevLex,Right,right,Ring,ring,RingElement,RingFamily,ringFromFractions,RingMap,rootPath,roots,rootURI,rotate,round,rowAdd,RowExpression,rowMult,rowPermute,rowRankProfile,rowSwap,RR,RRi,rsort,run,RunDirectory,RunExamples,RunExternalM2,runHooks,runLengthEncode,runProgram,same,saturate,Saturation,scan,scanKeys,scanLines,scanPairs,scanValues,schedule,schreyerOrder,Schubert,Schubert2,SchurComplexes,SchurFunctors,SchurRings,SCRIPT,scriptCommandLine,ScriptedFunctor,SCSCP,searchPath,sec,sech,SectionRing,SeeAlso,seeParsing,SegreClasses,select,selectInSubring,selectVariables,SelfInitializingType,SemidefiniteProgramming,Seminormalization,separate,SeparateExec,separateRegexp,Sequence,sequence,Serialization,serialNumber,Set,set,setEcho,setGroupID,setIOExclusive,setIOSynchronized,setIOUnSynchronized,setRandomSeed,setup,setupEmacs,sheaf,SheafExpression,sheafExt,sheafHom,SheafOfRings,shield,ShimoyamaYokoyama,short,show,showClassStructure,showHtml,showStructure,showTex,showUserStructure,SimpleDoc,simpleDocFrob,SimplicialComplexes,SimplicialDecomposability,SimplicialPosets,SimplifyFractions,sin,singularLocus,sinh,size,size2,SizeLimit,SkewCommutative,SlackIdeals,sleep,SLnEquivariantMatrices,SLPexpressions,SMALL,smithNormalForm,solve,someTerms,Sort,sort,sortColumns,SortStrategy,source,SourceCode,SourceRing,SPACE,SpaceCurves,SPAN,span,SparseMonomialVectorExpression,SparseResultants,SparseVectorExpression,Spec,SpechtModule,SpecialFanoFourfolds,specialFiber,specialFiberIdeal,SpectralSequences,splice,splitWWW,sqrt,SRdeformations,stack,stacksProject,Standard,standardForm,standardPairs,StartWithOneMinor,stashValue,StatePolytope,StatGraphs,status,stderr,stdio,step,StopBeforeComputation,stopIfError,StopWithMinimalGenerators,Strategy,String,STRONG,StronglyStableIdeals,STYLE,Style,style,SUB,sub,SubalgebraBases,sublists,submatrix,submatrixByDegrees,Subnodes,subquotient,SubringLimit,Subscript,subscript,SUBSECTION,subsets,substitute,substring,subtable,Sugarless,Sum,sum,SumOfTwists,SumsOfSquares,SUP,super,SuperLinearAlgebra,Superscript,superscript,support,SVD,SVDComplexes,switch,SwitchingFields,sylvesterMatrix,Symbol,symbol,SymbolBody,symbolBody,SymbolicPowers,symlinkDirectory,symlinkFile,symmetricAlgebra,symmetricAlgebraIdeal,symmetricKernel,SymmetricPolynomials,symmetricPower,synonym,SYNOPSIS,syz,Syzygies,SyzygyLimit,SyzygyMatrix,SyzygyRows,syzygyScheme,TABLE,Table,table,take,Tally,tally,tan,TangentCone,tangentCone,tangentSheaf,tanh,target,Task,taskResult,TateOnProducts,TD,temporaryFileName,tensor,tensorAssociativity,TensorComplexes,terminalParser,terms,TEST,Test,testExample,testHunekeQuestion,TestIdeals,TestInput,tests,TEX,tex,TeXmacs,texMath,Text,TH,then,Thing,ThinSincereQuivers,ThreadedGB,threadVariable,Threshold,throw,Time,time,times,timing,TITLE,TO,to,TO2,toAbsolutePath,toCC,toDividedPowers,toDual,toExternalString,toField,TOH,toList,toLower,top,top,topCoefficients,Topcom,topComponents,topLevelMode,Tor,TorAlgebra,Toric,ToricInvariants,ToricTopology,ToricVectorBundles,toRR,toRRi,toSequence,toString,TotalPairs,toUpper,TR,trace,transpose,TriangularSets,Tries,Trim,trim,Triplets,Tropical,true,Truncate,truncate,truncateOutput,Truncations,try,TSpreadIdeals,TT,tutorial,Type,TypicalValue,typicalValues,UL,ultimate,unbag,uncurry,Undo,undocumented,uniform,uninstallAllPackages,uninstallPackage,Unique,unique,Units,Unmixed,unsequence,unstack,Up,UpdateOnly,UpperTriangular,URL,urlEncode,Usage,use,UseCachedExampleOutput,UseHilbertFunction,UserMode,userSymbols,UseSyzygies,utf8,utf8check,validate,value,values,Variable,VariableBaseName,Variables,Variety,variety,vars,Vasconcelos,Vector,vector,VectorExpression,VectorFields,VectorGraphics,Verbose,Verbosity,Verify,VersalDeformations,versalEmbedding,Version,version,VerticalList,VerticalSpace,viewHelp,VirtualResolutions,VirtualTally,VisibleList,Visualize,wait,WebApp,wedgeProduct,weightRange,Weights,WeylAlgebra,WeylGroups,when,whichGm,while,width,wikipedia,Wrap,wrap,WrapperType,XML,xor,youngest,zero,ZeroExpression,zeta,ZZ,ZZParser}
}
\title{\LARGE A Framework for Reducing the Complexity of Geometric Vision Problems and its Application to Two-View Triangulation with Approximation Bounds}
\author{Felix Rydell
\and
Georg Bökman
\and 
Fredrik Kahl
\and
Kathlén Kohn
}
\begin{document}
\maketitle

\begin{abstract}
In this paper, we present a new framework for reducing the computational complexity of geometric vision problems through targeted reweighting of the cost functions used to minimize reprojection errors. Triangulation - the task of estimating a 3D point from noisy 2D projections across multiple images - is a fundamental problem in multiview geometry and Structure-from-Motion (SfM) pipelines. We apply our framework to the two-view case and demonstrate that optimal triangulation, which requires solving a univariate polynomial of degree six, can be simplified through cost function reweighting reducing the polynomial degree to two. This reweighting yields a closed-form solution while preserving strong geometric accuracy. We derive optimal weighting strategies, establish theoretical bounds on the approximation error, and provide experimental results on real data demonstrating the effectiveness of the proposed approach compared to standard methods. Although this work focuses on two-view triangulation, the framework generalizes to other geometric vision problems.

\end{abstract}

\section{Introduction}
\label{sec:intro}
Multiple view geometry has long been a well-established field within computer vision, with several decades of extensive research, as noted in works such as \cite{hartley2003multiple}. The inherent complexity of various subproblems, including relative pose estimation and triangulation, has been rigorously analyzed, often quantified by the number of critical points required to achieve optimal solutions. Typically, these problems are addressed either through slow but guaranteed methods or through faster, local iterative methods that lack assurance of reaching an optimal solution. In contrast, in this paper, a fundamentally different direction is pursued. We introduce a framework that makes targeted modifications to the cost function in order to reduce the underlying complexity. When applied to the triangulation problem, we demonstrate that it can be simplified at its core, reducing the number of critical points and thereby enhancing computational efficiency. An example is given in Figure~\ref{fig:cost-function}.

Triangulation stands as a cornerstone problem with extensive applications in 3D reconstruction, robotics, and augmented reality. Formally, the task involves recovering the 3D position of a point $X$ from its observed projections
$\tilde \xx_i$ in two or more camera images, where each projection is expressed by $\pi_i(X)$. 
Ideally, if the 3D point $X$ and its corresponding 2D projections $\tilde \xx_i$ are perfectly aligned, that is, $\tilde\xx_i = \pi_i(X)$, then this reconstruction becomes a straightforward calculation. However, in real-world scenarios, various sources of error – such as inaccuracies in the camera's internal parameters, small discrepancies in relative camera positioning, or limitations in point-matching precision – lead to imperfect data. These imperfections result in skewed rays that fail to intersect precisely in 3D space. Consequently, the triangulation problem in practical settings shifts to finding the 3D point that most closely aligns with the observed 2D projections.

\begin{figure}
    \centering
    \includegraphics[width=0.9\linewidth]{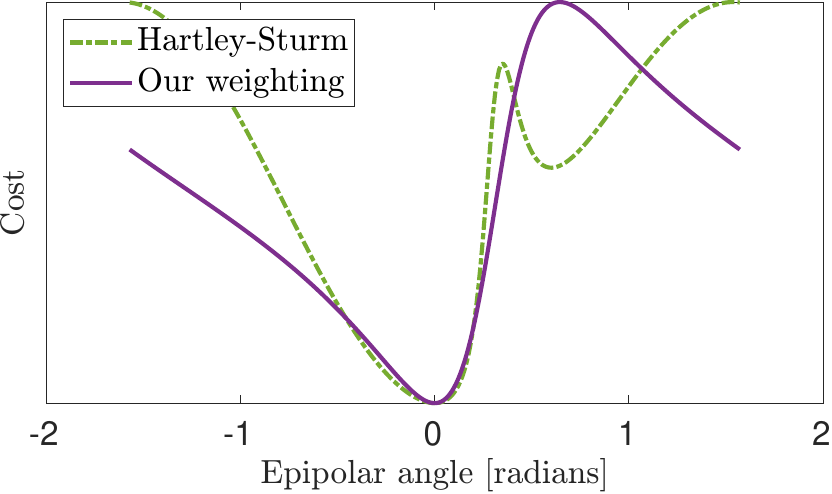}
    \caption{The optimal triangulation problem can have up to three local minima and requires finding the roots of a degree-6 univariate polynomial, following Hartley-Sturm~\cite{hartley1997triangulation}.
    We propose a weighting of the cost function that lowers the degree to 2, yielding a unique minimum.
    The plot shows a specific example where Hartley-Sturm's method has to choose between two local minima.
    Here, we plot the lowest cost value per epipolar plane, parameterised by an angle from the principal axis of one camera (see \cite[Section~4.2]{hartley1997triangulation} for details). The costs are rescaled to have the same minimum and maximum value for ease of viewing.
    }
    \label{fig:cost-function}
\end{figure}

Assuming independent Gaussian noise on the image measurements, the maximum likelihood estimate is obtained by minimizing the $L_2$-error 
between observed and ideal projections. This geometric error, addressed by Hartley and Sturm already in 1997 \cite{hartley1997triangulation}, involves computing the six critical points intrinsic to the problem \cite{galois}, meaning that any simpler (non-direct) solution inevitably involves trade-offs. Iterative approaches, such as the fast method proposed by Lindstrom \cite{lindstrom2010triangulation}, have also been suggested, although these can converge to local optima. An alternative is to change the optimization criterion, for example, using the $L_\infty$-norm \cite{kahl-hartley-pami-2008}, which enables an optimal solution through convex optimization. In contrast, we explore a novel approach in multiple view geometry by weighting the $L_2$-cost function to reduce the set of critical points, allowing direct computation of the solution in closed form.

Our framework can, in principle, be applied to any algebraic optimization problem and is thus widely applicable within multiple view geometry, e.g., to $n$-view triangulation \cite{josephson-kahl-jmiv-2010}, camera pose estimation \cite{enqvist-kahl-eccv-2008} and registration \cite{olsson-etal-icpr-2006}. Here, we showcase our strategy in detail for two-view triangulation.


In Sec.~\ref{sec:framework}, we introduce our overall approach formally.
In Sec.~\ref{sec:2viewTr}, we recap two-view triangulation and provide an equivalent reformulation via a diagonalizing change of coordinates.
In Sec.~\ref{sec:weightedOpt}, we study our proposed weighted version of two-view triangulation. We determine the best weights that reduce the number of critical points from 6 to either 4 or 2 (see Theorems \ref{thm:EDD2} and \ref{thm: unique nu}) and find theoretical bounds on the quality of our proposed approximation (see Proposition \ref{prop: bounds 2}). Experiment results and comparisons to baselines on real data are given in Sec.~\ref{sec:exp}.

\subsection{Related Work}\label{sec:related_work} 

\paragraph{Triangulation.} The most well-known approach to triangulation is to solve for the six critical points as described in \cite{hartley1997triangulation}, but the algorithm tends to be slow. Computing the midpoint between 3D rays does not work well for near parallel rays, hence it should in general be avoided. Using algebraic cost functions, such as the Direct Linear Transform, can be fast but may be inaccurate \cite{hartley2003multiple}. Iterative methods for triangulation were pioneered by Kanatani et.\ al.\ \cite{kanatani2005statistical, kanatani2008triangulation} and the method by Lindstrom \cite{lindstrom2010triangulation} leverages this approach. 
In theory, the method does not guarantee an optimal solution with respect to the cost function, whereas our approach does with respect to the weighted cost function. Other works on triangulation modify the optimize criterion, for instance, the $L_1$-cost and the $L_\infty$-cost functions are optimized in \cite{kahl-ijcv-2008,kahl-hartley-pami-2008,lee-iccv-2019}. In \cite{lee-bmvc-2019}, a variant of the midpoint method is proposed.
We experimentally compare to both Lindstrom~\cite{lindstrom2010triangulation} and Hartley-Sturm~\cite{hartley1997triangulation}, two natural baselines that solve the optimal triangulation problem — the former being a fast, iterative method, and the latter a slower, exact method.

\paragraph{The Geometric Error.} In applied algebraic geometry, fitting noisy data points to a mathematical model defined by polynomials has seen a lot of interest \cite{breiding2024metric}. The smallest distance between the data and the model is the \textit{geometric error}. The corresponding geometric error for homographies was first introduced by Sturm~\cite{sturm1997vision}. The \emph{Euclidean distance degree} is the number of smooth complex critical points for the optimization problem, given random data. It expresses the algebraic complexity of fitting data to a model; the higher the Euclidean distance degree, the more computationally expensive the optimization. For 3D reconstruction, several works have studied or computed these degrees, e.g.,  \cite{hartley1997triangulation,galois,maxim2020euclidean,rydell2023theoretical,duff2024metric}. The Euclidean distance degree is used to implement efficient solvers in homotopy continuation \cite{HomotopyContinuation} or to solve the associated polynomial systems via specialized symbolic solvers \cite{larsson2017efficient}.

\paragraph{The Sampson Error.} Sampson approximation was first proposed in~\cite{sampson1982fitting} and independently by Taubin~\cite{taubin1991estimation} to approximate the point-conic distance. Luong and Faugeras~\cite{luong1996fundamental} introduced it to approximate the reprojection error in epipolar geometry. The Sampson error has been considered in other vision settings \cite{chojnacki2000fitting,chum2005geometric,leonardos2015metric,terekhov2023tangent}. This extensive use of  Sampson approximation for geometric problems shows its versatility. Recently, the Sampson error was revisited and studied from a mathematical perspective \cite{rydell2024revisiting}.

\paragraph{Weighted Euclidean Problems.} The authors of \cite{kozhasov2023minimal} studied weighted Euclidean distance problems for rank-one approximations of tensors\new{, variations thereof, and quadric hypersurfaces}. Similarly to this article, they analyze the weights that lead to the smallest number of critical points. 


\section{Framework} \label{sec:framework}

We consider geometric vision problems involving 3D points, their image projections, and the cameras. A full-rank $3 \times 4$ matrix $C$ defines a camera that projects a 3D point $X \in \RR^3$ as
\begin{align}\begin{aligned}
    \pi:\RR^3&\dashrightarrow  \RR^2 \\
    X&\mapsto \begin{bmatrix}
       C(X;1)_1/  C(X;1)_3\\ C(X;1)_2/  C(X;1)_3
    \end{bmatrix}.
\end{aligned}
\end{align}
Here, $\dashrightarrow$ denotes a \textit{rational map}, meaning a map well-defined almost everywhere. We study both uncalibrated cameras, where $C$ is unconstrained, and calibrated cameras of the form $C = \left[ R ; t \right]$, where $R \in \mathrm{SO}(3)$ and $t \in \RR^3$.

Now, consider the residual between a projected 3D point $X$ and its measured image point $\tilde{x}$, given by $\epsilon = \tilde{x} - \pi(X)$. If the measured image points are corrupted by independent, normally distributed noise, the maximum likelihood estimate is obtained by minimizing $||\vec{\epsilon}||^2$ over the unknowns, where $\vec{\epsilon}$ is the vector of all image residuals. The unknowns, depending on the task, may consist of 3D points and/or cameras. In compact form, we seek to solve problems of the form
\begin{align}\begin{aligned} \label{eq:geometric_errX0}
  \min_{\vec{\epsilon},\vec{z}}\quad & ||\vec{\epsilon}||^2  \\
  \text{s.t. }\quad & p(\vec{\epsilon},\vec{z})=0,
\end{aligned}
\end{align}
 where $\vec{z}$ encodes the unknown parameters of interest. The constraint vector $p(\vec{\epsilon}, \vec{z}) = 0$ can be written as polynomial constraints by clearing denominators. Solving this optimization problem exactly quickly becomes intractable for large problems. One measure of complexity is the number of smooth complex critical points of the optimization given generic measured image points $\tilde{\vec{x}}$, known as the \textit{Euclidean distance degree} (ED-degree) of the problem.

\begin{exmp}[Triangulation]
Given cameras $C_i$ for $i=1,\ldots,n$ and corresponding image points $\tilde{x}_i$ in $n$ views, computing the 3D point $X$ is known as \textit{triangulation}. The ED-degree for $n=2$ is well known to be 6 (for generic cameras), with an algorithm for computing the six stationary points first presented in \cite{hartley1997triangulation}. For three-view triangulation ($n=3$), the ED-degree is 47 \cite{stewe-iccv-2005,maxim2020euclidean}.
\end{exmp}

The ED-degree of a geometric vision problem is intrinsic, meaning that reducing complexity requires altering the problem itself. We address this by reweighting the objective function and analyzing how different choices of weights affect the ED-degree. 
Concretely, we investigate 
\begin{align}\begin{aligned}\label{eq: weight opt_general} 
  \min_{\ee,\vec{z}}\quad & \sum_i \lambda_i \varepsilon_i^2  \\
  \text{s.t. }\quad & p(\ee,\vec{z})=0,
\end{aligned}
\end{align}
where $\lambda_i$ are positive weights applied to the residual terms.
The optimal choice of weights depends heavily on the constraints $p$ and can be challenging to determine.
One strategy to mitigate this is to first make the constraints simpler by applying a coordinate transformation of the form
\begin{align}\label{eq:coordinate_change}
\ee=\vec{R}\vec{\epsilon}
\end{align}
with an orthogonal matrix $\vec{R}$.
This leaves the problem \eqref{eq:geometric_errX0} unchanged, as $||\ee|| = ||\vec{\epsilon}||$, ensuring that the ED-degree remains the same. We carry out this strategy in detail for two-view triangulation.

\section{Two-View Triangulation} \label{sec:2viewTr}

A common way of expressing two-view triangulation is via the fundamental matrix $\vec{F}$ of the camera pair $C_1,C_2$. 
More precisely, given a fundamental matrix $\vec{F}$, i.e., a $3\times 3$ rank-2 matrix, it is the following squared-error minimization:
\begin{align}\begin{aligned} \label{eq:geometric_err}
\mathcal{E}_{\vec{F}}^2(\tilde\xx):=\min_{\vec{\epsilon}}\quad & \norm{\vec{\epsilon}}^2 \\
  \text{s.t. }\quad & (\tilde\xx_1+\vec{\epsilon}_1; 1)^\tp \vec{F} (\tilde\xx_2+\vec{\epsilon}_2; 1)=0. 
\end{aligned}
\end{align}

Our goal is to find an approximate solution to \eqref{eq:geometric_err} that is simpler and faster to compute via~\eqref{eq: weight opt_general} and~\eqref{eq:coordinate_change}. In this direction, we first simplify the epipolar constraint. Note that \begin{align}\label{eq: epip}
    (\xx_1;1)^\top\vec{F}(\xx_2;1)
\end{align}
equals
\begin{align} \label{eq:Qconstraint}
    (\xx;1)^\top \underbrace{\frac{1}{2}\begin{bmatrix}
        0 & F_{2\times 2} & F_h\\
        F_{2\times 2}^\top & 0 & F_v^\top \\
        F_h^\top & F_v & 2F_{3,3}
    \end{bmatrix}}_{\vec{Q}(\vec{F}):=}(\xx;1),
\end{align}
where
\begin{align}
    \vec{F}=\begin{bmatrix}
    F_{2\times 2} & F_h\\ F_v & F_{3,3}
    \end{bmatrix}.
\end{align}

\begin{lemma} For a fundamental matrix $\vec{F}$, the matrix $\vec{Q}(\vec{F})$ is rank-deficient. Moreover, if $F_{2\times 2}$ is invertible, then $\vec{Q}(\vec{F})$ has rank~4 and its kernel is
\begin{align}
    \begin{bmatrix}
         k(\vec{F})\\
         1
    \end{bmatrix}, \quad \textnormal{where} \quad 
    k(\vec{F}):= \begin{bmatrix}
          -F_{2\times 2}^{-\top}F_v^\top \\
          -F_{2\times 2}^{-1}F_h
    \end{bmatrix}.
\end{align}
\end{lemma}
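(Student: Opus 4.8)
The plan is to derive a single determinantal identity that settles rank-deficiency in full generality, and to pin down the kernel by a direct verification in the invertible case. I would first regroup the last row and column of the matrix $2\vec{Q}(\vec{F})$ (i.e.\ $\vec{Q}(\vec{F})$ with the factor $\tfrac12$ cleared) as the block matrix
\[ 2\vec{Q}(\vec{F})=\begin{bmatrix} M & c \\ c^\top & 2F_{3,3}\end{bmatrix}, \]
where $M=\begin{bmatrix}0 & F_{2\times 2}\\ F_{2\times 2}^\top & 0\end{bmatrix}$ and $c=(F_h; F_v^\top)$. When $F_{2\times 2}$ is invertible, $M$ is invertible with $M^{-1}=\begin{bmatrix}0 & F_{2\times 2}^{-\top}\\ F_{2\times 2}^{-1} & 0\end{bmatrix}$ and $\det M=(\det F_{2\times 2})^2$, so the Schur complement gives $\det(2\vec{Q})=\det(M)\,\bigl(2F_{3,3}-c^\top M^{-1}c\bigr)$. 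A one-line computation gives $c^\top M^{-1}c=2F_vF_{2\times2}^{-1}F_h$, and feeding in the Schur-complement formula $\det\vec{F}=\det(F_{2\times2})\bigl(F_{3,3}-F_vF_{2\times2}^{-1}F_h\bigr)$ collapses everything to $\det\bigl(2\vec{Q}(\vec{F})\bigr)=2\det(F_{2\times2})\det(\vec{F})$.

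This identity is derived only on the locus $\det F_{2\times2}\neq0$, but both sides are polynomials in the entries of $\vec{F}$, and that locus is Zariski-dense in the space of $3\times3$ matrices; hence the identity holds for every $\vec{F}$. Since a fundamental matrix satisfies $\det\vec{F}=0$, we get $\det\vec{Q}(\vec{F})=0$, i.e.\ $\vec{Q}(\vec{F})$ is rank-deficient, with no invertibility assumption needed.

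For the ``moreover'' statement I would assume $F_{2\times2}$ invertible. The identity already gives $\operatorname{rank}\vec{Q}\le4$, while the top-left $4\times4$ submatrix is exactly $M$ with $\det M=(\det F_{2\times2})^2\neq0$, so $\operatorname{rank}\vec{Q}\ge4$; therefore $\operatorname{rank}\vec{Q}=4$ and $\ker\vec{Q}$ is one-dimensional. It then remains only to exhibit one nonzero null vector. Multiplying $2\vec{Q}$ against $(k(\vec{F});1)$ block-wise, the top two blocks are $F_{2\times2}(-F_{2\times2}^{-1}F_h)+F_h=0$ and $F_{2\times2}^\top(-F_{2\times2}^{-\top}F_v^\top)+F_v^\top=0$ by the definition of $k(\vec{F})$, while the last entry equals $2\bigl(F_{3,3}-F_vF_{2\times2}^{-1}F_h\bigr)$, which vanishes exactly because $\det\vec{F}=0$. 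As the kernel is one-dimensional, $(k(\vec{F});1)$ spans it.

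The only real obstacle is the first claim at the stated level of generality, because both the kernel formula and the Schur complement presuppose $F_{2\times2}$ invertible. I resolve this by upgrading the determinant computation to a polynomial identity and invoking density. A more geometric alternative worth noting is that $k(\vec{F})$ is precisely the stack of the affine left and right epipoles of $\vec{F}$, so a null vector of $\vec{Q}$ can be built directly from the left and right kernels of $\vec{F}$ whenever neither epipole is at infinity --- the remaining degenerate configurations then being covered once more by continuity.
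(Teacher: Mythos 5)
Your proof is correct and takes essentially the same route as the paper: both arguments rest on the determinant identity $\det\vec{Q}(\vec{F})=\det(F_{2\times 2})\det(\vec{F})/16$ (your scaled form $\det(2\vec{Q})=2\det(F_{2\times 2})\det(\vec{F})$ is equivalent, since $\det(2\vec{Q})=32\det\vec{Q}$) together with invertibility of the top-left $4\times 4$ block to force rank exactly~$4$. Yours is in fact more complete than the paper's: the paper asserts the determinant formula without derivation and never explicitly checks the kernel vector, whereas you derive the identity via Schur complement and extend it to all $\vec{F}$ by Zariski density, and you verify $(k(\vec{F});1)$ block-by-block, using that $F_{3,3}-F_vF_{2\times 2}^{-1}F_h=0$ when $\det\vec{F}=0$ and $F_{2\times 2}$ is invertible.
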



\begin{proof} The determinant of $\vec{Q}(\vec{F})$ is $\det(F_{2\times 2})\det(\vec{F})/16$. Therefore, it is always rank-deficient. Moreover, since $F_{2\times 2}$ is invertible, the top left $4\times 4$ matrix of $\vec{Q}(\vec{F})$ has rank $4$, implying that $\vec{Q}(\vec{F})$ has rank $4$. 
\end{proof}

Now we can rewrite \eqref{eq:Qconstraint} further.
Denote by $\vec{P}(\vec{F})$ the  upper left $4\times 4$ matrix of $\vec{Q}(\vec{F})$. By construction,
\begin{align}\label{eq:translationByKernel}
\begin{aligned}
    (\xx;1)^\top \vec{Q}(\vec{F})(\xx;1) &=\\
    (\xx-k(\vec{F}))^\top &\vec{P}(\vec{F})(\xx-k(\vec{F})).
\end{aligned}
\end{align}
Next, we express this constraint in terms of the eigenvalues of $\vec{P}(\vec{F})$. 
Since that matrix is symmetric, its eigenvalues are real. 
In fact, they are the signed singular values of $F_{2 \times 2}$.
Hence, the eigenvalues of $\vec{P}(\vec{F})$ are $a_1,-a_1,a_2,-a_2$ for some $a_1,a_2 \geq 0$.
Up to translation and orthogonal action, we now see that \eqref{eq:translationByKernel} is  equal to 
\begin{align}
    \sum_{i}q_i\yy_i^2, \quad \text{ where } \vec{q}=(a_1,-a_1,a_2,-a_2).
\end{align}
Here, the new variables $\yy$ are obtained from $\xx$ via $\yy= \vec{R}(\vec{F})^{\top}(\xx-k(\vec{F}))$ for some orthogonal $4 \times 4$ matrix $\vec{R}(\vec{F})$.
Our updated optimization problem is then 
\begin{align}\begin{aligned} \label{eq: err simplified}
\mathcal{E}_{\vec{q}}^2(\tilde\yy):=\min_{\vec{\varepsilon}}\quad & \norm{\vec{\varepsilon}}^2 \\
  \text{s.t. }\quad & \sum_i q_i(\tilde y_i+\varepsilon_i)^2=0. 
\end{aligned}
\end{align}

\begin{proposition} \label{prop:critPtsCorrespondence}Let $\vec{F}$ be a fundamental matrix such that $F_{2\times 2}$ is invertible, and let $ \mathrm{diag}(\vec{q}) = \vec{R}^\top \vec{P}(\vec{F}) \vec{R} $ be a diagonalization of $\vec{P}(\vec{F})$. Then the critical points of \eqref{eq:geometric_err} with data $\tilde \xx$ are in bijection with the critical points of \eqref{eq: err simplified} with data $\tilde \yy= \vec{R}^{\top}(\tilde \xx-k(\vec{F})) $ via
\begin{align}\label{eq: bij map}
    \vec{\epsilon} \mapsto \ee=\vec{R}^{\top}\vec{\epsilon}.
\end{align}
In particular, $\mathcal{E}_{\vec{F}}(\tilde \xx)= \mathcal{E}_{\vec{q}}(\tilde \yy)$.
\end{proposition}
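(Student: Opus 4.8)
The plan is to exhibit the map \eqref{eq: bij map} as a single linear isometry that simultaneously transports both the feasible set and the objective of \eqref{eq:geometric_err} onto those of \eqref{eq: err simplified}, so that the two constrained problems are literally the same problem written in different coordinates. Since $\vec{R}$ is orthogonal, the objective is preserved for free: $\norm{\ee} = \norm{\vec{R}^\top \vec{\epsilon}} = \norm{\vec{\epsilon}}$, and over $\mathbb{C}$ the quadratic form $\ee^\top \ee$ is likewise preserved because $\vec{R}^\top \vec{R} = I$. The real content is checking that the same substitution also matches the constraints.

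First I would chain together the identities already established. Writing $\xx = \tilde\xx + \vec{\epsilon}$, the epipolar constraint appearing in \eqref{eq:geometric_err} equals $(\xx;1)^\top \vec{Q}(\vec{F})(\xx;1)$ by \eqref{eq:Qconstraint}, which by \eqref{eq:translationByKernel} equals $(\xx - k(\vec{F}))^\top \vec{P}(\vec{F})(\xx - k(\vec{F}))$. Then I substitute $\tilde\xx - k(\vec{F}) = \vec{R}\tilde\yy$ and $\vec{\epsilon} = \vec{R}\ee$, so that $\xx - k(\vec{F}) = \vec{R}(\tilde\yy + \ee)$. Using $\mathrm{diag}(\vec{q}) = \vec{R}^\top \vec{P}(\vec{F})\vec{R}$, the quadratic form collapses to $(\tilde\yy + \ee)^\top \mathrm{diag}(\vec{q})(\tilde\yy + \ee) = \sum_i q_i(\tilde y_i + \varepsilon_i)^2$, which is exactly the constraint of \eqref{eq: err simplified}. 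Hence $\vec{\epsilon}$ is feasible for \eqref{eq:geometric_err} if and only if $\ee = \vec{R}^\top \vec{\epsilon}$ is feasible for \eqref{eq: err simplified}, and the map restricts to a bijection between the two feasible varieties.

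With objective and constraint both intertwined, I would conclude the critical-point correspondence. Because $\vec{\epsilon} \mapsto \vec{R}^\top \vec{\epsilon}$ is an invertible linear map (a diffeomorphism over $\RR$, and an isomorphism of the complex constraint varieties over $\mathbb{C}$) carrying one feasible set onto the other while preserving the objective value pointwise, it sends critical points to critical points in both directions. The cleanest way to make this airtight is to write the Lagrangian of each problem and verify that the stationarity equations transform into one another under the substitution, with the Lagrange multiplier carried across unchanged. I would also note that the singular locus of each quadric, where its gradient vanishes (namely $\tilde\yy + \ee = \vec{0}$, respectively $\xx = k(\vec{F})$), corresponds under the same map, so that the smooth critical points counted by the ED-degree are matched with smooth critical points. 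The value equality $\mathcal{E}_{\vec{F}}(\tilde\xx) = \mathcal{E}_{\vec{q}}(\tilde\yy)$ is then immediate, since the two problems minimize the same objective over bijective feasible sets.

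The main obstacle is this last step: upgrading ``isometry of feasible sets preserving the objective'' to ``bijection of critical points.'' The Lagrange-multiplier bookkeeping is routine, but I would want to be careful that the constraint varieties are smooth at the relevant points, so that first-order stationarity genuinely characterizes the critical points, and that no critical point is lost at the singular apex of the quadric. Here the hypothesis that $F_{2\times 2}$ is invertible does the work: it forces the singular values $a_1, a_2$ to be strictly positive, hence every $q_i = \pm a_j$ is nonzero and the quadric is nondegenerate, which is precisely what keeps the singular locus to the single isolated point handled above.
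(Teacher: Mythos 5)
Your proof is correct and takes essentially the same approach as the paper: the paper's own proof is a two-sentence appeal to the well-known fact that translations and orthogonal transformations preserve ED-critical points (``one can directly study the critical equations''), with the details deferred to a reference. You have simply written those details out --- the chain of constraint identities through $\vec{Q}(\vec{F})$ and $\vec{P}(\vec{F})$, the Lagrange-multiplier transformation under $\vec{\epsilon}=\vec{R}\ee$, and the check that the singular locus (the cone apex) is matched --- so your version is self-contained and, if anything, more complete than what the paper prints.
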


\begin{proof} It is a well-known fact in metric algebraic geometry that translation and orthogonal transformation preserve the ED-degree and that the critical points are in bijection via \eqref{eq: bij map}. To see this, one can directly study the critical equations. The details are worked out in \cite{rydell2023theoretical}. 
\end{proof}

\begin{remark}
    All parameters $a_1,a_2>0$ are possible, also when we restrict ourselves to calibrated cameras. This is because all non-zero matrices $F_{2\times 2}$ can be obtained  from $C_1 = [I \; 0]$ and $C_2 = [R \; t]$ with $R \in \mathrm{SO}(3)$. Indeed, one can choose $t_1 = t_2 = 0$ and add a column to $\left[ \begin{smallmatrix}
        0 & -1 \\ 1 & 0 
    \end{smallmatrix}\right] F_{2 \times 2}^\top$ such that the resulting $2 \times 3$ matrix $S$ has orthogonal rows of the same norm. Then choose $t_3$ such that $S/t_3$ has rows of norm $1$ and extend that matrix to a $3 \times 3$ rotation matrix $R$. That way, the top left block of $R^\top[t]_\times$ is $F_{2 \times 2}.$
    
\end{remark}


\section{The weighted optimization problem}
\label{sec:weightedOpt}
We will now show that one can change the standard squared-error minimization to a \emph{weighted} squared-error loss such that the number of critical points drops and the optimization problem becomes simpler. More concretely, we replace \eqref{eq: err simplified} by
\begin{align}\begin{aligned}\label{eq: weight opt} 
  \mathcal{E}_{\vec{q},\vec{\lambda}}^2(\tilde \yy):=\min_{\ee}\quad & \sum_i \lambda_i \varepsilon_i^2  \\
  \text{s.t. }\quad & \sum_i q_i (\tilde y_i+\varepsilon_i)^2=0,
\end{aligned}
\end{align}
where  $\vec{q}=(a_1,-a_1,a_2,-a_2)$ and $\vec{\lambda}=(\lambda_1,\lambda_2,\lambda_3,\lambda_3)\in \RR_{>0}^4$. The restriction that the $\lambda_i$ are positive ensures that the optimization problem corresponds to minimizing a distance \new{(that may differ from the standard Euclidean distance).} The number of  complex critical points of \eqref{eq: weight opt} for general $a_1,a_2$ and $\tilde \yy$ is called the \textit{$\vec{\lambda}$-Weighted ED-degree} ($\vec{\lambda}$-degree). 

\begin{theorem}
    \label{thm:EDD2}
    Let $a_i,\lambda_i>0$. The $\vec{\lambda}$-degree of \eqref{eq: weight opt} is 
    \begin{itemize}[leftmargin=0.75cm]
    \item[I.] 2 if $\lambda = (\mu a_1, \nu a_1, \mu a_2, \nu a_2)$ for some $\mu,\nu \in \mathbb{R}_{>0}$,
        \item[II.] 4 \new{otherwise} if $(\lambda_1,\lambda_3) = \mu (a_1,a_2)$ for some $\mu \in \mathbb{R}_{>0}$
    or $(\lambda_2,\lambda_4) = \nu (a_1,a_2)$ for some $\nu \in \mathbb{R}_{>0}$,
        \item[III.] 6 otherwise.
    \end{itemize}
\end{theorem}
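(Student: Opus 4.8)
The plan is to reduce the constrained minimization to a single univariate equation in the Lagrange multiplier and then count its roots. Introducing a multiplier $\gamma$ for the quadratic constraint, the stationarity conditions for $\sum_i \lambda_i \varepsilon_i^2$ read $\lambda_i \varepsilon_i = \gamma q_i(\tilde y_i + \varepsilon_i)$ for each $i$. Solving each of these scalar equations separately gives
\[
\varepsilon_i = \frac{\gamma q_i \tilde y_i}{\lambda_i - \gamma q_i}, \qquad \tilde y_i + \varepsilon_i = \frac{\lambda_i \tilde y_i}{\lambda_i - \gamma q_i}.
\]
Substituting the second expression into the constraint $\sum_i q_i(\tilde y_i + \varepsilon_i)^2 = 0$ yields the single rational equation
\[
g(\gamma) := \sum_i \frac{q_i \lambda_i^2 \tilde y_i^2}{(\lambda_i - \gamma q_i)^2} = 0 .
\]
For generic data $\tilde\yy$ every smooth critical point determines a unique multiplier $\gamma$ and, conversely, each admissible $\gamma$ reconstructs a unique $\ee$ through the formula above; hence the $\vec{\lambda}$-degree equals the number of roots of the numerator $N(\gamma)$ obtained by clearing the denominators of $g$.

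The core of the argument is then an analysis of the pole structure of $g$. Each of the four terms contributes a double pole at $\gamma = \lambda_i/q_i$, i.e.\ at the positive values $\lambda_1/a_1,\ \lambda_3/a_2$ and the negative values $-\lambda_2/a_1,\ -\lambda_4/a_2$. Since all $\lambda_i,a_i$ are positive, a positive pole can never coincide with a negative one, so the only possible coincidences are $\lambda_1/a_1 = \lambda_3/a_2$, equivalently $(\lambda_1,\lambda_3) = \mu(a_1,a_2)$, and $\lambda_2/a_1 = \lambda_4/a_2$, equivalently $(\lambda_2,\lambda_4) = \nu(a_1,a_2)$. When all four poles are distinct the least common denominator of $g$ has degree $8$, so $N$ has degree $6$; when exactly one pair of poles merges the least common denominator drops to degree $6$ and $N$ to degree $4$; when both pairs merge it drops to degree $4$ and $N$ to degree $2$. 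Matching these three configurations to the hypotheses of Cases III, II, and I respectively produces the claimed degrees $6$, $4$, and $2$.

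To rule out any further, accidental drop in degree I would check that the leading coefficient of $N$ is not identically zero in $\tilde\yy$ in each case. In the generic case it is proportional to $\sum_i \lambda_i^2 \tilde y_i^2 / q_i$, and in the two merged cases to the analogous linear combinations of the $\lambda_i^2 \tilde y_i^2$; none of these vanish identically, so the degree of $N$ is exactly $6$, $4$, or $2$ for generic data, and generically its roots are simple. This yields the stated root counts.

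The main obstacle is making the bijection of the first paragraph fully rigorous rather than the degree bookkeeping, which is routine. One must verify that generic roots of $N$ avoid the poles $\gamma = \lambda_i/q_i$, at which $\ee$ escapes to infinity and the corresponding critical point is lost "at infinity"; that these roots are simple, so that the number of critical points equals $\deg N$; and that the constraint-singular point $\ee = -\tilde\yy$, where the gradient of the constraint vanishes and Lagrange multipliers do not apply, is correctly excluded from the smooth critical locus. These are exactly the genericity statements underpinning the (weighted) ED-degree, and establishing them carefully is where the real work lies; the pole-merging count then delivers the three cases immediately.
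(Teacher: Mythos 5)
Your reduction is the same as the paper's: the stationarity conditions, the formula $\varepsilon_i = \gamma q_i\tilde y_i/(\lambda_i-\gamma q_i)$, and your $g(\gamma)$ are exactly the paper's rank-one Jacobian condition, its parametrization by the scalar $s$, and its rational function $\mathcal{R}$; both proofs then count roots of a univariate numerator. Your pole-coincidence bookkeeping and your leading-coefficient formula are correct as far as they go (your $\sum_i \lambda_i^2\tilde y_i^2/q_i$ agrees with the paper's leading coefficient up to the factor $a_1^4a_2^4$).

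The gap is that you never establish that the numerator is coprime to the denominator, and you explicitly set this aside (``one must verify that generic roots of $N$ avoid the poles \ldots\ where the real work lies''). This is not a routine genericity footnote; it is the substance of the theorem. Pole merging only shows that the common denominator, and hence the numerator degree, drops to at most $6$, $4$, $2$ in the three cases; it does not show that the number of critical points \emph{equals} these degrees, because any root of $N$ located at a pole is lost (there $\ee$ is undefined). In particular, without this step you cannot conclude that case III yields exactly $6$ critical points rather than fewer, nor case II exactly $4$, nor case I exactly $2$. Supplying this verification is precisely what the paper's proof consists of: it evaluates the degree-$6$ numerator at $s=\lambda_1/q_1$, obtains $a_1^{-3}\lambda_1^2(\lambda_1+\lambda_2)^2\tilde y_1^2(a_1\lambda_4+a_2\lambda_1)^2(a_1\lambda_3-a_2\lambda_1)^2$, and uses $a_i,\lambda_j>0$ to see that only the factor $(a_1\lambda_3-a_2\lambda_1)^2$ can vanish, i.e., a pole is a root of the numerator exactly when it merges with another pole; it then repeats this for the reduced numerator $r_4$ at $s=\mu$ in case II. The fix is short inside your own framework: evaluating your $N$ at an unmerged pole $\lambda_i/q_i$ annihilates every term except the $i$-th, which equals $q_i\lambda_i^2\tilde y_i^2\prod_{j\neq i}\bigl(\lambda_j-(\lambda_i/q_i)q_j\bigr)^2$ and is generically nonzero precisely because no other pole coincides with it; at a merged pole the two surviving contributions are positive multiples of $\tilde y_1^2$ and $\tilde y_3^2$ of the same sign, so they cannot cancel. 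Until that evaluation is carried out, the claimed equalities in all three cases remain unproven.
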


As a consequence, if $a_1=a_2$, then the $\vec{\lambda}$-degree is $2$ for $\vec{\lambda}=(1,1,1,1)$, meaning that the ED-degree is also 2.
This is in particular the case for calibrated cameras $C_1=\begin{bmatrix}
    I & 0
\end{bmatrix}$ and $C_2=\begin{bmatrix}
    R & t
\end{bmatrix}$ where the last row of $R$ is $(0,0,1)$, as we will see in Prop.~\ref{prop:F22orthogonal}. Note that the last row of $R$ encodes the optical axis of the camera, and hence for a stereo rig with parallel image planes, we obtain the optimal (unweighted) $L_2$-solution without having to solve a degree-6 polynomial!
\begin{proof}
    Given noisy measurements $\tilde \yy$, the critical points of the optimization problem \eqref{eq: weight opt} are those $\ee \neq - \tilde \yy$ that satisfy the problem's constraint and such that
    the Jacobian  matrix 
    \begin{align*} \small
        \begin{bmatrix}
            \lambda_1 \varepsilon_1 & \lambda_2 \varepsilon_2& \lambda_3\varepsilon_3 & \lambda_4\varepsilon_4 \\
            a_1(\tilde y_1 + \varepsilon_1) & 
            -a_1(\tilde y_2 + \varepsilon_2)
            & a_2(\tilde y_3 + \varepsilon_3)
            & -a_2(\tilde y_4 + \varepsilon_4)
        \end{bmatrix}
    \end{align*}
    has rank one.
    Writing $q = (a_1,-a_1,a_2,-a_2)$, the rank constraint means that
    $\lambda_i \varepsilon_i = s q_i (\tilde y_i + \varepsilon_i) $ for some scalar $s$ and all $i = 1,2,3,4$. \new{This allows us to express 
    \begin{align}
      \varepsilon_i = s q_i \tilde y_i / (\lambda_i  - s q_i).  
    \end{align}
    Plugging the latter into the constraint of \eqref{eq: weight opt} yields a rational function 
    \begin{align}
        \mathcal{R} = \frac{r_6(s)}{\prod_i (\lambda_i - sq_i)^2},
    \end{align}
    whose numerator $r_6(s)$ depends on $a_i,\tilde y_i$ and is of degree 6 in $s$. It has too many terms to be displayed here, but the \texttt{Macaulay2} \cite{Macaulay2} code in the SM computes it explicitly.} The roots of the numerator correspond to the critical points of the optimization  \eqref{eq: weight opt}, showing that the $\vec{\lambda}$-degree is at most 6 for generic weights $\vec{\lambda}$. This is in fact an equality since 
     the standard Euclidean distance problem \eqref{eq: epip} has ED-degree $6$.

    A priori, there are two ways how the $\vec{\lambda}$-degree (i.e., the degree of the numerator above) can drop. For special choices of weights $\vec{\lambda}$, either the leading coefficient of \new{$r_6(s)$} can vanish, or \new{$r_6(s)$} can share a common factor with the denominator of $\mathcal{R}$.
    The first case cannot happen, as \new{the leading coefficient is} 
    \begin{align}
        a_1^3 a_2^3 (a_2 \lambda_1^2 \tilde y_1^2 - a_2 \lambda_2^3 \tilde y_2^2 + a_1 \lambda_3^2 \tilde y_3^2 - a_1 \lambda_4^2 \tilde y_4^2),
    \end{align}
    which is non-zero for generic $\tilde y$. 

    Next we analyze under which conditions one of the factors $(\lambda_i - s q_i)$ of the denominator divides the numerator. This is equivalent to that $s = \lambda_i/q_i$ is a root of the numerator.
    Plugging $s = \lambda_1/q_1$ into the numerator yields
    \begin{align}
        a_1^{-3} \, \lambda_1^2 \,(\lambda_1+\lambda_2)^2
        \, \tilde y_1^2 
        \,(a_1 \lambda_4 + a_2 \lambda_1)^2 
        \, (a_1 \lambda_3 - a_2 \lambda_1)^2.
    \end{align}
    Due to $a_i,\lambda_j > 0$, only the last factor in this expression can be zero (for generic $\tilde y$).
    That term being zero means that $(\lambda_1,\lambda_3) = \mu (a_1,a_2)$ for some $\mu \in \mathbb{R}_{>0}$.
    Hence, we have shown that the latter condition is equivalent to $(\lambda_1 - s q_1)$ dividing the numerator.

    Analogously, we obtain that $(\lambda_3 - s q_3)$ divides the numerator if and only if $(\lambda_1,\lambda_3) = \mu (a_1,a_2)$ for some $\mu \in \mathbb{R}_{>0}$;
    and that $(\lambda_2 - s q_2)$ or $(\lambda_4 - s q_4)$ divide the numerator if and only if $(\lambda_2,\lambda_4) = \nu (a_1,a_2)$ for some $\nu \in \mathbb{R}_{>0}$.

    Without loss of generality, we now assume that $(\lambda_1,\lambda_3) = \mu (a_1,a_2)$ for some $\mu \in \mathbb{R}_{>0}$.
    \new{Then 
    \begin{align}
        \mathcal R = \frac{r_4(s)}{(\lambda_2-sq_2)^2(\lambda_4-sq_4)^2(\mu-s)^2},
    \end{align}
    whose numerator $r_4(s)$ depends on $a_i,\tilde y_i$ and is of degree 4 in $s$. The code in the SM produces also this numerator.} Therefore, for generic $\lambda_2$ and $\lambda_4$, the $\vec{\lambda}$-degree is now 4.
    As before, the leading coefficient of the numerator of $\mathcal{R}$ does not vanish for generic $\tilde y$.
    Thus, the degree can only drop further if one of the factors in the denominator of $\mathcal{R}$ divides the numerator. 
    The factor $(\mu-s)$ cannot divide the numerator for generic $\tilde y$, since plugging $s=\mu$ into the numerator yields
    \begin{align}
        \mu^2 \, (a_1 \mu + \lambda_2)^2 \, (a_2 \mu+\lambda_4)^2 \, (a_1 \tilde y_1^2 + a_2 \tilde y_3^2),
    \end{align}
    which is positive for generic $\tilde y$.
    Hence, the degree can only drop further if $(\lambda_2 - s q_2)$ or $(\lambda_4 - s q_4)$ divide the numerator.
    We have already shown above that this is equivalent to $(\lambda_2,\lambda_4) = \nu (a_1,a_2)$ for some $\nu \in \mathbb{R}_{>0}$.
    In this case,  \new{
    \begin{align}
       \mathcal{R} = \frac{r_2(s)}{(\mu-s)^2(\nu+s)^2}, 
    \end{align}
    whose numerator $r_2(s)$ depends on $a_i,\tilde y_i$ and is of degree $2$ in $s$. (Its coefficients  are explicit stated in \eqref{eq:coefficientsQuadricR}; see  proof of Lemma \ref{le: critical points}).} This shows that the $\vec{\lambda}$-degree is now 2. 
\end{proof}

The weighted squared-error minimizations of $\vec{\lambda}$-degree~2 and~4 from Theorem~\ref{thm:EDD2} can be explicitly solved. Therefore,  they are significantly faster to solve than the original problem of ED-degree 6. This raises two natural questions:
\begin{itemize}[leftmargin=0.75cm]
    \item Which choice of scalars $\mu,\nu$ is `best' in the sense that the solution to \eqref{eq: weight opt} best approximates the original problem?
    \item How good is this `best' approximation?
\end{itemize}
More concretely, we say that the best $\mu,\nu$ are those such that the minimizer $\ee(\mu,\nu)$ of \eqref{eq: weight opt}  minimizes the standard squared error $\sum_i \varepsilon_i^2$. Since the minimizer $\varepsilon(\mu,\nu)$ is not affected by multipying $\lambda$ with a global scalar, we assume from now on without loss of generality that $\mu=1$. (We could of course also assume that $a_1=1$, but choose not do so.)

We  solve the two questions above for weights of the form $\lambda = (a_1, \nu a_1,  a_2, \nu a_2)$. We find the optimal $\nu$ by theoretical means in Theorem \ref{thm: unique nu} and use this result to  bound 
 the error $\mathcal E_{\vec{F}}$ in Proposition \ref{prop: bounds 2}. 
We then perform experiments on the quality of our proposed approximation in Sec. \ref{sec:exp}.

We begin by describing the critical points of \eqref{eq: weight opt} in terms of 
\begin{align}\begin{aligned}
        A &:=  a_1 \tilde y_1^2 -\nu^2 a_1 \tilde y_2^2 + a_2 \tilde y_3^2 -\nu^2 a_2 \tilde y_4^2,\\
        B &:= 2\nu(a_1 \tilde y_1^2 + \nu a_1 \tilde y_2^2 + a_2 \tilde y_3^2 + \nu a_2 \tilde y_4^2), \\
        C &:= \nu^2 (a_1 \tilde y_1^2 - a_1 \tilde y_2^2 + a_2 \tilde y_3^2 - a_2 \tilde y_4^2),
        \label{eq:coefficientsQuadricR}\end{aligned}
    \end{align}
    and
    \begin{align}\begin{aligned}
        \Delta:=& B^2-4AC\\ = &4 \nu^2 (\nu+1)^2 (a_1 \tilde y_1^2+ a_2 \tilde y_3^2) (a_1 \tilde y_2^2+ a_2 \tilde y_4^2).
    \end{aligned}
    \end{align}

\begin{lemma}\label{le: critical points} For $\lambda=(a_1,\nu a_1,a_2,\nu a_2)$, the two critical points of \eqref{eq: weight opt} are
\begin{align}
\label{eq:criticalPoints}
\new{\varepsilon^{\pm}(\nu) = (s^{\pm} q_i \tilde y_i / (\lambda_i  - s^{\pm} q_i))_{i=1}^4},
\end{align}
where $s^{\pm} = (-B \pm \sqrt{\Delta} )/2A$.
\end{lemma}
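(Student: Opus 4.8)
The plan is to specialize the Lagrange (rank-one) analysis already carried out in the proof of Theorem~\ref{thm:EDD2} to the specific weight vector $\vec{\lambda}=(a_1,\nu a_1,a_2,\nu a_2)$ and then to solve the resulting quadratic explicitly. First I would recall from that proof that every critical point of \eqref{eq: weight opt} satisfies $\lambda_i\varepsilon_i=sq_i(\tilde y_i+\varepsilon_i)$ for a common scalar $s$, equivalently $\varepsilon_i=sq_i\tilde y_i/(\lambda_i-sq_i)$. Since Theorem~\ref{thm:EDD2}, case~I, already establishes that this choice of $\vec{\lambda}$ (with $\mu=1$) has $\vec{\lambda}$-degree $2$, it suffices to exhibit the two admissible values of $s$ and push them back through the substitution to obtain $\varepsilon^\pm(\nu)$ as in \eqref{eq:criticalPoints}.

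Next, substituting $\vec{q}=(a_1,-a_1,a_2,-a_2)$ and $\vec{\lambda}=(a_1,\nu a_1,a_2,\nu a_2)$, the factors $\lambda_i-sq_i$ simplify to $a_1(1-s),\,a_1(\nu+s),\,a_2(1-s),\,a_2(\nu+s)$, so that the residual-shifted coordinates $\tilde y_i+\varepsilon_i$ become $\tilde y_1/(1-s),\,\nu\tilde y_2/(\nu+s),\,\tilde y_3/(1-s),\,\nu\tilde y_4/(\nu+s)$, respectively. Plugging these into the constraint $\sum_i q_i(\tilde y_i+\varepsilon_i)^2=0$ and clearing the denominator $(1-s)^2(\nu+s)^2$ collapses the degree-$6$ numerator from the proof of Theorem~\ref{thm:EDD2} to the quadratic
\[
r_2(s)=(a_1\tilde y_1^2+a_2\tilde y_3^2)(\nu+s)^2-\nu^2(a_1\tilde y_2^2+a_2\tilde y_4^2)(1-s)^2.
\]
Expanding and grouping by powers of $s$, I would match the coefficients with $A,B,C$ in \eqref{eq:coefficientsQuadricR}, i.e. $A=(a_1\tilde y_1^2+a_2\tilde y_3^2)-\nu^2(a_1\tilde y_2^2+a_2\tilde y_4^2)$ is the coefficient of $s^2$, $B$ that of $s$, and $C$ the constant term. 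A short computation then confirms $B^2-4AC=4\nu^2(\nu+1)^2(a_1\tilde y_1^2+a_2\tilde y_3^2)(a_1\tilde y_2^2+a_2\tilde y_4^2)=\Delta$, after which the quadratic formula yields $s^\pm=(-B\pm\sqrt{\Delta})/(2A)$, and feeding these back into $\varepsilon_i=sq_i\tilde y_i/(\lambda_i-sq_i)$ produces the claimed $\varepsilon^\pm(\nu)$.

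The only genuine subtlety is bookkeeping rather than conceptual. One must verify that $A$, the leading coefficient, is nonzero for generic $\tilde{\yy}$, so that $r_2$ is genuinely quadratic and its two roots match the $\vec{\lambda}$-degree~$2$ from Theorem~\ref{thm:EDD2}; this is immediate from the explicit expression for $A$. One must also check that neither root makes a denominator $\lambda_i-sq_i$ vanish, i.e. that $s^\pm\notin\{1,-\nu\}$. This follows from evaluating $r_2$ at the poles: $r_2(1)=(\nu+1)^2(a_1\tilde y_1^2+a_2\tilde y_3^2)>0$ and $r_2(-\nu)=-\nu^2(\nu+1)^2(a_1\tilde y_2^2+a_2\tilde y_4^2)<0$, both nonzero for generic data, so the poles are never roots. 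Hence the two critical points are exactly those displayed in \eqref{eq:criticalPoints}.
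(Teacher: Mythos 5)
Your proposal is correct and follows essentially the same route as the paper: both specialize the rank-one parametrization $\varepsilon_i = s q_i \tilde y_i/(\lambda_i - s q_i)$ from the proof of Theorem~\ref{thm:EDD2}, identify the numerator of the resulting rational function as the quadratic $As^2+Bs+C$ with the coefficients of \eqref{eq:coefficientsQuadricR}, and apply the quadratic formula. The only difference is cosmetic: you expand the quadratic and verify $B^2-4AC=\Delta$ by hand (and also check that $s^\pm\notin\{1,-\nu\}$, a detail the paper leaves implicit), whereas the paper delegates these computations to the \texttt{Macaulay2} code in the supplementary material.
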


\begin{proof}
     We consider the rational function $\mathcal{R}$ from the proof of Theorem~\ref{thm:EDD2} whose roots correspond to the critical points of \eqref{eq: weight opt}.
    The numerator of $\mathcal{R}$ is $A s^2 + Bs + C$. The discriminant  $\Delta$ of the numerator is positive for generic $\tilde y$ and $\nu >0$. Thus, there are 2 real roots $s^{\pm} = (-B \pm \sqrt{\Delta} )/2A$.
    These yield the critical points 
    $\varepsilon_i^{\pm}(\nu) = s^{\pm} q_i \tilde y_i / (\lambda_i  - s^{\pm} q_i) $. 
\end{proof}

\new{Now we can describe the best $\nu$ in terms of
\begin{align}
    S &= (\tilde y_1^2 +\tilde y_3^2)(a_1\tilde y_2^2 +a_2\tilde y_4^2),\\
    T &= (\tilde y_2^2 +\tilde y_4^2)(a_1\tilde y_1^2 +a_2\tilde y_3^2),
\end{align}
which we do in the next result. For the proof, we define $\delta:=(a_1 \tilde y_1^2+ a_2 \tilde y_3^2) (a_1 \tilde y_2^2+ a_2 \tilde y_4^2)$, which satisfies $\Delta = 4\nu^2(\nu+1)^2\delta$.}

\begin{theorem}\label{thm: unique nu}
    Let  $\lambda = (a_1, \nu a_1, a_2, \nu a_2)$. For every  $\nu >0$,  $\ee^+(\nu)$ minimizes both the weighted and the nonweighted squared error, i.e., 
    $\sum_i\lambda_i (\ee_i^+(\nu))^2< \sum_i \lambda_i(\ee_i^-(\nu))^2$
    and
    $\sum_i(\ee_i^+(\nu))^2< \sum_i(\ee_i^-(\nu))^2$. Further, there is a unique $\nu \in \mathbb{R}_{>0}$ for which $\sum_i(\ee_i^+(\nu))^2$ is minimized. It is
    \begin{align}\label{eq:optimal_nu}
        \new{\nu = \frac{T}{S}.}
    \end{align}
\end{theorem}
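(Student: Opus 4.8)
\emph{Setup.} The plan is to reduce all three claims to explicit closed forms for the two critical points and the error they produce; the fact that makes this possible is that the discriminant is a perfect square up to the factor $\delta$. First I would record the two error functions at a critical point with multiplier $s$. By Lemma~\ref{le: critical points} such a point satisfies $\varepsilon_i = s q_i \tilde y_i/(\lambda_i - s q_i)$, hence $\tilde y_i + \varepsilon_i = \lambda_i \tilde y_i/(\lambda_i - s q_i)$. Substituting $q=(a_1,-a_1,a_2,-a_2)$ and $\lambda=(a_1,\nu a_1,a_2,\nu a_2)$ and grouping the index blocks $\{1,3\}$ and $\{2,4\}$ yields
\begin{align*}
g(s):=\sum_i\lambda_i\varepsilon_i^2 &= \frac{s^2\alpha}{(1-s)^2} + \frac{\nu s^2\beta}{(\nu+s)^2}, \\
f(s):=\sum_i\varepsilon_i^2 &= \frac{s^2 U}{(1-s)^2} + \frac{s^2 V}{(\nu+s)^2},
\end{align*}
where $\alpha:=a_1\tilde y_1^2+a_2\tilde y_3^2$, $\beta:=a_1\tilde y_2^2+a_2\tilde y_4^2$, $U:=\tilde y_1^2+\tilde y_3^2$ and $V:=\tilde y_2^2+\tilde y_4^2$, so that $\delta=\alpha\beta$, $S=U\beta$ and $T=V\alpha$.

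\emph{Factoring the roots.} The decisive step is to factor $s^{\pm}=(-B\pm\sqrt\Delta)/2A$. With $A=\alpha-\nu^2\beta$, $B=2\nu(\alpha+\nu\beta)$ and the identity $\Delta=4\nu^2(\nu+1)^2\delta$, set $p=\sqrt\alpha$ and $r=\sqrt\beta$ (real and positive for generic $\tilde y$ since $a_i>0$). Then $\sqrt\Delta=2\nu(\nu+1)pr$ and $A=(p-\nu r)(p+\nu r)$, and both numerators of $s^{\pm}$ turn out to contain the factor $(p-\nu r)$, which cancels to give
\begin{align*}
s^+=\frac{\nu(r-p)}{p+\nu r}, \qquad s^-=\frac{-\nu(p+r)}{p-\nu r}.
\end{align*}
I would then simplify $1-s^{\pm}$ and $\nu+s^{\pm}$, which factor cleanly as well (for instance $1-s^+=p(1+\nu)/(p+\nu r)$ and $\nu+s^+=\nu r(1+\nu)/(p+\nu r)$), so that each ratio $s^{\pm}/(1-s^{\pm})$ and $s^{\pm}/(\nu+s^{\pm})$ becomes a single fraction in $p,r,\nu$.

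\emph{Error values and Parts I--II.} Substituting these ratios into $g$ and $f$ and using $p^2=\alpha$, $r^2=\beta$, $U/\alpha=S/\delta$, $V/\beta=T/\delta$ collapses the expressions to
\begin{align*}
g(s^{\pm})=\frac{\nu\,(r\mp p)^2}{1+\nu}, \quad f(s^{\pm})=\frac{(r\mp p)^2}{\delta}\cdot\frac{\nu^2 S+T}{(1+\nu)^2}.
\end{align*}
Since $p,r>0$ we have $(r-p)^2<(p+r)^2$ (their difference is $4pr>0$), so $g(s^+)<g(s^-)$ and $f(s^+)<f(s^-)$. As there are only two critical points and the minimum of each squared error is attained on the feasible quadric, this shows $\varepsilon^+$ is the minimizer of both the weighted and the unweighted squared error, giving the first two assertions.

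\emph{Part III and main obstacle.} For the optimal $\nu$, observe that the prefactor $(r-p)^2/\delta$ in $f(s^+)$ does not depend on $\nu$, so minimizing $f(s^+)$ over $\nu>0$ amounts to minimizing $h(\nu)=(\nu^2 S+T)/(1+\nu)^2$. Differentiating gives $h'(\nu)=2(\nu S-T)/(1+\nu)^3$, whose only positive zero is $\nu=T/S$, with $h'<0$ for $\nu<T/S$ and $h'>0$ afterwards; hence $\nu=T/S$ is the unique minimizer, as claimed. I expect the only real obstacle to be bookkeeping: getting the factorization of $s^{\pm}$ right and, crucially, pairing the root $s^+=(-B+\sqrt\Delta)/2A$ with the factor $(r-p)$ rather than $(p+r)$, since the whole comparison rests on $\varepsilon^+$ being the branch carrying $(r\mp p)^2$ with the minus sign. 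I would also verify the genericity hypotheses used along the way — $\alpha,\beta>0$, $A=p^2-\nu^2 r^2\neq 0$ so the cancellation is legitimate, and $S,T>0$ so the inequalities are strict and $T/S\in\mathbb{R}_{>0}$ — all of which hold for generic data with $a_1,a_2>0$.
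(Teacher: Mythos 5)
Your proposal is correct, and its overall skeleton matches the paper's proof: evaluate the weighted and unweighted errors at the two critical points, compare the resulting closed forms, then minimize the unweighted value over $\nu$ (your sign analysis of $h'(\nu)=2(\nu S-T)/(1+\nu)^3$ is equivalent to the paper's second-derivative check, and your formulas $g(s^{\pm})=\nu(r\mp p)^2/(1+\nu)$ and $f(s^{\pm})=(r\mp p)^2(\nu^2S+T)/\bigl(\delta(1+\nu)^2\bigr)$ are exactly the paper's identities \eqref{eq:valueWeightedError} and \eqref{eq: sum}, with $\alpha^{\pm}=(r\mp p)^2$). The genuine difference is how those identities are established: the paper delegates them to \texttt{Macaulay2} computations in the supplementary material, treating $\sqrt{\delta}$ as a formal variable and simplifying symbolically, whereas you derive them by hand by exploiting that $\Delta=4\nu^2(\nu+1)^2\delta$ is a perfect square up to $\delta$, substituting $p=\sqrt{\alpha}$, $r=\sqrt{\beta}$, and factoring the roots as $s^+=\nu(r-p)/(p+\nu r)$, $s^-=-\nu(p+r)/(p-\nu r)$ so that $s/(1-s)$ and $s/(\nu+s)$ collapse to single fractions. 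This buys a self-contained, human-verifiable argument in place of computer algebra, at the cost of the bookkeeping you flag; your care on the two delicate points — pairing the root $(-B+\sqrt{\Delta})/2A$ with the factor $(r-p)$, and the genericity assumptions $\alpha,\beta,S,T>0$ and $A\neq 0$ that legitimize the cancellation — is exactly what makes the hand derivation sound.
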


\begin{proof} 
Evaluating the weighted squared-error  at the critical points in \eqref{eq:criticalPoints} yields
    \begin{align}\label{eq:valueWeightedError}
        \sum_i \lambda_i (\varepsilon_i^{\pm}(\nu))^2 &= \frac{\nu}{\nu+1} \Big(\new{\big(\underbrace{\sum |q_i| \tilde y_i^2\big) - \pm 2  \sqrt{\delta}}_{\alpha^{\pm}:=}}\Big).
    \end{align}
\new{The code that produces this identity is provided in the SM.} The minimizer is therefore $ \varepsilon^{+}(\nu)$. \new{Note that  
\begin{align}
    \alpha^{\pm} = \Big(\sqrt{a_1\tilde y_1^2 + a_2 \tilde y_3^2} -\pm  \sqrt{a_1\tilde y_2^2 + a_2 \tilde y_4^2}\Big)^2,
\end{align}
so that for generic data $\tilde y_i$, $\alpha^{\pm}>0$.}
Evaluating the nonweighted squared-error at the critical points gives\new{
    \begin{align}\label{eq: sum}
        \sum_i (\varepsilon_i^\pm(\nu))^2 &= \frac{\alpha^{\pm}(S\nu^2 + T)}{\delta(\nu+1)^2}.
    \end{align}
    The code that produces this identity is provided in the SM. Since each $\alpha^{\pm},S,T,\delta$ are $>0$ for generic data, it follows that $\varepsilon^+(\nu)$ is the minimizer of the standard squared error.} 
    
    Next we compute the best $\nu$. The derivative of \eqref{eq: sum} \new{for $\pm = +$} with respect to $\nu$ is \new{
    \begin{align}
     \frac{2\alpha^{+}(S\nu -T)}{\delta (\nu +1 )^3}.   
    \end{align} Setting this expression to $0$, the unique solution $T/S$ is nonnegative.} One can check that the second derivative at \new{$T/S$} is positive, implying that this choice of $\nu$ yields the global minimum. 
\end{proof}


\begin{figure*}
    \centering
    \includegraphics[width=0.49\linewidth]{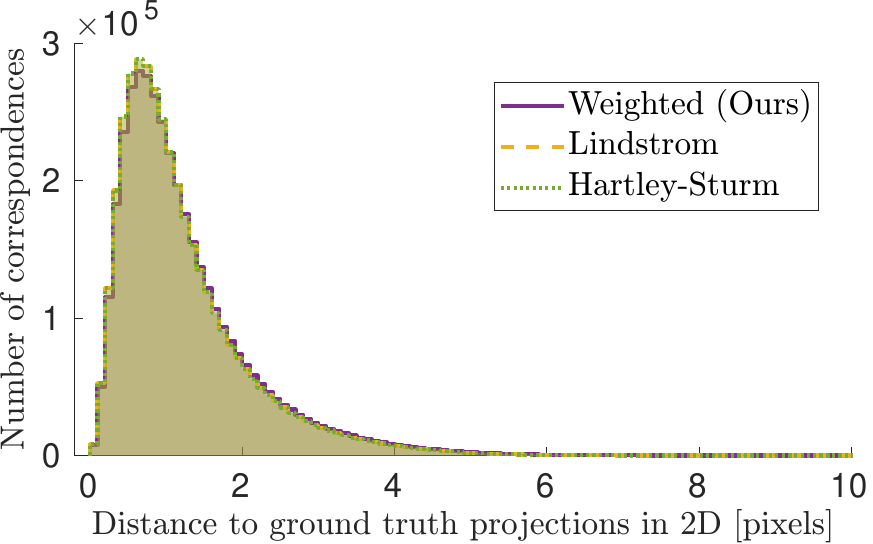}
    \includegraphics[width=0.49\linewidth]{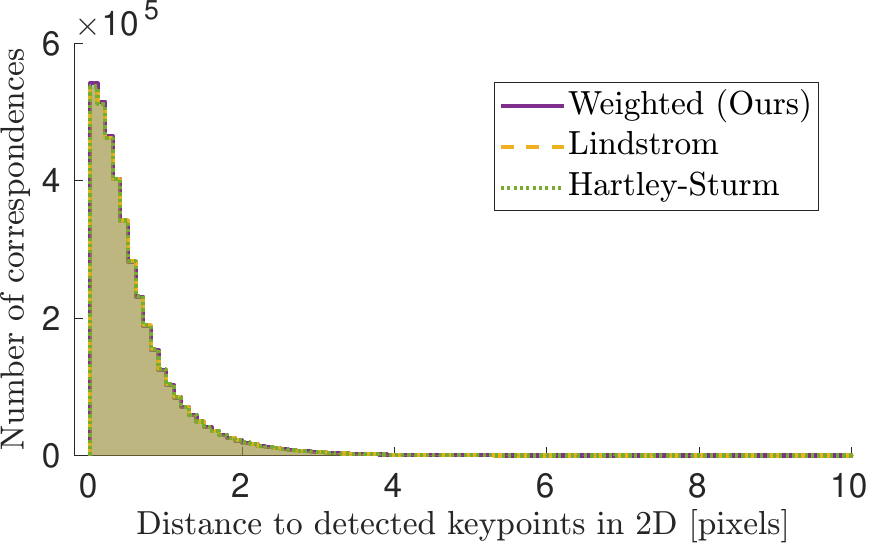}
    \caption{2D errors over correspondences from randomly sampled image pairs from the Pantheon dataset, when solving the triangulation problem using different methods. The methods perform very similarly, but our weighted method is slightly worse than the others. Left: Distance to ground truth projections; Right: Distance to measured 2D points.}
    \label{fig:errors}
\end{figure*}

\begin{figure}
    \centering
    \includegraphics[width=0.99\linewidth]{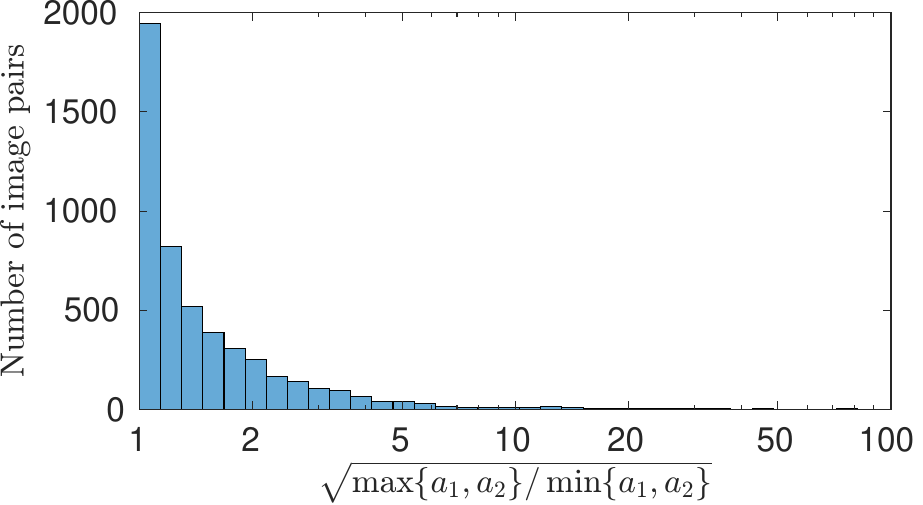}
    \caption{The eigenvalue ratios in our randomly sampled set of 5000 image pairs from the Pantheon dataset.}
    \label{fig:eigRatios}
\end{figure}

Now we provide bounds on the error $\mathcal E_{\vec{F}}$. Recall from Proposition \ref{prop:critPtsCorrespondence} that $\mathcal{E}_{\vec{F}}(\tilde \xx)= \mathcal{E}_{\vec{q}}(\tilde \yy)$\new{, and note that from the proof of Theorem~\ref{thm: unique nu},
\begin{align}
    \sqrt{\alpha^+}= \big|\sqrt{a_1\tilde y_1^2 + a_2\tilde y_3^2}-\sqrt{a_1\tilde y_2^2 + a_2\tilde y_4^2} \big|.
\end{align}
\new{Observe that $\alpha^+=0$ if and only if $\tilde \yy$ lies on the model, and is strictly greater than $0$ otherwise.}

\begin{proposition}\label{prop: bounds 2} \new{The inequality}
\begin{align}\label{eq: upper bound}
    \mathcal E_{\vec{q}}(\tilde \yy)\le\new{\sqrt{\frac{\alpha^+}{\delta}\frac{ST}{S+T}}}\end{align}
    \new{holds, along with the bounds}
    \begin{align}\begin{aligned}\label{eq: bounds}
    \new{\frac{\sqrt{\alpha^+}}{\sqrt{2\max\{a_1,a_2\}}}\le \mathcal E_{\vec{q}}(\tilde \yy)  \le \frac{\sqrt{\alpha^+}}{\sqrt{2\min\{a_1,a_2\}}}.}
    \end{aligned}
\end{align}
\end{proposition}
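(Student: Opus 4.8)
The plan is to exploit a simple feasibility principle: the weighted problem \eqref{eq: weight opt} and the unweighted problem \eqref{eq: err simplified} share the \emph{same} feasible set, namely the quadric $\{\ee : \sum_i q_i(\tilde y_i + \varepsilon_i)^2 = 0\}$. Hence any minimizer of a weighted problem is automatically a feasible point of the unweighted one, and vice versa. This lets me transfer values between the two formulations and obtain bounds on the ED-degree-6 quantity $\mathcal{E}_{\vec{q}}$ without ever touching the degree-6 critical equations directly.

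For the upper bound \eqref{eq: upper bound}, I observe that for every $\nu > 0$ the weighted minimizer $\ee^+(\nu)$ from Lemma~\ref{le: critical points} is feasible for \eqref{eq: err simplified}, so $\mathcal{E}_{\vec{q}}^2(\tilde\yy) \le \sum_i (\varepsilon_i^+(\nu))^2$. Substituting the closed form \eqref{eq: sum} and evaluating at the optimal $\nu = T/S$ from Theorem~\ref{thm: unique nu}, the factor $(S\nu^2 + T)/(\nu+1)^2$ collapses to $ST/(S+T)$, yielding $\mathcal{E}_{\vec{q}}^2(\tilde\yy) \le (\alpha^+/\delta)\cdot ST/(S+T)$. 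Taking square roots gives \eqref{eq: upper bound} at once.

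For the two-sided bounds \eqref{eq: bounds}, I specialize to $\nu = 1$, i.e.\ $\lambda = (a_1, a_1, a_2, a_2)$, for which \eqref{eq:valueWeightedError} gives the weighted optimal value $\mathcal{E}_{\vec{q},\vec{\lambda}}^2 = \tfrac{1}{2}\alpha^+$. Since every weight lies in the interval $[\min\{a_1,a_2\}, \max\{a_1,a_2\}]$, I have the elementary sandwich $\min\{a_1,a_2\}\,\|\ee\|^2 \le \sum_i \lambda_i \varepsilon_i^2 \le \max\{a_1,a_2\}\,\|\ee\|^2$ at any point $\ee$. The upper bound follows by applying the left inequality at the feasible point $\ee = \ee^+(1)$: from $\min\{a_1,a_2\}\sum_i(\varepsilon_i^+(1))^2 \le \tfrac{1}{2}\alpha^+$ I get $\mathcal{E}_{\vec{q}}^2(\tilde\yy) \le \sum_i(\varepsilon_i^+(1))^2 \le \alpha^+/(2\min\{a_1,a_2\})$.

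The lower bound is the subtle part and the main obstacle, because feasibility of $\ee^+(1)$ points the wrong way for it. I reverse the roles instead: let $\ee^*$ be the minimizer of the unweighted problem \eqref{eq: err simplified}, so $\mathcal{E}_{\vec{q}}^2(\tilde\yy) = \|\ee^*\|^2$. Because $\ee^*$ is feasible for the weighted problem, optimality gives $\tfrac{1}{2}\alpha^+ = \mathcal{E}_{\vec{q},\vec{\lambda}}^2 \le \sum_i \lambda_i (\varepsilon_i^*)^2 \le \max\{a_1,a_2\}\,\|\ee^*\|^2$, which rearranges to $\mathcal{E}_{\vec{q}}^2(\tilde\yy) \ge \alpha^+/(2\max\{a_1,a_2\})$. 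Taking square roots in both inequalities establishes \eqref{eq: bounds}, and the remark that $\alpha^+ = 0$ exactly when $\tilde\yy$ lies on the model confirms the bounds degenerate correctly to $\mathcal{E}_{\vec{q}} = 0$ there.
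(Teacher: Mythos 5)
Your proof is correct and follows essentially the same route as the paper's: the upper bound \eqref{eq: upper bound} comes from feasibility of the weighted minimizer at $\nu = T/S$ in \eqref{eq: sum}, and the two-sided bounds \eqref{eq: bounds} come from sandwiching the weighted cost between $\min\{a_1,a_2\}\,\|\ee\|^2$ and $\max\{a_1,a_2\}\,\|\ee\|^2$ combined with optimality on the shared feasible set. The only cosmetic difference is that you fix $\nu=1$ from the outset, whereas the paper keeps $\nu$ free in the lower bound and then observes that $\nu=1$ maximizes the resulting expression \eqref{eq:lowerBoundNu}; the underlying argument is the same.
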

\new{The right-hand side of \eqref{eq: upper bound} can be simplified somewhat by noting that 
\begin{align}
    ST=(\tilde y_1^2+\tilde y_3^2)(\tilde y_2^2+\tilde y_4^2)\delta.
\end{align}

The ratio between the upper and lower bounds of \eqref{eq: bounds}} is
\begin{align}
    \sqrt{\frac{\max\{a_1,a_2\}}{\min\{a_1,a_2\}}}.
\end{align}
Therefore, the closer the ratio $a_2/a_1$ is to $1$, the better the these bounds are. In comparison, the upper bound \eqref{eq: upper bound} is a better approximation, which follows from the proof.} However, this comes at the cost of a more complicated expression.

\begin{proof} Let $\varepsilon^*$ be the minimizer for the nonweighted problem. Then the minimizer from Theorem \ref{thm: unique nu} gives an upper bound for $\mathcal E_{\vec{q}}^2(\tilde \yy)=\sum (\varepsilon^*_i)^2$. As in the proof of Theorem \ref{thm: unique nu}, plugging \new{$\nu =T/S$ into \eqref{eq: sum} for $\pm = +$}, we get that the minimum value of \eqref{eq: sum} is
    \begin{align}
        \new{\frac{\alpha^+}{\delta}\frac{ST}{S+T}},
    \end{align}
which proves the upper bound. 

For the other part, we prove the lower bound and note that the upper bound is proven the same way. We observe that  
\begin{align}\label{eq:eigRatio}
   \max\{\lambda_i\} \sum (\varepsilon_i^*)^2 \ge  \sum \lambda_i(\varepsilon_i^*)^2.
\end{align}
Thus, Theorem \ref{thm: unique nu} implies $\mathcal E_{\vec{q}}^2(\tilde \yy) \ge \frac{1}{\max\{\lambda_i\}} \sum \lambda_i (\ee_i^+(\nu))^2$ for $\lambda = (a_1, \nu a_1, a_2, \nu a_2)$ and arbitrary $\nu > 0$. By \eqref{eq:valueWeightedError}, the latter expression equals
\begin{align}\label{eq:lowerBoundNu}
      \frac{\nu}{\max\{\lambda_i\}(\nu+1)} \alpha^+.
\end{align}
Note that $\max\{\lambda_i\}=\max\{1,\nu\}\max\{a_1,a_2\}$. Therefore, \eqref{eq:lowerBoundNu} reaches its maximum value when $\nu=1$. 
\end{proof} 

\begin{figure*}
    \centering
    \includegraphics[width=0.49\linewidth]{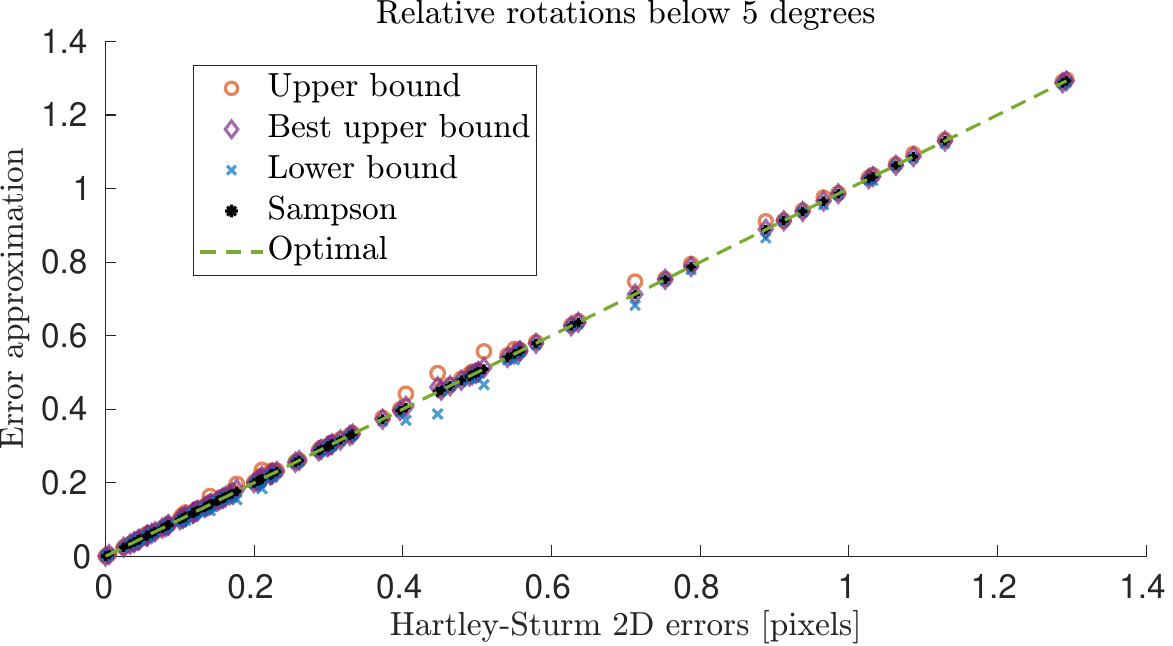}
    \includegraphics[width=0.49\linewidth]{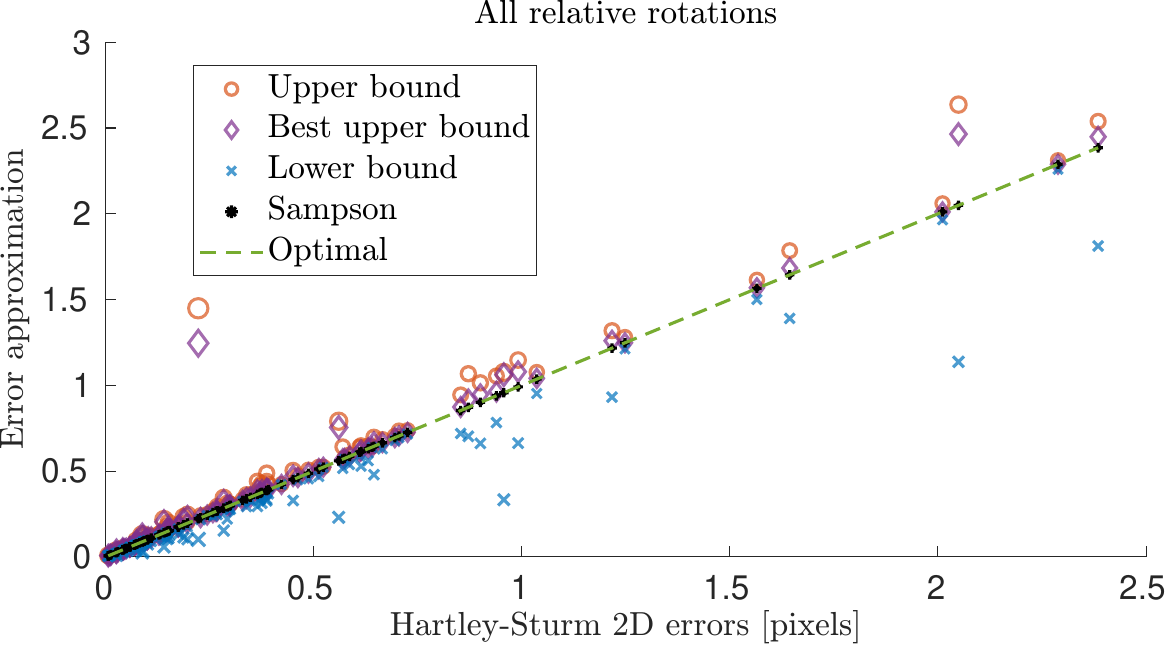}
    \caption{Evaluation of the error bounds~\eqref{eq: upper bound} and \eqref{eq: bounds} under different amounts of relative rotation between cameras.
    For cameras with close to the same viewing direction, the bounds are sharp as predicted by the theory. 
    The markers are scaled (logarithmically) by the eigenvalue ratio~\eqref{eq:eigRatio}. We subsample 100 random correspondences to plot, for ease of viewing.
    }
    \label{fig:bounds}
\end{figure*}

Finally, we investigate when the case $a_1=a_2$ happens.

\begin{proposition}
    \label{prop:F22orthogonal}
    We have $a_1=a_2$ if and only if $F_{2 \times 2} $ is a scalar times an orthogonal matrix. \\ \indent
    This condition is satisfied for calibrated cameras $C_1=\begin{bmatrix}
    I & 0
\end{bmatrix}$ and $C_2= R \begin{bmatrix}
    I & -c
\end{bmatrix}$ if and only if the last row of $R$ is $(0,0,\pm 1)$ or
$c$ is a scalar times $(r_{31},r_{32}, r_{33} \pm 1)^\top$.
\end{proposition}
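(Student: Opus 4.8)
The plan is to treat the two assertions in the statement separately. For the first equivalence I would argue by pure linear algebra. Recall that $\vec P(\vec F)=\tfrac12\left[\begin{smallmatrix} 0 & F_{2\times2}\\ F_{2\times2}^\top & 0\end{smallmatrix}\right]$, whose eigenvalues $\pm a_1,\pm a_2$ are (up to the factor $\tfrac12$) the signed singular values of $F_{2\times2}$. Hence $a_1=a_2$ precisely when the two singular values of $F_{2\times2}$ coincide. Since $F_{2\times2}$ is invertible (as $a_i>0$), this happens if and only if $F_{2\times2}^\top F_{2\times2}$ is a scalar multiple of the identity, i.e.\ $F_{2\times2}$ is a scalar times an orthogonal matrix. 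This direction is immediate and poses no difficulty.

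For the calibrated characterization I would first write the fundamental matrix explicitly. For $C_1=[I\;0]$ and $C_2=R[I\;-c]=[R\;-Rc]$ one has $\vec F=[-Rc]_\times R$; applying $[Rv]_\times=R[v]_\times R^\top$ for $R\in\mathrm{SO}(3)$ this collapses to $\vec F=-R[c]_\times$. As the global sign is irrelevant to the scalar-times-orthogonal property, I would work with $\vec F=R[c]_\times$, whose $j$-th column is $f_j=R(c\times e_j)$ (with $e_j$ the standard basis vectors). The two columns of $F_{2\times2}$ are then the projections of $f_1,f_2$ onto the first two coordinates, and $F_{2\times2}$ being a scalar times orthogonal amounts to these columns being orthogonal and of equal norm.

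The heart of the argument is to turn these two conditions into clean conditions on $c$ and the third row $R_3=(r_{31},r_{32},r_{33})$ of $R$, for which two identities do all the work: orthogonality of $R$ gives $(c\times e_i)\cdot(c\times e_j)=\|c\|^2\delta_{ij}-c_ic_j$, and the scalar triple product identity gives $(f_i)_3=(R_3\times c)_i$. Writing $w:=R_3\times c$ and discarding third-coordinate contributions, the two conditions become orthogonality and equality of norms of the projected rows $(c_1,c_2)$ and $(w_1,w_2)$; using $c\cdot w=0$ and $\|w\|^2=\|c\|^2-(R_3\cdot c)^2$ these reduce to the pair
\[
c_3\,(r_{31}c_2-r_{32}c_1)=0,\qquad (R_3\cdot c)^2+(r_{31}c_2-r_{32}c_1)^2=c_3^2.
\]

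Finally I would finish with a short case analysis (assuming $c\neq0$, since otherwise $\vec F=0$). If $c_3=0$, the second equation forces both $R_3\cdot c=0$ and $r_{31}c_2-r_{32}c_1=0$, which for $c\neq0$ gives $r_{31}=r_{32}=0$ and hence $R_3=(0,0,\pm1)$. If $c_3\neq0$, the first equation forces $r_{31}c_2=r_{32}c_1$, i.e.\ $(c_1,c_2)\parallel(r_{31},r_{32})$, and the second then reduces to $R_3\cdot c=\pm c_3$; solving this together with the parallelism and using $\|R_3\|=1$ yields $c=\beta\,(r_{31},r_{32},r_{33}\pm1)$, the degenerate sub-case $(r_{31},r_{32})=(0,0)$ again landing in $R_3=(0,0,\pm1)$. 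Collecting the cases gives exactly the stated disjunction. I expect the main obstacle to be the sign bookkeeping in this last step — in particular pairing $R_3\cdot c=+c_3$ with $c\parallel(r_{31},r_{32},r_{33}+1)$ correctly — rather than any conceptual hurdle, since the two vector identities above are what render the whole reduction routine.
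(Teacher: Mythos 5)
Your proof is correct, and for the calibrated-camera part it takes a genuinely different route from the paper's. The first equivalence is the same one-line singular-value argument in both. For the second, the paper argues the two directions separately: the ``if'' direction by substituting each family ($R$ with last row $(0,0,\pm 1)$, resp.\ $c$ proportional to $(r_{31},r_{32},r_{33}\pm 1)^\top$) into $\vec{F}=R^\top[t]_\times=-[c]_\times R^\top$ and displaying the resulting matrix, whose top-left block is visibly a scaled orthogonal matrix; and the converse by writing the scaled-rotation condition as two linear equations in $c$ and solving for $c_1$ by hand using the orthogonality and cross-product relations among the entries of $R$ --- with the remaining coordinates, the determinant $-1$ case, and the full case analysis delegated to the Macaulay2 code in the appendix. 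You instead reduce ``scaled orthogonal'' once and for all, via the Binet--Cauchy/Lagrange identities and the triple-product identity $(f_i)_3=(R_3\times c)_i$ for your columns $f_i=R(c\times e_i)$, to the two scalar equations $c_3(r_{31}c_2-r_{32}c_1)=0$ and $(R_3\cdot c)^2+(r_{31}c_2-r_{32}c_1)^2=c_3^2$; since every step of this reduction is an equivalence, your single case analysis on $c_3$ (with the degenerate sub-cases $(r_{31},r_{32})=(0,0)$ and $c=0$ noted) proves both directions at once, self-contained and with no computer algebra. I checked the identities and the sign bookkeeping you were worried about ($R_3\cdot c=\pm c_3$ pairing with $c\propto(r_{31},r_{32},r_{33}\pm 1)^\top$): they work out. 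What each approach buys: the paper's is mechanical and certifiable by symbolic computation, while yours is shorter, makes the case structure transparent, and removes the ``proven similarly''/supplementary-code gaps. Two harmless discrepancies: your convention $\vec{F}=[t]_\times R=-R[c]_\times$ is, up to sign, the transpose of the paper's $\vec{F}=R^\top[t]_\times$, which changes nothing since transposition and scaling preserve both the singular values and the scaled orthogonality of the $2\times 2$ block; and in the first equivalence the invertibility of $F_{2\times 2}$ is not actually needed, as equal singular values $\sigma$ give $F_{2\times 2}^\top F_{2\times 2}=\sigma^2 I$ even when $\sigma=0$.
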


Recall that the last row of $R$ encodes the optical axis of the camera $C_2$. So that row being $(0,0,\pm 1)$ is equivalent to the optical axes of both cameras being parallel, while the latter condition in Proposition \ref{prop:F22orthogonal} means that the center of $C_2$ is proportional to the sum / difference of the optical axes.

\begin{proof}
    The first statement is clear since $a_1,a_2$ are the singular values of $F_{2 \times 2}$.
    For calibrated cameras $C_1=\begin{bmatrix}
    I & 0
\end{bmatrix}$ and $C_2=R\begin{bmatrix}
    I & -c
\end{bmatrix}$, we have $\vec{F}= R^\top[t]_\times $, where $t = -Rc$ and
\begin{align}
    [t]_\times:= \begin{bmatrix}
       0 & -t_3 & t_2 \\
       t_3 & 0 & -t_1\\
       -t_2 & t_1 & 0
    \end{bmatrix}.
\end{align}
We note that $\vec F =R^\top[t]_\times = R^\top[-Rc]_\times=-[c]_\times R^\top$ due to rotation equivariance of the cross-product.
If  $R= \left[\begin{smallmatrix}
    R' & 0 \\ 0 & \pm 1
\end{smallmatrix} \right]$, 
then 
$F_{2 \times 2}=t_3(R')^\top \left[\begin{smallmatrix}0 & -1 \\ 1 & 0
\end{smallmatrix}\right] \in t_3 \mathrm{O}(2)$. \\ \indent
Next, we analyze the case $r_{33} \neq \pm 1$ and $(c_1,c_2) = (r_{31},r_{32}) \frac{c_3}{r_{33} \pm 1}$.
For $\pm=+$, a direct computation reveals that $\vec{F}$ becomes
\begin{align}
    \frac{c_3}{r_{33}+1}  \begin{bmatrix}
        r_{12}-r_{21} & r_{11}+r_{22} & r_{32} \\
        -r_{11}-r_{22} & r_{12}-r_{21} & -r_{31} \\
        -r_{23} & r_{13} & 0
    \end{bmatrix}.
\end{align}
This can also be verified via \texttt{Macaulay2} code provided in the SM.
In particular, we see that the top left $2 \times 2$ block is a scaled $\mathrm{SO}(2)$ matrix.
Analogously, for $\pm=-$, we obtain 
\begin{align}
\vec{F}=
    \frac{c_3}{r_{33}-1}  \begin{bmatrix}
        -r_{12}-r_{21} & r_{11}-r_{22} & -r_{32} \\
        r_{11}-r_{22} & r_{12}+r_{21} & r_{31} \\
        -r_{23} & r_{13} & 0
    \end{bmatrix}
\end{align}
and so $F_{2 \times 2}$ is a scalar times an orthogonal matrix (of determinant $-1$).

For the converse direction, we consider $\vec{F}= R^\top[t]_\times =-[c]_\times R^\top$ for some rotation matrix $R$ and $t = -Rc$ such that  $F_{2\times 2}$ is a scaling of an orthogonal matrix.
We provide \texttt{Macaulay2} code in the SM for solving the resulting equations.
Here we show a straightforward calculation of $c_1$ in the case of $F_{2\times 2}$ having positive determinant. (Other cases can be proven similarly.)
We have that
\begin{equation}
F_{2 \times 2} =  -\begin{bmatrix}
    r_{13}c_2-r_{12}c_3 &  r_{23}c_2-r_{22}c_3 \\
    -r_{13}c_1+r_{11}c_3 & -r_{23}c_1+r_{21}c_3
\end{bmatrix},
\end{equation}
which is a scaled rotation if and only if
\begin{align}
\begin{split}
    r_{13}c_2-r_{12}c_3 &=  -r_{23}c_1+r_{21}c_3 \\
    r_{23}c_2-r_{22}c_3 &= r_{13}c_1-r_{11}c_3.
\end{split}
\end{align}
By multiplying the first equation with $r_{23}$, the second with $r_{13}$, subtracting the first from the second equation and collecting terms we obtain
\begin{equation}\label{eq:c2relation}
(r_{13}^2+r_{23}^2)c_1 = (r_{23}(r_{21}+r_{12})+r_{13}(r_{11}-r_{22}))c_3.
\end{equation}
We can rewrite the left-hand-side as $(1 - r_{33}^2)c_1$ by using that the last column of $R$ has unit norm.
Further, since the cross product of the first two rows of $R$ equals the third row we have
$r_{23}r_{12} - r_{13}r_{22}=r_{31}$
and since the columns of $R$ are orthogonal we have
$r_{23}r_{21}+r_{13}r_{11}=-r_{33}r_{31}$.
Thus, we can rewrite \eqref{eq:c2relation} as
\begin{equation}
    (1+r_{33})(1-r_{33})c_1 = r_{31}(1-r_{33})c_3.
\end{equation}
This means that when $r_{33}\neq\pm1$, we obtain $c_1=r_{31}c_3/(1+r_{33})$, as we wanted to show.
\end{proof}

\section{Experiments} 
\label{sec:exp}


We evaluate the proposed triangulation method on 5000 randomly sampled image pairs from the Pantheon collection of the Image Matching Competition training set \cite{image-matching-challenge-2020, image-matching-challenge-2024}, which has an associated 3D reconstruction from COLMAP~\cite{schoenberger2016mvs, schoenberger2016sfm}.
We select pairs which have at least 100 covisible 3D points in the COLMAP reconstruction.
Figure~\ref{fig:eigRatios} shows the distribution of the eigenvalue ratio~\eqref{eq:eigRatio} over the selected image pairs.
Clearly, for most image pairs, the ratio is close to $1$, meaning that our reweighted cost function is close to the original unweighted cost function.

In Figure~\ref{fig:errors}, we show results for our method compared to Lindstrom's~\cite{lindstrom2010triangulation} and Hartley-Sturm's~\cite{hartley1997triangulation} methods.\footnote{
When comparing with Lindstrom, we refer to his \texttt{niter2}-method which is the fastest variant.
}
We take 2D correspondences from the COLMAP reconstruction, compute new 2D points using the respective methods and measure the 2D distance both from the projected associated 3D point (which has been refined by bundle adjustment in COLMAP) and to the original 2D keypoints.
The three methods all perform very similarly, with our method falling slightly behind the others.
Lindstrom's method is furthermore very efficient, $1.3$ -- $1.4$ times faster than our method in our optimized implementations.
We have not implemented an optimized version of Hartley-Sturm's method, but Lindstrom reports that his method is around $50$ times faster than a fast implementation of Hartley-Sturm's.
Hence, practically, we recommend using Lindstrom's method,
\new{
except if the eigenvalue ratio is known to be $1$ (see Proposition~\ref{prop:F22orthogonal} for when this happens) in which case our method is optimal.
}

In Figure~\ref{fig:bounds}, we show the approximating quality of the bounds in \new{equation~\eqref{eq: upper bound} (denoted ``Best upper bound'') and} equation~\eqref{eq: bounds}.
The markers are scaled according to the eigenvalue ratio~\eqref{eq:eigRatio}, illustrating the fact that the bounds become worse with increased ratio as predicted by the theory.
Our bounds can be used for outlier rejection, with computation time roughly identical to the Sampson error~\cite{sampson1982fitting, rydell2024revisiting}.\footnote{
$\vec{P}(\vec{F})$ in Proposition~\ref{prop:critPtsCorrespondence} is diagonalized using a fast SVD of ${F_{2\times2}}$.
The diagonalization is computed once per image pair.
Further,
we can avoid computationally expensive square roots when checking if the upper bound in~\eqref{eq: bounds} is smaller than an inlier threshold $r$,
by using that \mbox{$|\sqrt{\alpha} - \sqrt{\beta}| < r$} is equivalent to \mbox{$(\alpha + \beta - r^2)^2 < 4\alpha\beta$}.
} \new{Under data-dependent assumptions, the Sampson method provides bounds for the true error \cite[Section 3]{rydell2024revisiting}, whereas our bounds do not require any such assumptions. However, we find that in practice, it should be recommended to use the Sampson error as it more closely aligns with the optimal value. In particular, we find that} the noise levels need to be extremely high for Sampson to fail in approximating the true error well, so this is not a practically relevant issue, see Figure~\ref{fig:boundsWithNoise}.

In summary, the experiments show that our triangulation method does not outperform the state-of-the-art but performs competitively. In settings where the optical axes are parallel, our method is preferred as it only involves solving a quadratic equation and it is guaranteed to be optimal.
This is promising for future development of reweighted cost functions for other problems in multiple view geometry.


\begin{figure}
    \centering
    \includegraphics[width=0.99\linewidth]{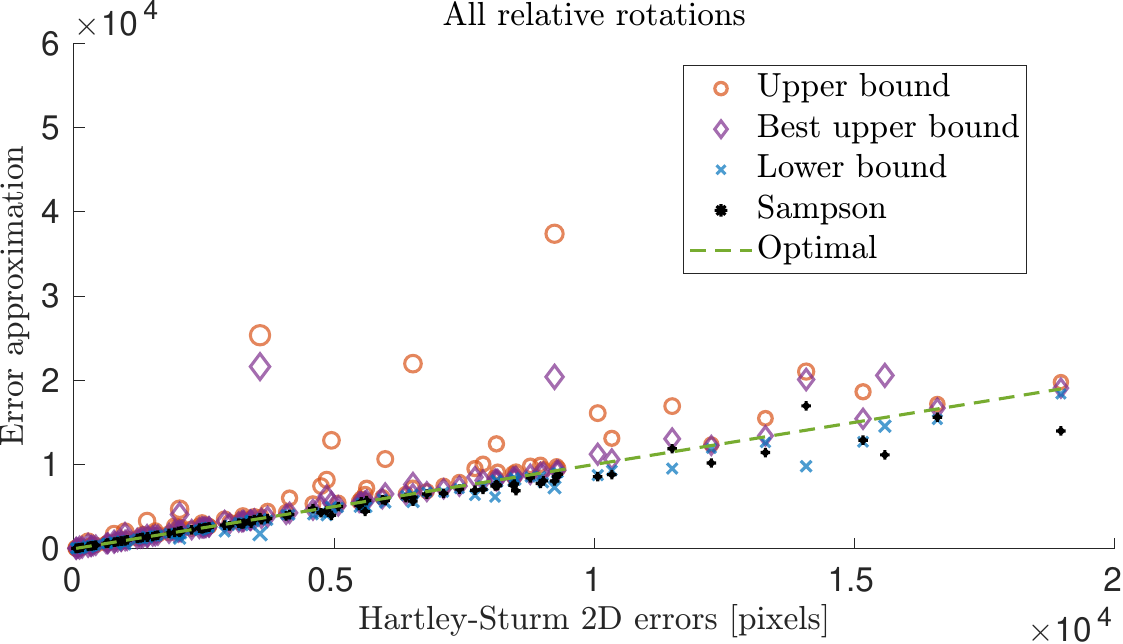}
    \caption{Evaluation of the error bounds~\eqref{eq: upper bound} and \eqref{eq: bounds} under a large amount of noise on the 2D correspondences. The markers are scaled (logarithmically) by the eigenvalue ratio~\eqref{eq:eigRatio} as in Figure~\ref{fig:bounds}. We subsample 100 random correspondences to plot, for ease of viewing.
    At this point Sampson fails to approximate the optimal errors well, but all of the correspondences would anyway be classified as outliers, so practically this failure is not relevant.}
    \label{fig:boundsWithNoise}
\end{figure}

\section{Conclusions}
We showed that diagonalizing the constraint in optimal 2-view triangulation makes it possible to devise a weighted optimization objective such that the problem reduces from finding the roots of a degree 6 polynomial to finding the roots of a degree 2 polynomial.
Further, we showed how to choose the weights to perturb the minimum as little as possible from the unweighted objective.
We also derived several bounds on the unweighted objective as direct consequences.

While our experiments showed that prior methods may be preferable in practice for some settings, we also found that the methods developed in this paper are close to the state-of-the-art and they provide approximation guarantees.
This illustrates the potential of reweighting the objective to reduce the ED-degree as a method in multiple view geometry, and algebraic optimization more generally, which opens up new avenues for future research. As an example, can the 3-view triangulation problem --- which is known to have ED-degree $47$ \cite{stewe-iccv-2005,maxim2020euclidean} --- be simplifed by reweighting?

\section*{Acknowledgements}
All authors have been supported by the Wallenberg AI, Autonomous Systems and Software Program (WASP)  funded by the Knut and Alice Wallenberg Foundation.

\printbibliography

\newpage
\newgeometry{margin=1cm, bmargin=2.5cm}
\onecolumn
\appendix
{\footnotesize
\section{Macaulay2 code}
This appendix contains the Macaulay2 code used for computations in the paper, as well as its output.
The Macaulay2 code is also separately provided as \texttt{.m2}-files, with further calculations.
We change notation from $\lambda,\mu,\nu$ to $l,m,n$ in the computations, for compatibility of character encodings between Macaulay2 and \LaTeX.

\subsection{Proposition \texorpdfstring{\ref{prop:F22orthogonal}}{4.5}}
{ 
\begin{lstlisting}[language=Macaulay2output]
`\underline{\tt i1}` : --the case when c is given as a formula in terms of R: +
     clearAll
`\underline{\tt i2}` : A = QQ[r_(1,1)..r_(3,3),c_3]
`\underline{\tt o2}` = `$A$`
`\underline{\tt o2}` : `$\texttt{PolynomialRing}$`
`\underline{\tt i3}` : R = matrix apply(3, i -> apply(3, j -> r_(i+1,j+1)))
`\underline{\tt o3}` = `$\left(\!\begin{array}{ccc}
r_{1,1}&r_{1,2}&r_{1,3}\\
r_{2,1}&r_{2,2}&r_{2,3}\\
r_{3,1}&r_{3,2}&r_{3,3}
\end{array}\!\right)$`
`\underline{\tt o3}` : `$\texttt{Matrix}$ $A^{3}\,\longleftarrow \,A^{3}$`
`\underline{\tt i4}` : c_1 = r_(3,1)*c_3/(r_(3,3)+1)
`\underline{\tt o4}` = `$\frac{r_{3,1}c_{3}}{r_{3,3}+1}$`
`\underline{\tt o4}` : `$\texttt{frac}\ A$`
`\underline{\tt i5}` : c_2 = r_(3,2)*c_3/(r_(3,3)+1)
`\underline{\tt o5}` = `$\frac{r_{3,2}c_{3}}{r_{3,3}+1}$`
`\underline{\tt o5}` : `$\texttt{frac}\ A$`
`\underline{\tt i6}` : C = matrix{{c_1},{c_2},{c_3}}
`\underline{\tt o6}` = `$(\!\begin{array}{c}
\frac{r_{3,1}c_{3}}{r_{3,3}+1}\\
\frac{r_{3,2}c_{3}}{r_{3,3}+1}\\
c_{3}
\end{array}\!)$`
`\underline{\tt o6}` : `$\texttt{Matrix}$ $(\texttt{frac}\ A)^{3}\,\longleftarrow \,(\texttt{frac}\ A)^{1}$`
`\underline{\tt i7}` : t = -R*C
`\underline{\tt o7}` = `$(\!\begin{array}{c}
\frac{-r_{1,1}r_{3,1}c_{3}-r_{1,2}r_{3,2}c_{3}-r_{1,3}r_{3,3}c_{3}-r_{1,3}c_{3}}{r_{3,3}+1}\\
\frac{-r_{2,1}r_{3,1}c_{3}-r_{2,2}r_{3,2}c_{3}-r_{2,3}r_{3,3}c_{3}-r_{2,3}c_{3}}{r_{3,3}+1}\\
\frac{-r_{3,1}^{2}c_{3}-r_{3,2}^{2}c_{3}-r_{3,3}^{2}c_{3}-r_{3,3}c_{3}}{r_{3,3}+1}
\end{array}\!)$`
`\underline{\tt o7}` : `$\texttt{Matrix}$ $(\texttt{frac}\ A)^{3}\,\longleftarrow \,(\texttt{frac}\ A)^{1}$`
`\underline{\tt i8}` : T = matrix{{0,-t_(2,0),t_(1,0)},{t_(2,0),0,-t_(0,0)},{-t_(1,0),t_(0,0),0}}
`\underline{\tt o8}` = `$\left(\!\begin{array}{ccc}
0&\frac{r_{3,1}^{2}c_{3}+r_{3,2}^{2}c_{3}+r_{3,3}^{2}c_{3}+r_{3,3}c_{3}}{r_{3,3}+1}&\frac{-r_{2,1}r_{3,1}c_{3}-r_{2,2}r_{3,2}c_{3}-r_{2,3}r_{3,3}c_{3}-r_{2,3}c_{3}}{r_{3,3}+1}\\
\frac{-r_{3,1}^{2}c_{3}-r_{3,2}^{2}c_{3}-r_{3,3}^{2}c_{3}-r_{3,3}c_{3}}{r_{3,3}+1}&0&\frac{r_{1,1}r_{3,1}c_{3}+r_{1,2}r_{3,2}c_{3}+r_{1,3}r_{3,3}c_{3}+r_{1,3}c_{3}}{r_{3,3}+1}\\
\frac{r_{2,1}r_{3,1}c_{3}+r_{2,2}r_{3,2}c_{3}+r_{2,3}r_{3,3}c_{3}+r_{2,3}c_{3}}{r_{3,3}+1}&\frac{-r_{1,1}r_{3,1}c_{3}-r_{1,2}r_{3,2}c_{3}-r_{1,3}r_{3,3}c_{3}-r_{1,3}c_{3}}{r_{3,3}+1}&0
\end{array}\!\right)$`
`\underline{\tt o8}` : `$\texttt{Matrix}$ $(\texttt{frac}\ A)^{3}\,\longleftarrow \,(\texttt{frac}\ A)^{3}$`
`\underline{\tt i9}` : F = transpose(R)*T
`\underline{\tt o9}` = `[very long output]`
`\underline{\tt o9}` : `$\texttt{Matrix}$ $(\texttt{frac}\ A)^{3}\,\longleftarrow \,(\texttt{frac}\ A)^{3}$`
`\underline{\tt i10}` : 
      --scale F so that its entries become polynomials
      Fscaled = sub((r_(3,3)+1)*F, A)
`\underline{\tt o10}` = `[very long output]`
`\underline{\tt o10}` : `$\texttt{Matrix}$ $A^{3}\,\longleftarrow \,A^{3}$`
`\underline{\tt i11}` : 
      --now use that R is a rotation matrix
      Id = matrix apply(3, i -> apply(3, j -> if i==j then 1 else 0))
`\underline{\tt o11}` = `$(\!\begin{array}{ccc}
1&0&0\\
0&1&0\\
0&0&1
\end{array}\!)$`
`\underline{\tt o11}` : `$\texttt{Matrix}$ ${\mathbb Z}^{3}\,\longleftarrow \,{\mathbb Z}^{3}$`
`\underline{\tt i12}` : SO3 = ideal flatten entries(transpose(R)*R-Id) + ideal((det R)-1)
`\underline{\tt o12}` = `$\texttt{ideal}{}(r_{1,1}^{2}+r_{2,1}^{2}+r_{3,1}^{2}-1,\,r_{1,1}r_{1,2}+r_{2,1}r_{2,2}+r_{3,1}r_{3,2},\,r_{1,1}r_{1,3}+r_{2,1}r_{2,3}+r_{3,1}r_{3,3},\,r_{1,1}r_{1,2}+r_{2,1}r_{2,2}+r_{3,1}r_{3,2},\,r_{1,2}^{2}+r_{2,2}^{2}+r_{3,2}^{2}-1,\,r_{1,2}r_{1,3}+r_{2,2}r_{2,3}+r_{3,2}r_{3,3},\,r_{1,1}r_{1,3}+r_{2,1}r_{2,3}+r_{3,1}r_{3,3},\,r_{1,2}r_{1,3}+r_{2,2}r_{2,3}+r_{3,2}r_{3,3},\,r_{1,3}^{2}+r_{2,3}^{2}+r_{3,3}^{2}-1,\,-r_{1,3}r_{2,2}r_{3,1}+r_{1,2}r_{2,3}r_{3,1}+r_{1,3}r_{2,1}r_{3,2}-r_{1,1}r_{2,3}r_{3,2}-r_{1,2}r_{2,1}r_{3,3}+r_{1,1}r_{2,2}r_{3,3}-1)$`
`\underline{\tt o12}` : `$\texttt{Ideal}$ of $A$`
`\underline{\tt i13}` : sub(Fscaled, A/SO3)
`\underline{\tt o13}` = `$\begin{array}{l}\{-1\}\vphantom{r_{1,2}c_{3}-r_{2,1}c_{3}r_{1,1}c_{3}+r_{2,2}c_{3}r_{3,2}c_{3}}\\\{-1\}\vphantom{-r_{1,1}c_{3}-r_{2,2}c_{3}r_{1,2}c_{3}-r_{2,1}c_{3}-r_{3,1}c_{3}}\\\{-1\}\vphantom{-r_{2,3}c_{3}r_{1,3}c_{3}0}\end{array}(\!\begin{array}{ccc}
\vphantom{\{-1\}}r_{1,2}c_{3}-r_{2,1}c_{3}&r_{1,1}c_{3}+r_{2,2}c_{3}&r_{3,2}c_{3}\\
\vphantom{\{-1\}}-r_{1,1}c_{3}-r_{2,2}c_{3}&r_{1,2}c_{3}-r_{2,1}c_{3}&-r_{3,1}c_{3}\\
\vphantom{\{-1\}}-r_{2,3}c_{3}&r_{1,3}c_{3}&0
\end{array}\!)$`
`\underline{\tt o13}` : `[very long output]`
`\underline{\tt i14}` : 
      
      --the case when c is given as a formula in terms of R: -
      clearAll
`\underline{\tt i15}` : A = QQ[r_(1,1)..r_(3,3),c_3]
`\underline{\tt o15}` = `$A$`
`\underline{\tt o15}` : `$\texttt{PolynomialRing}$`
`\underline{\tt i16}` : R = matrix apply(3, i -> apply(3, j -> r_(i+1,j+1)))
`\underline{\tt o16}` = `$(\!\begin{array}{ccc}
r_{1,1}&r_{1,2}&r_{1,3}\\
r_{2,1}&r_{2,2}&r_{2,3}\\
r_{3,1}&r_{3,2}&r_{3,3}
\end{array}\!)$`
`\underline{\tt o16}` : `$\texttt{Matrix}$ $A^{3}\,\longleftarrow \,A^{3}$`
`\underline{\tt i17}` : c_1 = r_(3,1)*c_3/(r_(3,3)-1)
`\underline{\tt o17}` = `$\frac{r_{3,1}c_{3}}{r_{3,3}-1}$`
`\underline{\tt o17}` : `$\texttt{frac}\ A$`
`\underline{\tt i18}` : c_2 = r_(3,2)*c_3/(r_(3,3)-1)
`\underline{\tt o18}` = `$\frac{r_{3,2}c_{3}}{r_{3,3}-1}$`
`\underline{\tt o18}` : `$\texttt{frac}\ A$`
`\underline{\tt i19}` : C = matrix{{c_1},{c_2},{c_3}}
`\underline{\tt o19}` = `$(\!\begin{array}{c}
\frac{r_{3,1}c_{3}}{r_{3,3}-1}\\
\frac{r_{3,2}c_{3}}{r_{3,3}-1}\\
c_{3}
\end{array}\!)$`
`\underline{\tt o19}` : `$\texttt{Matrix}$ $(\texttt{frac}\ A)^{3}\,\longleftarrow \,(\texttt{frac}\ A)^{1}$`
`\underline{\tt i20}` : t = -R*C
`\underline{\tt o20}` = `$\left(\!\begin{array}{c}
\frac{-r_{1,1}r_{3,1}c_{3}-r_{1,2}r_{3,2}c_{3}-r_{1,3}r_{3,3}c_{3}+r_{1,3}c_{3}}{r_{3,3}-1}\\
\frac{-r_{2,1}r_{3,1}c_{3}-r_{2,2}r_{3,2}c_{3}-r_{2,3}r_{3,3}c_{3}+r_{2,3}c_{3}}{r_{3,3}-1}\\
\frac{-r_{3,1}^{2}c_{3}-r_{3,2}^{2}c_{3}-r_{3,3}^{2}c_{3}+r_{3,3}c_{3}}{r_{3,3}-1}
\end{array}\!\right)$`
`\underline{\tt o20}` : `$\texttt{Matrix}$ $(\texttt{frac}\ A)^{3}\,\longleftarrow \,(\texttt{frac}\ A)^{1}$`
`\underline{\tt i21}` : T = matrix{{0,-t_(2,0),t_(1,0)},{t_(2,0),0,-t_(0,0)},{-t_(1,0),t_(0,0),0}}
`\underline{\tt o21}` = `$\left(\!\begin{array}{ccc}
0&\frac{r_{3,1}^{2}c_{3}+r_{3,2}^{2}c_{3}+r_{3,3}^{2}c_{3}-r_{3,3}c_{3}}{r_{3,3}-1}&\frac{-r_{2,1}r_{3,1}c_{3}-r_{2,2}r_{3,2}c_{3}-r_{2,3}r_{3,3}c_{3}+r_{2,3}c_{3}}{r_{3,3}-1}\\
\frac{-r_{3,1}^{2}c_{3}-r_{3,2}^{2}c_{3}-r_{3,3}^{2}c_{3}+r_{3,3}c_{3}}{r_{3,3}-1}&0&\frac{r_{1,1}r_{3,1}c_{3}+r_{1,2}r_{3,2}c_{3}+r_{1,3}r_{3,3}c_{3}-r_{1,3}c_{3}}{r_{3,3}-1}\\
\frac{r_{2,1}r_{3,1}c_{3}+r_{2,2}r_{3,2}c_{3}+r_{2,3}r_{3,3}c_{3}-r_{2,3}c_{3}}{r_{3,3}-1}&\frac{-r_{1,1}r_{3,1}c_{3}-r_{1,2}r_{3,2}c_{3}-r_{1,3}r_{3,3}c_{3}+r_{1,3}c_{3}}{r_{3,3}-1}&0
\end{array}\!\right)$`
`\underline{\tt o21}` : `$\texttt{Matrix}$ $(\texttt{frac}\ A)^{3}\,\longleftarrow \,(\texttt{frac}\ A)^{3}$`
`\underline{\tt i22}` : F = transpose(R)*T
`\underline{\tt o22}` = `[very long output]`
`\underline{\tt o22}` : `$\texttt{Matrix}$ $(\texttt{frac}\ A)^{3}\,\longleftarrow \,(\texttt{frac}\ A)^{3}$`
`\underline{\tt i23}` : 
      --scale F so that its entries become polynomials
      Fscaled = sub((r_(3,3)-1)*F, A)
`\underline{\tt o23}` = `[very long output]`
`\underline{\tt o23}` : `$\texttt{Matrix}$ $A^{3}\,\longleftarrow \,A^{3}$`
`\underline{\tt i24}` : 
      --now use that R is a rotation matrix
      Id = matrix apply(3, i -> apply(3, j -> if i==j then 1 else 0))
`\underline{\tt o24}` = `$\left(\!\begin{array}{ccc}
1&0&0\\
0&1&0\\
0&0&1
\end{array}\!\right)$`
`\underline{\tt o24}` : `$\texttt{Matrix}$ ${\mathbb Z}^{3}\,\longleftarrow \,{\mathbb Z}^{3}$`
`\underline{\tt i25}` : SO3 = ideal flatten entries(transpose(R)*R-Id) + ideal((det R)-1)
`\underline{\tt o25}` = `$\texttt{ideal}{}(r_{1,1}^{2}+r_{2,1}^{2}+r_{3,1}^{2}-1,\,r_{1,1}r_{1,2}+r_{2,1}r_{2,2}+r_{3,1}r_{3,2},\,r_{1,1}r_{1,3}+r_{2,1}r_{2,3}+r_{3,1}r_{3,3},\,r_{1,1}r_{1,2}+r_{2,1}r_{2,2}+r_{3,1}r_{3,2},\,r_{1,2}^{2}+r_{2,2}^{2}+r_{3,2}^{2}-1,\,r_{1,2}r_{1,3}+r_{2,2}r_{2,3}+r_{3,2}r_{3,3},\,r_{1,1}r_{1,3}+r_{2,1}r_{2,3}+r_{3,1}r_{3,3},\,r_{1,2}r_{1,3}+r_{2,2}r_{2,3}+r_{3,2}r_{3,3},\,r_{1,3}^{2}+r_{2,3}^{2}+r_{3,3}^{2}-1,\,-r_{1,3}r_{2,2}r_{3,1}+r_{1,2}r_{2,3}r_{3,1}+r_{1,3}r_{2,1}r_{3,2}-r_{1,1}r_{2,3}r_{3,2}-r_{1,2}r_{2,1}r_{3,3}+r_{1,1}r_{2,2}r_{3,3}-1)$`
`\underline{\tt o25}` : `$\texttt{Ideal}$ of $A$`
`\underline{\tt i26}` : sub(Fscaled, A/SO3)
`\underline{\tt o26}` = `$\begin{array}{l}\{-1\}\vphantom{-r_{1,2}c_{3}-r_{2,1}c_{3}r_{1,1}c_{3}-r_{2,2}c_{3}-r_{3,2}c_{3}}\\\{-1\}\vphantom{r_{1,1}c_{3}-r_{2,2}c_{3}r_{1,2}c_{3}+r_{2,1}c_{3}r_{3,1}c_{3}}\\\{-1\}\vphantom{-r_{2,3}c_{3}r_{1,3}c_{3}0}\end{array}\left(\!\begin{array}{ccc}
\vphantom{\{-1\}}-r_{1,2}c_{3}-r_{2,1}c_{3}&r_{1,1}c_{3}-r_{2,2}c_{3}&-r_{3,2}c_{3}\\
\vphantom{\{-1\}}r_{1,1}c_{3}-r_{2,2}c_{3}&r_{1,2}c_{3}+r_{2,1}c_{3}&r_{3,1}c_{3}\\
\vphantom{\{-1\}}-r_{2,3}c_{3}&r_{1,3}c_{3}&0
\end{array}\!\right)$`
`\underline{\tt o26}` : `[very long output]`
`\underline{\tt i27}` : 
      
      --now we assume that the top left 2x2 block of F is a scaling of an orthogonal matrix
      clearAll
`\underline{\tt i28}` : A = QQ[r_(1,1)..r_(3,3),c_1..c_3]
`\underline{\tt o28}` = `$A$`
`\underline{\tt o28}` : `$\texttt{PolynomialRing}$`
`\underline{\tt i29}` : R = matrix apply(3, i -> apply(3, j -> r_(i+1,j+1)))
`\underline{\tt o29}` = `$\left(\!\begin{array}{ccc}
r_{1,1}&r_{1,2}&r_{1,3}\\
r_{2,1}&r_{2,2}&r_{2,3}\\
r_{3,1}&r_{3,2}&r_{3,3}
\end{array}\!\right)$`
`\underline{\tt o29}` : `$\texttt{Matrix}$ $A^{3}\,\longleftarrow \,A^{3}$`
`\underline{\tt i30}` : C = matrix{{c_1},{c_2},{c_3}}
`\underline{\tt o30}` = `$\left(\!\begin{array}{c}
c_{1}\\
c_{2}\\
c_{3}
\end{array}\!\right)$`
`\underline{\tt o30}` : `$\texttt{Matrix}$ $A^{3}\,\longleftarrow \,A^{1}$`
`\underline{\tt i31}` : t = -R*C
`\underline{\tt o31}` = `$\left(\!\begin{array}{c}
-r_{1,1}c_{1}-r_{1,2}c_{2}-r_{1,3}c_{3}\\
-r_{2,1}c_{1}-r_{2,2}c_{2}-r_{2,3}c_{3}\\
-r_{3,1}c_{1}-r_{3,2}c_{2}-r_{3,3}c_{3}
\end{array}\!\right)$`
`\underline{\tt o31}` : `$\texttt{Matrix}$ $A^{3}\,\longleftarrow \,A^{1}$`
`\underline{\tt i32}` : T = matrix{{0,-t_(2,0),t_(1,0)},{t_(2,0),0,-t_(0,0)},{-t_(1,0),t_(0,0),0}}
`\underline{\tt o32}` = `$\left(\!\begin{array}{ccc}
0&r_{3,1}c_{1}+r_{3,2}c_{2}+r_{3,3}c_{3}&-r_{2,1}c_{1}-r_{2,2}c_{2}-r_{2,3}c_{3}\\
-r_{3,1}c_{1}-r_{3,2}c_{2}-r_{3,3}c_{3}&0&r_{1,1}c_{1}+r_{1,2}c_{2}+r_{1,3}c_{3}\\
r_{2,1}c_{1}+r_{2,2}c_{2}+r_{2,3}c_{3}&-r_{1,1}c_{1}-r_{1,2}c_{2}-r_{1,3}c_{3}&0
\end{array}\!\right)$`
`\underline{\tt o32}` : `$\texttt{Matrix}$ $A^{3}\,\longleftarrow \,A^{3}$`
`\underline{\tt i33}` : F = transpose(R)*T
`\underline{\tt o33}` = `[very long output]`
`\underline{\tt o33}` : `$\texttt{Matrix}$ $A^{3}\,\longleftarrow \,A^{3}$`
`\underline{\tt i34}` : F22 = F_{0,1}^{0,1} --the top left 2x2 block of F
`\underline{\tt o34}` = `$\begin{array}{l}\{-1\}\vphantom{r_{2,2}r_{3,1}c_{2}-r_{2,1}r_{3,2}c_{2}+r_{2,3}r_{3,1}c_{3}-r_{2,1}r_{3,3}c_{3}-r_{1,2}r_{3,1}c_{2}+r_{1,1}r_{3,2}c_{2}-r_{1,3}r_{3,1}c_{3}+r_{1,1}r_{3,3}c_{3}}\\\{-1\}\vphantom{-r_{2,2}r_{3,1}c_{1}+r_{2,1}r_{3,2}c_{1}+r_{2,3}r_{3,2}c_{3}-r_{2,2}r_{3,3}c_{3}r_{1,2}r_{3,1}c_{1}-r_{1,1}r_{3,2}c_{1}-r_{1,3}r_{3,2}c_{3}+r_{1,2}r_{3,3}c_{3}}\end{array}\left(\!\begin{array}{cc}
\vphantom{\{-1\}}r_{2,2}r_{3,1}c_{2}-r_{2,1}r_{3,2}c_{2}+r_{2,3}r_{3,1}c_{3}-r_{2,1}r_{3,3}c_{3}&-r_{1,2}r_{3,1}c_{2}+r_{1,1}r_{3,2}c_{2}-r_{1,3}r_{3,1}c_{3}+r_{1,1}r_{3,3}c_{3}\\
\vphantom{\{-1\}}-r_{2,2}r_{3,1}c_{1}+r_{2,1}r_{3,2}c_{1}+r_{2,3}r_{3,2}c_{3}-r_{2,2}r_{3,3}c_{3}&r_{1,2}r_{3,1}c_{1}-r_{1,1}r_{3,2}c_{1}-r_{1,3}r_{3,2}c_{3}+r_{1,2}r_{3,3}c_{3}
\end{array}\!\right)$`
`\underline{\tt o34}` : `$\texttt{Matrix}$ $A^{2}\,\longleftarrow \,A^{2}$`
`\underline{\tt i35}` : 
      --the following conditions encode that F22 is scaling of an orthogonal matrix of determina +1 resp. -1
      O2plus = ideal(F22_(0,0)-F22_(1,1), F22_(0,1)+F22_(1,0)) 
`\underline{\tt o35}` = `$\texttt{ideal}{}(-r_{1,2}r_{3,1}c_{1}+r_{1,1}r_{3,2}c_{1}+r_{2,2}r_{3,1}c_{2}-r_{2,1}r_{3,2}c_{2}+r_{2,3}r_{3,1}c_{3}+r_{1,3}r_{3,2}c_{3}-r_{1,2}r_{3,3}c_{3}-r_{2,1}r_{3,3}c_{3},\,-r_{2,2}r_{3,1}c_{1}+r_{2,1}r_{3,2}c_{1}-r_{1,2}r_{3,1}c_{2}+r_{1,1}r_{3,2}c_{2}-r_{1,3}r_{3,1}c_{3}+r_{2,3}r_{3,2}c_{3}+r_{1,1}r_{3,3}c_{3}-r_{2,2}r_{3,3}c_{3})$`
`\underline{\tt o35}` : `$\texttt{Ideal}$ of $A$`
`\underline{\tt i36}` : O2minus = ideal(F22_(0,0)+F22_(1,1), F22_(0,1)-F22_(1,0)) 
`\underline{\tt o36}` = `$\texttt{ideal}{}(r_{1,2}r_{3,1}c_{1}-r_{1,1}r_{3,2}c_{1}+r_{2,2}r_{3,1}c_{2}-r_{2,1}r_{3,2}c_{2}+r_{2,3}r_{3,1}c_{3}-r_{1,3}r_{3,2}c_{3}+r_{1,2}r_{3,3}c_{3}-r_{2,1}r_{3,3}c_{3},\,r_{2,2}r_{3,1}c_{1}-r_{2,1}r_{3,2}c_{1}-r_{1,2}r_{3,1}c_{2}+r_{1,1}r_{3,2}c_{2}-r_{1,3}r_{3,1}c_{3}-r_{2,3}r_{3,2}c_{3}+r_{1,1}r_{3,3}c_{3}+r_{2,2}r_{3,3}c_{3})$`
`\underline{\tt o36}` : `$\texttt{Ideal}$ of $A$`
`\underline{\tt i37}` : 
      --now we constrain R to be a rotation matrix
      Id = matrix apply(3, i -> apply(3, j -> if i==j then 1 else 0))
`\underline{\tt o37}` = `$(\!\begin{array}{ccc}
1&0&0\\
0&1&0\\
0&0&1
\end{array}\!)$`
`\underline{\tt o37}` : `$\texttt{Matrix}$ ${\mathbb Z}^{3}\,\longleftarrow \,{\mathbb Z}^{3}$`
`\underline{\tt i38}` : SO3 = ideal flatten entries(transpose(R)*R-Id) + ideal((det R)-1)
`\underline{\tt o38}` = `$\texttt{ideal}{}(r_{1,1}^{2}+r_{2,1}^{2}+r_{3,1}^{2}-1,\,r_{1,1}r_{1,2}+r_{2,1}r_{2,2}+r_{3,1}r_{3,2},\,r_{1,1}r_{1,3}+r_{2,1}r_{2,3}+r_{3,1}r_{3,3},\,r_{1,1}r_{1,2}+r_{2,1}r_{2,2}+r_{3,1}r_{3,2},\,r_{1,2}^{2}+r_{2,2}^{2}+r_{3,2}^{2}-1,\,r_{1,2}r_{1,3}+r_{2,2}r_{2,3}+r_{3,2}r_{3,3},\,r_{1,1}r_{1,3}+r_{2,1}r_{2,3}+r_{3,1}r_{3,3},\,r_{1,2}r_{1,3}+r_{2,2}r_{2,3}+r_{3,2}r_{3,3},\,r_{1,3}^{2}+r_{2,3}^{2}+r_{3,3}^{2}-1,\,-r_{1,3}r_{2,2}r_{3,1}+r_{1,2}r_{2,3}r_{3,1}+r_{1,3}r_{2,1}r_{3,2}-r_{1,1}r_{2,3}r_{3,2}-r_{1,2}r_{2,1}r_{3,3}+r_{1,1}r_{2,2}r_{3,3}-1)$`
`\underline{\tt o38}` : `$\texttt{Ideal}$ of $A$`
`\underline{\tt i39}` : 
      --this function eats a polynomial or a rational function and returns the numerator 
      num = eq -> if (class class eq === FractionField) then numerator eq else eq
`\underline{\tt o39}` = `$\texttt{num}$`
`\underline{\tt o39}` : `$\texttt{FunctionClosure}$`
`\underline{\tt i40}` : 
      --we start by analyzing SO(2)
      decPlus = decompose(O2plus+SO3)
`\underline{\tt o40}` = `$\{\texttt{ideal}{}(r_{3,3}-1,\,r_{3,2},\,r_{3,1},\,r_{2,3},\,r_{1,3},\,r_{1,2}+r_{2,1},\,r_{1,1}-r_{2,2},\,r_{2,1}^{2}+r_{2,2}^{2}-1),\:\texttt{ideal}{}(r_{3,3}-1,\,r_{2,3}c_{1}+r_{1,3}c_{2}-r_{1,2}c_{3}-r_{2,1}c_{3},\,r_{1,3}c_{1}-r_{2,3}c_{2}-r_{1,1}c_{3}+r_{2,2}c_{3},\,r_{2,3}r_{3,2}+r_{1,1}-r_{2,2},\,r_{1,3}r_{3,2}-r_{1,2}-r_{2,1},\,r_{3,1}^{2}+r_{3,2}^{2},\,r_{2,3}r_{3,1}-r_{1,2}-r_{2,1},\,r_{2,2}r_{3,1}-r_{2,1}r_{3,2}+r_{1,3},\,r_{2,1}r_{3,1}+r_{2,2}r_{3,2}+r_{2,3},\,r_{1,3}r_{3,1}-r_{1,1}+r_{2,2},\,r_{1,2}r_{3,1}-r_{1,1}r_{3,2}-r_{2,3},\,r_{1,1}r_{3,1}+r_{1,2}r_{3,2}+r_{1,3},\,r_{1,3}r_{2,2}-r_{1,2}r_{2,3}+r_{3,1},\,r_{2,1}^{2}+r_{2,2}^{2}+r_{2,3}^{2}-1,\,r_{1,3}r_{2,1}-r_{1,1}r_{2,3}-r_{3,2},\,r_{1,2}r_{2,1}-r_{1,1}r_{2,2}+1,\,r_{1,1}r_{2,1}+r_{1,2}r_{2,2}+r_{1,3}r_{2,3},\,r_{1,3}^{2}+r_{2,3}^{2},\,r_{1,2}r_{1,3}+r_{2,2}r_{2,3}+r_{3,2},\,r_{1,1}r_{1,3}+r_{2,1}r_{2,3}+r_{3,1},\,r_{1,2}^{2}+r_{2,2}^{2}+r_{3,2}^{2}-1,\,r_{1,1}r_{1,2}+r_{2,1}r_{2,2}+r_{3,1}r_{3,2},\,r_{1,1}^{2}-r_{2,2}^{2}-r_{2,3}^{2}-r_{3,2}^{2},\,r_{3,1}c_{1}c_{3}+r_{3,2}c_{2}c_{3}-c_{1}^{2}-c_{2}^{2}),\:\texttt{ideal}{}(r_{3,3}c_{2}-r_{3,2}c_{3}+c_{2},\,r_{3,3}c_{1}-r_{3,1}c_{3}+c_{1},\,r_{3,2}c_{1}-r_{3,1}c_{2},\,r_{3,1}c_{1}+r_{3,2}c_{2}+r_{3,3}c_{3}-c_{3},\,r_{2,3}c_{1}+r_{1,3}c_{2}-r_{1,2}c_{3}-r_{2,1}c_{3},\,r_{1,3}c_{1}-r_{2,3}c_{2}-r_{1,1}c_{3}+r_{2,2}c_{3},\,r_{1,2}c_{1}-r_{2,1}c_{1}-r_{1,1}c_{2}-r_{2,2}c_{2}-r_{2,3}c_{3},\,r_{1,1}c_{1}+r_{2,2}c_{1}+r_{1,2}c_{2}-r_{2,1}c_{2}+r_{1,3}c_{3},\,r_{2,3}r_{3,2}-r_{2,2}r_{3,3}+r_{1,1},\,r_{1,3}r_{3,2}-r_{1,2}r_{3,3}-r_{2,1},\,r_{3,1}^{2}+r_{3,2}^{2}+r_{3,3}^{2}-1,\,r_{2,3}r_{3,1}-r_{2,1}r_{3,3}-r_{1,2},\,r_{2,2}r_{3,1}-r_{2,1}r_{3,2}+r_{1,3},\,r_{2,1}r_{3,1}+r_{2,2}r_{3,2}+r_{2,3}r_{3,3},\,r_{1,3}r_{3,1}-r_{1,1}r_{3,3}+r_{2,2},\,r_{1,2}r_{3,1}-r_{1,1}r_{3,2}-r_{2,3},\,r_{1,1}r_{3,1}+r_{1,2}r_{3,2}+r_{1,3}r_{3,3},\,r_{1,3}r_{2,2}-r_{1,2}r_{2,3}+r_{3,1},\,r_{2,1}^{2}+r_{2,2}^{2}+r_{2,3}^{2}-1,\,r_{1,3}r_{2,1}-r_{1,1}r_{2,3}-r_{3,2},\,r_{1,2}r_{2,1}-r_{1,1}r_{2,2}+r_{3,3},\,r_{1,1}r_{2,1}+r_{1,2}r_{2,2}+r_{1,3}r_{2,3},\,r_{1,3}^{2}+r_{2,3}^{2}+r_{3,3}^{2}-1,\,r_{1,2}r_{1,3}+r_{2,2}r_{2,3}+r_{3,2}r_{3,3},\,r_{1,1}r_{1,3}+r_{2,1}r_{2,3}+r_{3,1}r_{3,3},\,r_{1,2}^{2}+r_{2,2}^{2}+r_{3,2}^{2}-1,\,r_{1,1}r_{1,2}+r_{2,1}r_{2,2}+r_{3,1}r_{3,2},\,r_{1,1}^{2}-r_{2,2}^{2}-r_{2,3}^{2}-r_{3,2}^{2}-r_{3,3}^{2}+1)\}$`
`\underline{\tt o40}` : `$\texttt{List}$`
`\underline{\tt i41}` : --this ideal has 3 components:
      #decPlus
`\underline{\tt o41}` = `$3$`
`\underline{\tt i42}` : --the first 2 have the last entry of R equal to 1:
      decPlus#0
`\underline{\tt o42}` = `$\texttt{ideal}{}(r_{3,3}-1,\,r_{3,2},\,r_{3,1},\,r_{2,3},\,r_{1,3},\,r_{1,2}+r_{2,1},\,r_{1,1}-r_{2,2},\,r_{2,1}^{2}+r_{2,2}^{2}-1)$`
`\underline{\tt o42}` : `$\texttt{Ideal}$ of $A$`
`\underline{\tt i43}` : decPlus#1
`\underline{\tt o43}` = `$\texttt{ideal}{}(r_{3,3}-1,\,r_{2,3}c_{1}+r_{1,3}c_{2}-r_{1,2}c_{3}-r_{2,1}c_{3},\,r_{1,3}c_{1}-r_{2,3}c_{2}-r_{1,1}c_{3}+r_{2,2}c_{3},\,r_{2,3}r_{3,2}+r_{1,1}-r_{2,2},\,r_{1,3}r_{3,2}-r_{1,2}-r_{2,1},\,r_{3,1}^{2}+r_{3,2}^{2},\,r_{2,3}r_{3,1}-r_{1,2}-r_{2,1},\,r_{2,2}r_{3,1}-r_{2,1}r_{3,2}+r_{1,3},\,r_{2,1}r_{3,1}+r_{2,2}r_{3,2}+r_{2,3},\,r_{1,3}r_{3,1}-r_{1,1}+r_{2,2},\,r_{1,2}r_{3,1}-r_{1,1}r_{3,2}-r_{2,3},\,r_{1,1}r_{3,1}+r_{1,2}r_{3,2}+r_{1,3},\,r_{1,3}r_{2,2}-r_{1,2}r_{2,3}+r_{3,1},\,r_{2,1}^{2}+r_{2,2}^{2}+r_{2,3}^{2}-1,\,r_{1,3}r_{2,1}-r_{1,1}r_{2,3}-r_{3,2},\,r_{1,2}r_{2,1}-r_{1,1}r_{2,2}+1,\,r_{1,1}r_{2,1}+r_{1,2}r_{2,2}+r_{1,3}r_{2,3},\,r_{1,3}^{2}+r_{2,3}^{2},\,r_{1,2}r_{1,3}+r_{2,2}r_{2,3}+r_{3,2},\,r_{1,1}r_{1,3}+r_{2,1}r_{2,3}+r_{3,1},\,r_{1,2}^{2}+r_{2,2}^{2}+r_{3,2}^{2}-1,\,r_{1,1}r_{1,2}+r_{2,1}r_{2,2}+r_{3,1}r_{3,2},\,r_{1,1}^{2}-r_{2,2}^{2}-r_{2,3}^{2}-r_{3,2}^{2},\,r_{3,1}c_{1}c_{3}+r_{3,2}c_{2}c_{3}-c_{1}^{2}-c_{2}^{2})$`
`\underline{\tt o43}` : `$\texttt{Ideal}$ of $A$`
`\underline{\tt i44}` : --we investigate the last component:
      I = decPlus#2
`\underline{\tt o44}` = `$\texttt{ideal}{}(r_{3,3}c_{2}-r_{3,2}c_{3}+c_{2},\,r_{3,3}c_{1}-r_{3,1}c_{3}+c_{1},\,r_{3,2}c_{1}-r_{3,1}c_{2},\,r_{3,1}c_{1}+r_{3,2}c_{2}+r_{3,3}c_{3}-c_{3},\,r_{2,3}c_{1}+r_{1,3}c_{2}-r_{1,2}c_{3}-r_{2,1}c_{3},\,r_{1,3}c_{1}-r_{2,3}c_{2}-r_{1,1}c_{3}+r_{2,2}c_{3},\,r_{1,2}c_{1}-r_{2,1}c_{1}-r_{1,1}c_{2}-r_{2,2}c_{2}-r_{2,3}c_{3},\,r_{1,1}c_{1}+r_{2,2}c_{1}+r_{1,2}c_{2}-r_{2,1}c_{2}+r_{1,3}c_{3},\,r_{2,3}r_{3,2}-r_{2,2}r_{3,3}+r_{1,1},\,r_{1,3}r_{3,2}-r_{1,2}r_{3,3}-r_{2,1},\,r_{3,1}^{2}+r_{3,2}^{2}+r_{3,3}^{2}-1,\,r_{2,3}r_{3,1}-r_{2,1}r_{3,3}-r_{1,2},\,r_{2,2}r_{3,1}-r_{2,1}r_{3,2}+r_{1,3},\,r_{2,1}r_{3,1}+r_{2,2}r_{3,2}+r_{2,3}r_{3,3},\,r_{1,3}r_{3,1}-r_{1,1}r_{3,3}+r_{2,2},\,r_{1,2}r_{3,1}-r_{1,1}r_{3,2}-r_{2,3},\,r_{1,1}r_{3,1}+r_{1,2}r_{3,2}+r_{1,3}r_{3,3},\,r_{1,3}r_{2,2}-r_{1,2}r_{2,3}+r_{3,1},\,r_{2,1}^{2}+r_{2,2}^{2}+r_{2,3}^{2}-1,\,r_{1,3}r_{2,1}-r_{1,1}r_{2,3}-r_{3,2},\,r_{1,2}r_{2,1}-r_{1,1}r_{2,2}+r_{3,3},\,r_{1,1}r_{2,1}+r_{1,2}r_{2,2}+r_{1,3}r_{2,3},\,r_{1,3}^{2}+r_{2,3}^{2}+r_{3,3}^{2}-1,\,r_{1,2}r_{1,3}+r_{2,2}r_{2,3}+r_{3,2}r_{3,3},\,r_{1,1}r_{1,3}+r_{2,1}r_{2,3}+r_{3,1}r_{3,3},\,r_{1,2}^{2}+r_{2,2}^{2}+r_{3,2}^{2}-1,\,r_{1,1}r_{1,2}+r_{2,1}r_{2,2}+r_{3,1}r_{3,2},\,r_{1,1}^{2}-r_{2,2}^{2}-r_{2,3}^{2}-r_{3,2}^{2}-r_{3,3}^{2}+1)$`
`\underline{\tt o44}` : `$\texttt{Ideal}$ of $A$`
`\underline{\tt i45}` : L = flatten entries gens I
`\underline{\tt o45}` = `$\{r_{3,3}c_{2}-r_{3,2}c_{3}+c_{2},\:r_{3,3}c_{1}-r_{3,1}c_{3}+c_{1},\:r_{3,2}c_{1}-r_{3,1}c_{2},\:r_{3,1}c_{1}+r_{3,2}c_{2}+r_{3,3}c_{3}-c_{3},\:r_{2,3}c_{1}+r_{1,3}c_{2}-r_{1,2}c_{3}-r_{2,1}c_{3},\:r_{1,3}c_{1}-r_{2,3}c_{2}-r_{1,1}c_{3}+r_{2,2}c_{3},\:r_{1,2}c_{1}-r_{2,1}c_{1}-r_{1,1}c_{2}-r_{2,2}c_{2}-r_{2,3}c_{3},\:r_{1,1}c_{1}+r_{2,2}c_{1}+r_{1,2}c_{2}-r_{2,1}c_{2}+r_{1,3}c_{3},\:r_{2,3}r_{3,2}-r_{2,2}r_{3,3}+r_{1,1},\:r_{1,3}r_{3,2}-r_{1,2}r_{3,3}-r_{2,1},\:r_{3,1}^{2}+r_{3,2}^{2}+r_{3,3}^{2}-1,\:r_{2,3}r_{3,1}-r_{2,1}r_{3,3}-r_{1,2},\:r_{2,2}r_{3,1}-r_{2,1}r_{3,2}+r_{1,3},\:r_{2,1}r_{3,1}+r_{2,2}r_{3,2}+r_{2,3}r_{3,3},\:r_{1,3}r_{3,1}-r_{1,1}r_{3,3}+r_{2,2},\:r_{1,2}r_{3,1}-r_{1,1}r_{3,2}-r_{2,3},\:r_{1,1}r_{3,1}+r_{1,2}r_{3,2}+r_{1,3}r_{3,3},\:r_{1,3}r_{2,2}-r_{1,2}r_{2,3}+r_{3,1},\:r_{2,1}^{2}+r_{2,2}^{2}+r_{2,3}^{2}-1,\:r_{1,3}r_{2,1}-r_{1,1}r_{2,3}-r_{3,2},\:r_{1,2}r_{2,1}-r_{1,1}r_{2,2}+r_{3,3},\:r_{1,1}r_{2,1}+r_{1,2}r_{2,2}+r_{1,3}r_{2,3},\:r_{1,3}^{2}+r_{2,3}^{2}+r_{3,3}^{2}-1,\:r_{1,2}r_{1,3}+r_{2,2}r_{2,3}+r_{3,2}r_{3,3},\:r_{1,1}r_{1,3}+r_{2,1}r_{2,3}+r_{3,1}r_{3,3},\:r_{1,2}^{2}+r_{2,2}^{2}+r_{3,2}^{2}-1,\:r_{1,1}r_{1,2}+r_{2,1}r_{2,2}+r_{3,1}r_{3,2},\:r_{1,1}^{2}-r_{2,2}^{2}-r_{2,3}^{2}-r_{3,2}^{2}-r_{3,3}^{2}+1\}$`
`\underline{\tt o45}` : `$\texttt{List}$`
`\underline{\tt i46}` : --by investigating the first 2 equations, we see that t_1 and t_2 can be expressed in terms of R and t_3
      --(if the last entry of R is not equal to -1)
      --we substitute these expressions into all equations:
      Lsub = apply(L, eq -> sub(eq, {c_2 => (r_(3,2)*c_3)/(r_(3,3)+1), c_1 => (r_(3,1)*c_3)/(r_(3,3)+1)}))
`\underline{\tt o46}` = `$\{0,\\0,\\0,\\\frac{r_{3,1}^{2}c_{3}+r_{3,2}^{2}c_{3}+r_{3,3}^{2}c_{3}-c_{3}}{r_{3,3}+1},\\\frac{r_{2,3}r_{3,1}c_{3}+r_{1,3}r_{3,2}c_{3}-r_{1,2}r_{3,3}c_{3}-r_{2,1}r_{3,3}c_{3}-r_{1,2}c_{3}-r_{2,1}c_{3}}{r_{3,3}+1},\\\frac{r_{1,3}r_{3,1}c_{3}-r_{2,3}r_{3,2}c_{3}-r_{1,1}r_{3,3}c_{3}+r_{2,2}r_{3,3}c_{3}-r_{1,1}c_{3}+r_{2,2}c_{3}}{r_{3,3}+1},\\\frac{r_{1,2}r_{3,1}c_{3}-r_{2,1}r_{3,1}c_{3}-r_{1,1}r_{3,2}c_{3}-r_{2,2}r_{3,2}c_{3}-r_{2,3}r_{3,3}c_{3}-r_{2,3}c_{3}}{r_{3,3}+1},\\\frac{r_{1,1}r_{3,1}c_{3}+r_{2,2}r_{3,1}c_{3}+r_{1,2}r_{3,2}c_{3}-r_{2,1}r_{3,2}c_{3}+r_{1,3}r_{3,3}c_{3}+r_{1,3}c_{3}}{r_{3,3}+1},\\r_{2,3}r_{3,2}-r_{2,2}r_{3,3}+r_{1,1},\\r_{1,3}r_{3,2}-r_{1,2}r_{3,3}-r_{2,1},\\r_{3,1}^{2}+r_{3,2}^{2}+r_{3,3}^{2}-1,\\r_{2,3}r_{3,1}-r_{2,1}r_{3,3}-r_{1,2},\\r_{2,2}r_{3,1}-r_{2,1}r_{3,2}+r_{1,3},\\r_{2,1}r_{3,1}+r_{2,2}r_{3,2}+r_{2,3}r_{3,3},\\r_{1,3}r_{3,1}-r_{1,1}r_{3,3}+r_{2,2},\\r_{1,2}r_{3,1}-r_{1,1}r_{3,2}-r_{2,3},\\r_{1,1}r_{3,1}+r_{1,2}r_{3,2}+r_{1,3}r_{3,3},\\r_{1,3}r_{2,2}-r_{1,2}r_{2,3}+r_{3,1},\\r_{2,1}^{2}+r_{2,2}^{2}+r_{2,3}^{2}-1,\\r_{1,3}r_{2,1}-r_{1,1}r_{2,3}-r_{3,2},\\r_{1,2}r_{2,1}-r_{1,1}r_{2,2}+r_{3,3},\\r_{1,1}r_{2,1}+r_{1,2}r_{2,2}+r_{1,3}r_{2,3},\\r_{1,3}^{2}+r_{2,3}^{2}+r_{3,3}^{2}-1,\\r_{1,2}r_{1,3}+r_{2,2}r_{2,3}+r_{3,2}r_{3,3},\\r_{1,1}r_{1,3}+r_{2,1}r_{2,3}+r_{3,1}r_{3,3},\\r_{1,2}^{2}+r_{2,2}^{2}+r_{3,2}^{2}-1,\\r_{1,1}r_{1,2}+r_{2,1}r_{2,2}+r_{3,1}r_{3,2},\\r_{1,1}^{2}-r_{2,2}^{2}-r_{2,3}^{2}-r_{3,2}^{2}-r_{3,3}^{2}+1\}$`
`\underline{\tt o46}` : `$\texttt{List}$`
`\underline{\tt i47}` : --now we see that we obtained again the ideal of SO(3), meaning that there are no further constraints on R and t besides the formulas for t_1 and t_2 above 
      SO3 == ideal apply(Lsub, eq -> num eq)
`\underline{\tt o47}` = `$\texttt{true}$`
`\underline{\tt i48}` : --hence, for every R with last entry not equal -1, we find exactly one solution in t (up to scaling)
      
      --finally, we analyze the negative component of O(2)
      decMinus = decompose(O2minus+SO3)
`\underline{\tt o48}` = `$\{\texttt{ideal}{}(r_{3,3}+1,\,r_{3,2},\,r_{3,1},\,r_{2,3},\,r_{1,3},\,r_{1,2}-r_{2,1},\,r_{1,1}+r_{2,2},\,r_{2,1}^{2}+r_{2,2}^{2}-1),\:\texttt{ideal}{}(r_{3,3}+1,\,r_{2,3}c_{1}-r_{1,3}c_{2}+r_{1,2}c_{3}-r_{2,1}c_{3},\,r_{1,3}c_{1}+r_{2,3}c_{2}-r_{1,1}c_{3}-r_{2,2}c_{3},\,r_{2,3}r_{3,2}+r_{1,1}+r_{2,2},\,r_{1,3}r_{3,2}+r_{1,2}-r_{2,1},\,r_{3,1}^{2}+r_{3,2}^{2},\,r_{2,3}r_{3,1}-r_{1,2}+r_{2,1},\,r_{2,2}r_{3,1}-r_{2,1}r_{3,2}+r_{1,3},\,r_{2,1}r_{3,1}+r_{2,2}r_{3,2}-r_{2,3},\,r_{1,3}r_{3,1}+r_{1,1}+r_{2,2},\,r_{1,2}r_{3,1}-r_{1,1}r_{3,2}-r_{2,3},\,r_{1,1}r_{3,1}+r_{1,2}r_{3,2}-r_{1,3},\,r_{1,3}r_{2,2}-r_{1,2}r_{2,3}+r_{3,1},\,r_{2,1}^{2}+r_{2,2}^{2}+r_{2,3}^{2}-1,\,r_{1,3}r_{2,1}-r_{1,1}r_{2,3}-r_{3,2},\,r_{1,2}r_{2,1}-r_{1,1}r_{2,2}-1,\,r_{1,1}r_{2,1}+r_{1,2}r_{2,2}+r_{1,3}r_{2,3},\,r_{1,3}^{2}+r_{2,3}^{2},\,r_{1,2}r_{1,3}+r_{2,2}r_{2,3}-r_{3,2},\,r_{1,1}r_{1,3}+r_{2,1}r_{2,3}-r_{3,1},\,r_{1,2}^{2}+r_{2,2}^{2}+r_{3,2}^{2}-1,\,r_{1,1}r_{1,2}+r_{2,1}r_{2,2}+r_{3,1}r_{3,2},\,r_{1,1}^{2}-r_{2,2}^{2}-r_{2,3}^{2}-r_{3,2}^{2},\,r_{3,1}c_{1}c_{3}+r_{3,2}c_{2}c_{3}+c_{1}^{2}+c_{2}^{2}),\:\texttt{ideal}{}(r_{3,3}c_{2}-r_{3,2}c_{3}-c_{2},\,r_{3,3}c_{1}-r_{3,1}c_{3}-c_{1},\,r_{3,2}c_{1}-r_{3,1}c_{2},\,r_{3,1}c_{1}+r_{3,2}c_{2}+r_{3,3}c_{3}+c_{3},\,r_{2,3}c_{1}-r_{1,3}c_{2}+r_{1,2}c_{3}-r_{2,1}c_{3},\,r_{1,3}c_{1}+r_{2,3}c_{2}-r_{1,1}c_{3}-r_{2,2}c_{3},\,r_{1,2}c_{1}+r_{2,1}c_{1}-r_{1,1}c_{2}+r_{2,2}c_{2}+r_{2,3}c_{3},\,r_{1,1}c_{1}-r_{2,2}c_{1}+r_{1,2}c_{2}+r_{2,1}c_{2}+r_{1,3}c_{3},\,r_{2,3}r_{3,2}-r_{2,2}r_{3,3}+r_{1,1},\,r_{1,3}r_{3,2}-r_{1,2}r_{3,3}-r_{2,1},\,r_{3,1}^{2}+r_{3,2}^{2}+r_{3,3}^{2}-1,\,r_{2,3}r_{3,1}-r_{2,1}r_{3,3}-r_{1,2},\,r_{2,2}r_{3,1}-r_{2,1}r_{3,2}+r_{1,3},\,r_{2,1}r_{3,1}+r_{2,2}r_{3,2}+r_{2,3}r_{3,3},\,r_{1,3}r_{3,1}-r_{1,1}r_{3,3}+r_{2,2},\,r_{1,2}r_{3,1}-r_{1,1}r_{3,2}-r_{2,3},\,r_{1,1}r_{3,1}+r_{1,2}r_{3,2}+r_{1,3}r_{3,3},\,r_{1,3}r_{2,2}-r_{1,2}r_{2,3}+r_{3,1},\,r_{2,1}^{2}+r_{2,2}^{2}+r_{2,3}^{2}-1,\,r_{1,3}r_{2,1}-r_{1,1}r_{2,3}-r_{3,2},\,r_{1,2}r_{2,1}-r_{1,1}r_{2,2}+r_{3,3},\,r_{1,1}r_{2,1}+r_{1,2}r_{2,2}+r_{1,3}r_{2,3},\,r_{1,3}^{2}+r_{2,3}^{2}+r_{3,3}^{2}-1,\,r_{1,2}r_{1,3}+r_{2,2}r_{2,3}+r_{3,2}r_{3,3},\,r_{1,1}r_{1,3}+r_{2,1}r_{2,3}+r_{3,1}r_{3,3},\,r_{1,2}^{2}+r_{2,2}^{2}+r_{3,2}^{2}-1,\,r_{1,1}r_{1,2}+r_{2,1}r_{2,2}+r_{3,1}r_{3,2},\,r_{1,1}^{2}-r_{2,2}^{2}-r_{2,3}^{2}-r_{3,2}^{2}-r_{3,3}^{2}+1)\}$`
`\underline{\tt o48}` : `$\texttt{List}$`
`\underline{\tt i49}` : --this ideal has 3 components:
      #decMinus
`\underline{\tt o49}` = `$3$`
`\underline{\tt i50}` : --the first 2 have the last entry of R equal to -1:
      decMinus#0
`\underline{\tt o50}` = `$\texttt{ideal}{}(r_{3,3}+1,\,r_{3,2},\,r_{3,1},\,r_{2,3},\,r_{1,3},\,r_{1,2}-r_{2,1},\,r_{1,1}+r_{2,2},\,r_{2,1}^{2}+r_{2,2}^{2}-1)$`
`\underline{\tt o50}` : `$\texttt{Ideal}$ of $A$`
`\underline{\tt i51}` : decMinus#1
`\underline{\tt o51}` = `$\texttt{ideal}{}(r_{3,3}+1,\,r_{2,3}c_{1}-r_{1,3}c_{2}+r_{1,2}c_{3}-r_{2,1}c_{3},\,r_{1,3}c_{1}+r_{2,3}c_{2}-r_{1,1}c_{3}-r_{2,2}c_{3},\,r_{2,3}r_{3,2}+r_{1,1}+r_{2,2},\,r_{1,3}r_{3,2}+r_{1,2}-r_{2,1},\,r_{3,1}^{2}+r_{3,2}^{2},\,r_{2,3}r_{3,1}-r_{1,2}+r_{2,1},\,r_{2,2}r_{3,1}-r_{2,1}r_{3,2}+r_{1,3},\,r_{2,1}r_{3,1}+r_{2,2}r_{3,2}-r_{2,3},\,r_{1,3}r_{3,1}+r_{1,1}+r_{2,2},\,r_{1,2}r_{3,1}-r_{1,1}r_{3,2}-r_{2,3},\,r_{1,1}r_{3,1}+r_{1,2}r_{3,2}-r_{1,3},\,r_{1,3}r_{2,2}-r_{1,2}r_{2,3}+r_{3,1},\,r_{2,1}^{2}+r_{2,2}^{2}+r_{2,3}^{2}-1,\,r_{1,3}r_{2,1}-r_{1,1}r_{2,3}-r_{3,2},\,r_{1,2}r_{2,1}-r_{1,1}r_{2,2}-1,\,r_{1,1}r_{2,1}+r_{1,2}r_{2,2}+r_{1,3}r_{2,3},\,r_{1,3}^{2}+r_{2,3}^{2},\,r_{1,2}r_{1,3}+r_{2,2}r_{2,3}-r_{3,2},\,r_{1,1}r_{1,3}+r_{2,1}r_{2,3}-r_{3,1},\,r_{1,2}^{2}+r_{2,2}^{2}+r_{3,2}^{2}-1,\,r_{1,1}r_{1,2}+r_{2,1}r_{2,2}+r_{3,1}r_{3,2},\,r_{1,1}^{2}-r_{2,2}^{2}-r_{2,3}^{2}-r_{3,2}^{2},\,r_{3,1}c_{1}c_{3}+r_{3,2}c_{2}c_{3}+c_{1}^{2}+c_{2}^{2})$`
`\underline{\tt o51}` : `$\texttt{Ideal}$ of $A$`
`\underline{\tt i52}` : --we investigate the last component:
      I = decMinus#2
`\underline{\tt o52}` = `$\texttt{ideal}{}(r_{3,3}c_{2}-r_{3,2}c_{3}-c_{2},\,r_{3,3}c_{1}-r_{3,1}c_{3}-c_{1},\,r_{3,2}c_{1}-r_{3,1}c_{2},\,r_{3,1}c_{1}+r_{3,2}c_{2}+r_{3,3}c_{3}+c_{3},\,r_{2,3}c_{1}-r_{1,3}c_{2}+r_{1,2}c_{3}-r_{2,1}c_{3},\,r_{1,3}c_{1}+r_{2,3}c_{2}-r_{1,1}c_{3}-r_{2,2}c_{3},\,r_{1,2}c_{1}+r_{2,1}c_{1}-r_{1,1}c_{2}+r_{2,2}c_{2}+r_{2,3}c_{3},\,r_{1,1}c_{1}-r_{2,2}c_{1}+r_{1,2}c_{2}+r_{2,1}c_{2}+r_{1,3}c_{3},\,r_{2,3}r_{3,2}-r_{2,2}r_{3,3}+r_{1,1},\,r_{1,3}r_{3,2}-r_{1,2}r_{3,3}-r_{2,1},\,r_{3,1}^{2}+r_{3,2}^{2}+r_{3,3}^{2}-1,\,r_{2,3}r_{3,1}-r_{2,1}r_{3,3}-r_{1,2},\,r_{2,2}r_{3,1}-r_{2,1}r_{3,2}+r_{1,3},\,r_{2,1}r_{3,1}+r_{2,2}r_{3,2}+r_{2,3}r_{3,3},\,r_{1,3}r_{3,1}-r_{1,1}r_{3,3}+r_{2,2},\,r_{1,2}r_{3,1}-r_{1,1}r_{3,2}-r_{2,3},\,r_{1,1}r_{3,1}+r_{1,2}r_{3,2}+r_{1,3}r_{3,3},\,r_{1,3}r_{2,2}-r_{1,2}r_{2,3}+r_{3,1},\,r_{2,1}^{2}+r_{2,2}^{2}+r_{2,3}^{2}-1,\,r_{1,3}r_{2,1}-r_{1,1}r_{2,3}-r_{3,2},\,r_{1,2}r_{2,1}-r_{1,1}r_{2,2}+r_{3,3},\,r_{1,1}r_{2,1}+r_{1,2}r_{2,2}+r_{1,3}r_{2,3},\,r_{1,3}^{2}+r_{2,3}^{2}+r_{3,3}^{2}-1,\,r_{1,2}r_{1,3}+r_{2,2}r_{2,3}+r_{3,2}r_{3,3},\,r_{1,1}r_{1,3}+r_{2,1}r_{2,3}+r_{3,1}r_{3,3},\,r_{1,2}^{2}+r_{2,2}^{2}+r_{3,2}^{2}-1,\,r_{1,1}r_{1,2}+r_{2,1}r_{2,2}+r_{3,1}r_{3,2},\,r_{1,1}^{2}-r_{2,2}^{2}-r_{2,3}^{2}-r_{3,2}^{2}-r_{3,3}^{2}+1)$`
`\underline{\tt o52}` : `$\texttt{Ideal}$ of $A$`
`\underline{\tt i53}` : L = flatten entries gens I
`\underline{\tt o53}` = `$\{r_{3,3}c_{2}-r_{3,2}c_{3}-c_{2},\:r_{3,3}c_{1}-r_{3,1}c_{3}-c_{1},\:r_{3,2}c_{1}-r_{3,1}c_{2},\:r_{3,1}c_{1}+r_{3,2}c_{2}+r_{3,3}c_{3}+c_{3},\:r_{2,3}c_{1}-r_{1,3}c_{2}+r_{1,2}c_{3}-r_{2,1}c_{3},\:r_{1,3}c_{1}+r_{2,3}c_{2}-r_{1,1}c_{3}-r_{2,2}c_{3},\:r_{1,2}c_{1}+r_{2,1}c_{1}-r_{1,1}c_{2}+r_{2,2}c_{2}+r_{2,3}c_{3},\:r_{1,1}c_{1}-r_{2,2}c_{1}+r_{1,2}c_{2}+r_{2,1}c_{2}+r_{1,3}c_{3},\:r_{2,3}r_{3,2}-r_{2,2}r_{3,3}+r_{1,1},\:r_{1,3}r_{3,2}-r_{1,2}r_{3,3}-r_{2,1},\:r_{3,1}^{2}+r_{3,2}^{2}+r_{3,3}^{2}-1,\:r_{2,3}r_{3,1}-r_{2,1}r_{3,3}-r_{1,2},\:r_{2,2}r_{3,1}-r_{2,1}r_{3,2}+r_{1,3},\:r_{2,1}r_{3,1}+r_{2,2}r_{3,2}+r_{2,3}r_{3,3},\:r_{1,3}r_{3,1}-r_{1,1}r_{3,3}+r_{2,2},\:r_{1,2}r_{3,1}-r_{1,1}r_{3,2}-r_{2,3},\:r_{1,1}r_{3,1}+r_{1,2}r_{3,2}+r_{1,3}r_{3,3},\:r_{1,3}r_{2,2}-r_{1,2}r_{2,3}+r_{3,1},\:r_{2,1}^{2}+r_{2,2}^{2}+r_{2,3}^{2}-1,\:r_{1,3}r_{2,1}-r_{1,1}r_{2,3}-r_{3,2},\:r_{1,2}r_{2,1}-r_{1,1}r_{2,2}+r_{3,3},\:r_{1,1}r_{2,1}+r_{1,2}r_{2,2}+r_{1,3}r_{2,3},\:r_{1,3}^{2}+r_{2,3}^{2}+r_{3,3}^{2}-1,\:r_{1,2}r_{1,3}+r_{2,2}r_{2,3}+r_{3,2}r_{3,3},\:r_{1,1}r_{1,3}+r_{2,1}r_{2,3}+r_{3,1}r_{3,3},\:r_{1,2}^{2}+r_{2,2}^{2}+r_{3,2}^{2}-1,\:r_{1,1}r_{1,2}+r_{2,1}r_{2,2}+r_{3,1}r_{3,2},\:r_{1,1}^{2}-r_{2,2}^{2}-r_{2,3}^{2}-r_{3,2}^{2}-r_{3,3}^{2}+1\}$`
`\underline{\tt o53}` : `$\texttt{List}$`
`\underline{\tt i54}` : --by investigating the first 2 equations, we see that t_1 and t_2 can be expressed in terms of R and t_3
      --(if the last entry of R is not equal to 1)
      --we substitute these expressions into all equations:
      Lsub = apply(L, eq -> sub(eq, {c_2 => (r_(3,2)*c_3)/(r_(3,3)-1), c_1 => (r_(3,1)*c_3)/(r_(3,3)-1)}))
`\underline{\tt o54}` = `$\{0,\\0,\\0,\\\frac{r_{3,1}^{2}c_{3}+r_{3,2}^{2}c_{3}+r_{3,3}^{2}c_{3}-c_{3}}{r_{3,3}-1},\\\frac{r_{2,3}r_{3,1}c_{3}-r_{1,3}r_{3,2}c_{3}+r_{1,2}r_{3,3}c_{3}-r_{2,1}r_{3,3}c_{3}-r_{1,2}c_{3}+r_{2,1}c_{3}}{r_{3,3}-1},\\\frac{r_{1,3}r_{3,1}c_{3}+r_{2,3}r_{3,2}c_{3}-r_{1,1}r_{3,3}c_{3}-r_{2,2}r_{3,3}c_{3}+r_{1,1}c_{3}+r_{2,2}c_{3}}{r_{3,3}-1},\\\frac{r_{1,2}r_{3,1}c_{3}+r_{2,1}r_{3,1}c_{3}-r_{1,1}r_{3,2}c_{3}+r_{2,2}r_{3,2}c_{3}+r_{2,3}r_{3,3}c_{3}-r_{2,3}c_{3}}{r_{3,3}-1},\\\frac{r_{1,1}r_{3,1}c_{3}-r_{2,2}r_{3,1}c_{3}+r_{1,2}r_{3,2}c_{3}+r_{2,1}r_{3,2}c_{3}+r_{1,3}r_{3,3}c_{3}-r_{1,3}c_{3}}{r_{3,3}-1},\\r_{2,3}r_{3,2}-r_{2,2}r_{3,3}+r_{1,1},\\r_{1,3}r_{3,2}-r_{1,2}r_{3,3}-r_{2,1},\\r_{3,1}^{2}+r_{3,2}^{2}+r_{3,3}^{2}-1,\\r_{2,3}r_{3,1}-r_{2,1}r_{3,3}-r_{1,2},\\r_{2,2}r_{3,1}-r_{2,1}r_{3,2}+r_{1,3},\\r_{2,1}r_{3,1}+r_{2,2}r_{3,2}+r_{2,3}r_{3,3},\\r_{1,3}r_{3,1}-r_{1,1}r_{3,3}+r_{2,2},\\r_{1,2}r_{3,1}-r_{1,1}r_{3,2}-r_{2,3},\\r_{1,1}r_{3,1}+r_{1,2}r_{3,2}+r_{1,3}r_{3,3},\\r_{1,3}r_{2,2}-r_{1,2}r_{2,3}+r_{3,1},\\r_{2,1}^{2}+r_{2,2}^{2}+r_{2,3}^{2}-1,\\r_{1,3}r_{2,1}-r_{1,1}r_{2,3}-r_{3,2},\\r_{1,2}r_{2,1}-r_{1,1}r_{2,2}+r_{3,3},\\r_{1,1}r_{2,1}+r_{1,2}r_{2,2}+r_{1,3}r_{2,3},\\r_{1,3}^{2}+r_{2,3}^{2}+r_{3,3}^{2}-1,\\r_{1,2}r_{1,3}+r_{2,2}r_{2,3}+r_{3,2}r_{3,3},\\r_{1,1}r_{1,3}+r_{2,1}r_{2,3}+r_{3,1}r_{3,3},\\r_{1,2}^{2}+r_{2,2}^{2}+r_{3,2}^{2}-1,\\r_{1,1}r_{1,2}+r_{2,1}r_{2,2}+r_{3,1}r_{3,2},\\r_{1,1}^{2}-r_{2,2}^{2}-r_{2,3}^{2}-r_{3,2}^{2}-r_{3,3}^{2}+1\}$`
`\underline{\tt o54}` : `$\texttt{List}$`
`\underline{\tt i55}` : --now we see that we obtained again the ideal of SO(3), meaning that there are no further constraints on R and t besides the formulas for t_1 and t_2 above 
      SO3 == ideal apply(Lsub, eq -> num eq)
`\underline{\tt o55}` = `$\texttt{true}$`
`\underline{\tt i56}` : --hence, for every R with last entry not equal 1, we find exactly one solution in t (up to scaling)
\end{lstlisting}
}



      

\newpage
\subsection{\texorpdfstring{$r_6(s)$ from Theorem \ref{thm:EDD2}}{r6(s) from Theorem 4.1}}

\begin{lstlisting}[language=Macaulay2output]
`\underline{\tt i1}` : clearAll
`\underline{\tt i2}` : R = QQ[a_1..a_2,l_1..l_4,y_1..y_4][s]
`\underline{\tt o2}` = `$R$`
`\underline{\tt o2}` : `$\texttt{PolynomialRing}$`
`\underline{\tt i3}` : q_1 = a_1
`\underline{\tt o3}` = `$a_{1}$`
`\underline{\tt o3}` : `${\mathbb Q}\mathopen{}[a_{1}\,{.}{.}\,a_{2},\,l_{1}\,{.}{.}\,l_{4},\,y_{1}\,{.}{.}\,y_{4}]$`
`\underline{\tt i4}` : q_2 = -a_1
`\underline{\tt o4}` = `$-a_{1}$`
`\underline{\tt o4}` : `${\mathbb Q}\mathopen{}[a_{1}\,{.}{.}\,a_{2},\,l_{1}\,{.}{.}\,l_{4},\,y_{1}\,{.}{.}\,y_{4}]$`
`\underline{\tt i5}` : q_3 = a_2
`\underline{\tt o5}` = `$a_{2}$`
`\underline{\tt o5}` : `${\mathbb Q}\mathopen{}[a_{1}\,{.}{.}\,a_{2},\,l_{1}\,{.}{.}\,l_{4},\,y_{1}\,{.}{.}\,y_{4}]$`
`\underline{\tt i6}` : q_4 = -a_2
`\underline{\tt o6}` = `$-a_{2}$`
`\underline{\tt o6}` : `${\mathbb Q}\mathopen{}[a_{1}\,{.}{.}\,a_{2},\,l_{1}\,{.}{.}\,l_{4},\,y_{1}\,{.}{.}\,y_{4}]$`
`\underline{\tt i7}` : scan(4, i -> e_(i+1) = s*q_(i+1)*y_(i+1)/( l_(i+1)-s*q_(i+1) )) 
`\underline{\tt i8}` : Rfunction = sum apply(4, i -> q_(i+1)*(y_(i+1) + e_(i+1))^2)
`\underline{\tt o8}` = `[very long expression]`
`\underline{\tt o8}` : `$\texttt{frac}\ R$`
`\underline{\tt i9}` : r6 = numerator Rfunction 
`\underline{\tt o9}` = `$(a_{1}^{3}a_{2}^{4}l_{1}^{2}y_{1}^{2}-a_{1}^{3}a_{2}^{4}l_{2}^{2}y_{2}^{2}+a_{1}^{4}a_{2}^{3}l_{3}^{2}y_{3}^{2}-a_{1}^{4}a_{2}^{3}l_{4}^{2}y_{4}^{2})s^{6}\\+(2\,a_{1}^{2}a_{2}^{4}l_{1}^{2}l_{2}y_{1}^{2}-2\,a_{1}^{3}a_{2}^{3}l_{1}^{2}l_{3}y_{1}^{2}+2\,a_{1}^{3}a_{2}^{3}l_{1}^{2}l_{4}y_{1}^{2}+2\,a_{1}^{2}a_{2}^{4}l_{1}l_{2}^{2}y_{2}^{2}+2\,a_{1}^{3}a_{2}^{3}l_{2}^{2}l_{3}y_{2}^{2}-2\,a_{1}^{3}a_{2}^{3}l_{2}^{2}l_{4}y_{2}^{2}-2\,a_{1}^{3}a_{2}^{3}l_{1}l_{3}^{2}y_{3}^{2}+2\,a_{1}^{3}a_{2}^{3}l_{2}l_{3}^{2}y_{3}^{2}+2\,a_{1}^{4}a_{2}^{2}l_{3}^{2}l_{4}y_{3}^{2}+2\,a_{1}^{3}a_{2}^{3}l_{1}l_{4}^{2}y_{4}^{2}-2\,a_{1}^{3}a_{2}^{3}l_{2}l_{4}^{2}y_{4}^{2}+2\,a_{1}^{4}a_{2}^{2}l_{3}l_{4}^{2}y_{4}^{2})s^{5}\\+(a_{1}a_{2}^{4}l_{1}^{2}l_{2}^{2}y_{1}^{2}-4\,a_{1}^{2}a_{2}^{3}l_{1}^{2}l_{2}l_{3}y_{1}^{2}+a_{1}^{3}a_{2}^{2}l_{1}^{2}l_{3}^{2}y_{1}^{2}+4\,a_{1}^{2}a_{2}^{3}l_{1}^{2}l_{2}l_{4}y_{1}^{2}-4\,a_{1}^{3}a_{2}^{2}l_{1}^{2}l_{3}l_{4}y_{1}^{2}+a_{1}^{3}a_{2}^{2}l_{1}^{2}l_{4}^{2}y_{1}^{2}-a_{1}a_{2}^{4}l_{1}^{2}l_{2}^{2}y_{2}^{2}-4\,a_{1}^{2}a_{2}^{3}l_{1}l_{2}^{2}l_{3}y_{2}^{2}-a_{1}^{3}a_{2}^{2}l_{2}^{2}l_{3}^{2}y_{2}^{2}+4\,a_{1}^{2}a_{2}^{3}l_{1}l_{2}^{2}l_{4}y_{2}^{2}+4\,a_{1}^{3}a_{2}^{2}l_{2}^{2}l_{3}l_{4}y_{2}^{2}-a_{1}^{3}a_{2}^{2}l_{2}^{2}l_{4}^{2}y_{2}^{2}+a_{1}^{2}a_{2}^{3}l_{1}^{2}l_{3}^{2}y_{3}^{2}-4\,a_{1}^{2}a_{2}^{3}l_{1}l_{2}l_{3}^{2}y_{3}^{2}+a_{1}^{2}a_{2}^{3}l_{2}^{2}l_{3}^{2}y_{3}^{2}-4\,a_{1}^{3}a_{2}^{2}l_{1}l_{3}^{2}l_{4}y_{3}^{2}+4\,a_{1}^{3}a_{2}^{2}l_{2}l_{3}^{2}l_{4}y_{3}^{2}+a_{1}^{4}a_{2}l_{3}^{2}l_{4}^{2}y_{3}^{2}-a_{1}^{2}a_{2}^{3}l_{1}^{2}l_{4}^{2}y_{4}^{2}+4\,a_{1}^{2}a_{2}^{3}l_{1}l_{2}l_{4}^{2}y_{4}^{2}-a_{1}^{2}a_{2}^{3}l_{2}^{2}l_{4}^{2}y_{4}^{2}-4\,a_{1}^{3}a_{2}^{2}l_{1}l_{3}l_{4}^{2}y_{4}^{2}+4\,a_{1}^{3}a_{2}^{2}l_{2}l_{3}l_{4}^{2}y_{4}^{2}-a_{1}^{4}a_{2}l_{3}^{2}l_{4}^{2}y_{4}^{2})s^{4}\\+(-2\,a_{1}a_{2}^{3}l_{1}^{2}l_{2}^{2}l_{3}y_{1}^{2}+2\,a_{1}^{2}a_{2}^{2}l_{1}^{2}l_{2}l_{3}^{2}y_{1}^{2}+2\,a_{1}a_{2}^{3}l_{1}^{2}l_{2}^{2}l_{4}y_{1}^{2}-8\,a_{1}^{2}a_{2}^{2}l_{1}^{2}l_{2}l_{3}l_{4}y_{1}^{2}+2\,a_{1}^{3}a_{2}l_{1}^{2}l_{3}^{2}l_{4}y_{1}^{2}+2\,a_{1}^{2}a_{2}^{2}l_{1}^{2}l_{2}l_{4}^{2}y_{1}^{2}-2\,a_{1}^{3}a_{2}l_{1}^{2}l_{3}l_{4}^{2}y_{1}^{2}+2\,a_{1}a_{2}^{3}l_{1}^{2}l_{2}^{2}l_{3}y_{2}^{2}+2\,a_{1}^{2}a_{2}^{2}l_{1}l_{2}^{2}l_{3}^{2}y_{2}^{2}-2\,a_{1}a_{2}^{3}l_{1}^{2}l_{2}^{2}l_{4}y_{2}^{2}-8\,a_{1}^{2}a_{2}^{2}l_{1}l_{2}^{2}l_{3}l_{4}y_{2}^{2}-2\,a_{1}^{3}a_{2}l_{2}^{2}l_{3}^{2}l_{4}y_{2}^{2}+2\,a_{1}^{2}a_{2}^{2}l_{1}l_{2}^{2}l_{4}^{2}y_{2}^{2}+2\,a_{1}^{3}a_{2}l_{2}^{2}l_{3}l_{4}^{2}y_{2}^{2}+2\,a_{1}a_{2}^{3}l_{1}^{2}l_{2}l_{3}^{2}y_{3}^{2}-2\,a_{1}a_{2}^{3}l_{1}l_{2}^{2}l_{3}^{2}y_{3}^{2}+2\,a_{1}^{2}a_{2}^{2}l_{1}^{2}l_{3}^{2}l_{4}y_{3}^{2}-8\,a_{1}^{2}a_{2}^{2}l_{1}l_{2}l_{3}^{2}l_{4}y_{3}^{2}+2\,a_{1}^{2}a_{2}^{2}l_{2}^{2}l_{3}^{2}l_{4}y_{3}^{2}-2\,a_{1}^{3}a_{2}l_{1}l_{3}^{2}l_{4}^{2}y_{3}^{2}+2\,a_{1}^{3}a_{2}l_{2}l_{3}^{2}l_{4}^{2}y_{3}^{2}-2\,a_{1}a_{2}^{3}l_{1}^{2}l_{2}l_{4}^{2}y_{4}^{2}+2\,a_{1}a_{2}^{3}l_{1}l_{2}^{2}l_{4}^{2}y_{4}^{2}+2\,a_{1}^{2}a_{2}^{2}l_{1}^{2}l_{3}l_{4}^{2}y_{4}^{2}-8\,a_{1}^{2}a_{2}^{2}l_{1}l_{2}l_{3}l_{4}^{2}y_{4}^{2}+2\,a_{1}^{2}a_{2}^{2}l_{2}^{2}l_{3}l_{4}^{2}y_{4}^{2}+2\,a_{1}^{3}a_{2}l_{1}l_{3}^{2}l_{4}^{2}y_{4}^{2}-2\,a_{1}^{3}a_{2}l_{2}l_{3}^{2}l_{4}^{2}y_{4}^{2})s^{3}\\+(a_{1}a_{2}^{2}l_{1}^{2}l_{2}^{2}l_{3}^{2}y_{1}^{2}-4\,a_{1}a_{2}^{2}l_{1}^{2}l_{2}^{2}l_{3}l_{4}y_{1}^{2}+4\,a_{1}^{2}a_{2}l_{1}^{2}l_{2}l_{3}^{2}l_{4}y_{1}^{2}+a_{1}a_{2}^{2}l_{1}^{2}l_{2}^{2}l_{4}^{2}y_{1}^{2}-4\,a_{1}^{2}a_{2}l_{1}^{2}l_{2}l_{3}l_{4}^{2}y_{1}^{2}+a_{1}^{3}l_{1}^{2}l_{3}^{2}l_{4}^{2}y_{1}^{2}-a_{1}a_{2}^{2}l_{1}^{2}l_{2}^{2}l_{3}^{2}y_{2}^{2}+4\,a_{1}a_{2}^{2}l_{1}^{2}l_{2}^{2}l_{3}l_{4}y_{2}^{2}+4\,a_{1}^{2}a_{2}l_{1}l_{2}^{2}l_{3}^{2}l_{4}y_{2}^{2}-a_{1}a_{2}^{2}l_{1}^{2}l_{2}^{2}l_{4}^{2}y_{2}^{2}-4\,a_{1}^{2}a_{2}l_{1}l_{2}^{2}l_{3}l_{4}^{2}y_{2}^{2}-a_{1}^{3}l_{2}^{2}l_{3}^{2}l_{4}^{2}y_{2}^{2}+a_{2}^{3}l_{1}^{2}l_{2}^{2}l_{3}^{2}y_{3}^{2}+4\,a_{1}a_{2}^{2}l_{1}^{2}l_{2}l_{3}^{2}l_{4}y_{3}^{2}-4\,a_{1}a_{2}^{2}l_{1}l_{2}^{2}l_{3}^{2}l_{4}y_{3}^{2}+a_{1}^{2}a_{2}l_{1}^{2}l_{3}^{2}l_{4}^{2}y_{3}^{2}-4\,a_{1}^{2}a_{2}l_{1}l_{2}l_{3}^{2}l_{4}^{2}y_{3}^{2}+a_{1}^{2}a_{2}l_{2}^{2}l_{3}^{2}l_{4}^{2}y_{3}^{2}-a_{2}^{3}l_{1}^{2}l_{2}^{2}l_{4}^{2}y_{4}^{2}+4\,a_{1}a_{2}^{2}l_{1}^{2}l_{2}l_{3}l_{4}^{2}y_{4}^{2}-4\,a_{1}a_{2}^{2}l_{1}l_{2}^{2}l_{3}l_{4}^{2}y_{4}^{2}-a_{1}^{2}a_{2}l_{1}^{2}l_{3}^{2}l_{4}^{2}y_{4}^{2}+4\,a_{1}^{2}a_{2}l_{1}l_{2}l_{3}^{2}l_{4}^{2}y_{4}^{2}-a_{1}^{2}a_{2}l_{2}^{2}l_{3}^{2}l_{4}^{2}y_{4}^{2})s^{2}\\+(2\,a_{1}a_{2}l_{1}^{2}l_{2}^{2}l_{3}^{2}l_{4}y_{1}^{2}-2\,a_{1}a_{2}l_{1}^{2}l_{2}^{2}l_{3}l_{4}^{2}y_{1}^{2}+2\,a_{1}^{2}l_{1}^{2}l_{2}l_{3}^{2}l_{4}^{2}y_{1}^{2}-2\,a_{1}a_{2}l_{1}^{2}l_{2}^{2}l_{3}^{2}l_{4}y_{2}^{2}+2\,a_{1}a_{2}l_{1}^{2}l_{2}^{2}l_{3}l_{4}^{2}y_{2}^{2}+2\,a_{1}^{2}l_{1}l_{2}^{2}l_{3}^{2}l_{4}^{2}y_{2}^{2}+2\,a_{2}^{2}l_{1}^{2}l_{2}^{2}l_{3}^{2}l_{4}y_{3}^{2}+2\,a_{1}a_{2}l_{1}^{2}l_{2}l_{3}^{2}l_{4}^{2}y_{3}^{2}-2\,a_{1}a_{2}l_{1}l_{2}^{2}l_{3}^{2}l_{4}^{2}y_{3}^{2}+2\,a_{2}^{2}l_{1}^{2}l_{2}^{2}l_{3}l_{4}^{2}y_{4}^{2}-2\,a_{1}a_{2}l_{1}^{2}l_{2}l_{3}^{2}l_{4}^{2}y_{4}^{2}+2\,a_{1}a_{2}l_{1}l_{2}^{2}l_{3}^{2}l_{4}^{2}y_{4}^{2})s\\+a_{1}l_{1}^{2}l_{2}^{2}l_{3}^{2}l_{4}^{2}y_{1}^{2}-a_{1}l_{1}^{2}l_{2}^{2}l_{3}^{2}l_{4}^{2}y_{2}^{2}+a_{2}l_{1}^{2}l_{2}^{2}l_{3}^{2}l_{4}^{2}y_{3}^{2}-a_{2}l_{1}^{2}l_{2}^{2}l_{3}^{2}l_{4}^{2}y_{4}^{2}$`
`\underline{\tt o9}` : `$R$`
\end{lstlisting}

\newpage
\subsection{\texorpdfstring{$r_4(s)$ from Theorem \ref{thm:EDD2}}{r4(s) from Theorem 4.1}}
\begin{lstlisting}[language=Macaulay2output]
`\underline{\tt i10}` : clearAll
`\underline{\tt i11}` : R = QQ[a_1..a_2,l_2,l_4,y_1..y_4,m][s]
`\underline{\tt o11}` = `$R$`
`\underline{\tt o11}` : `$\texttt{PolynomialRing}$`
`\underline{\tt i12}` : q_1 = a_1
`\underline{\tt o12}` = `$a_{1}$`
`\underline{\tt o12}` : `${\mathbb Q}\mathopen{}[a_{1}\,{.}{.}\,a_{2},\,l_{2},\,l_{4},\,y_{1}\,{.}{.}\,y_{4},\,m]$`
`\underline{\tt i13}` : q_2 = -a_1
`\underline{\tt o13}` = `$-a_{1}$`
`\underline{\tt o13}` : `${\mathbb Q}\mathopen{}[a_{1}\,{.}{.}\,a_{2},\,l_{2},\,l_{4},\,y_{1}\,{.}{.}\,y_{4},\,m]$`
`\underline{\tt i14}` : q_3 = a_2
`\underline{\tt o14}` = `$a_{2}$`
`\underline{\tt o14}` : `${\mathbb Q}\mathopen{}[a_{1}\,{.}{.}\,a_{2},\,l_{2},\,l_{4},\,y_{1}\,{.}{.}\,y_{4},\,m]$`
`\underline{\tt i15}` : q_4 = -a_2
`\underline{\tt o15}` = `$-a_{2}$`
`\underline{\tt o15}` : `${\mathbb Q}\mathopen{}[a_{1}\,{.}{.}\,a_{2},\,l_{2},\,l_{4},\,y_{1}\,{.}{.}\,y_{4},\,m]$`
`\underline{\tt i16}` : l_1 = m*a_1
`\underline{\tt o16}` = `$a_{1}m$`
`\underline{\tt o16}` : `${\mathbb Q}\mathopen{}[a_{1}\,{.}{.}\,a_{2},\,l_{2},\,l_{4},\,y_{1}\,{.}{.}\,y_{4},\,m]$`
`\underline{\tt i17}` : l_3 = m*a_2
`\underline{\tt o17}` = `$a_{2}m$`
`\underline{\tt o17}` : `${\mathbb Q}\mathopen{}[a_{1}\,{.}{.}\,a_{2},\,l_{2},\,l_{4},\,y_{1}\,{.}{.}\,y_{4},\,m]$`
`\underline{\tt i18}` : scan(4, i -> e_(i+1) = s*q_(i+1)*y_(i+1)/( l_(i+1)-s*q_(i+1) )) 
`\underline{\tt i19}` : Rfunction = sum apply(4, i -> q_(i+1)*(y_(i+1) + e_(i+1))^2)
`\underline{\tt o19}` = `[very long expression]`
`\underline{\tt o19}` : `$\texttt{frac}\ R$`
`\underline{\tt i20}` : r4 = numerator Rfunction
`\underline{\tt o20}` = `$(a_{1}^{3}a_{2}^{2}y_{1}^{2}m^{2}+a_{1}^{2}a_{2}^{3}y_{3}^{2}m^{2}-a_{1}a_{2}^{2}l_{2}^{2}y_{2}^{2}-a_{1}^{2}a_{2}l_{4}^{2}y_{4}^{2})s^{4}\\+(2\,a_{1}^{2}a_{2}^{2}l_{2}y_{1}^{2}m^{2}+2\,a_{1}^{3}a_{2}l_{4}y_{1}^{2}m^{2}+2\,a_{1}a_{2}^{3}l_{2}y_{3}^{2}m^{2}+2\,a_{1}^{2}a_{2}^{2}l_{4}y_{3}^{2}m^{2}+2\,a_{1}a_{2}^{2}l_{2}^{2}y_{2}^{2}m+2\,a_{1}^{2}a_{2}l_{4}^{2}y_{4}^{2}m-2\,a_{1}a_{2}l_{2}^{2}l_{4}y_{2}^{2}-2\,a_{1}a_{2}l_{2}l_{4}^{2}y_{4}^{2})s^{3}\\+(a_{1}a_{2}^{2}l_{2}^{2}y_{1}^{2}m^{2}+4\,a_{1}^{2}a_{2}l_{2}l_{4}y_{1}^{2}m^{2}+a_{1}^{3}l_{4}^{2}y_{1}^{2}m^{2}-a_{1}a_{2}^{2}l_{2}^{2}y_{2}^{2}m^{2}+a_{2}^{3}l_{2}^{2}y_{3}^{2}m^{2}+4\,a_{1}a_{2}^{2}l_{2}l_{4}y_{3}^{2}m^{2}+a_{1}^{2}a_{2}l_{4}^{2}y_{3}^{2}m^{2}-a_{1}^{2}a_{2}l_{4}^{2}y_{4}^{2}m^{2}+4\,a_{1}a_{2}l_{2}^{2}l_{4}y_{2}^{2}m+4\,a_{1}a_{2}l_{2}l_{4}^{2}y_{4}^{2}m-a_{1}l_{2}^{2}l_{4}^{2}y_{2}^{2}-a_{2}l_{2}^{2}l_{4}^{2}y_{4}^{2})s^{2}\\+(2\,a_{1}a_{2}l_{2}^{2}l_{4}y_{1}^{2}m^{2}+2\,a_{1}^{2}l_{2}l_{4}^{2}y_{1}^{2}m^{2}-2\,a_{1}a_{2}l_{2}^{2}l_{4}y_{2}^{2}m^{2}+2\,a_{2}^{2}l_{2}^{2}l_{4}y_{3}^{2}m^{2}+2\,a_{1}a_{2}l_{2}l_{4}^{2}y_{3}^{2}m^{2}-2\,a_{1}a_{2}l_{2}l_{4}^{2}y_{4}^{2}m^{2}+2\,a_{1}l_{2}^{2}l_{4}^{2}y_{2}^{2}m+2\,a_{2}l_{2}^{2}l_{4}^{2}y_{4}^{2}m)s\\+a_{1}l_{2}^{2}l_{4}^{2}y_{1}^{2}m^{2}-a_{1}l_{2}^{2}l_{4}^{2}y_{2}^{2}m^{2}+a_{2}l_{2}^{2}l_{4}^{2}y_{3}^{2}m^{2}-a_{2}l_{2}^{2}l_{4}^{2}y_{4}^{2}m^{2}$`
`\underline{\tt o20}` : `$R$`
\end{lstlisting}

\newpage
\subsection{\texorpdfstring{$r_2(s)$ from Theorem \ref{thm:EDD2}}{r2(s) from Theorem 4.1}}
\begin{lstlisting}[language=Macaulay2output]
`\underline{\tt i21}` : clearAll
`\underline{\tt i22}` : R = QQ[a_1..a_2,y_1..y_4,m,n][s]
`\underline{\tt o22}` = `$R$`
`\underline{\tt o22}` : `$\texttt{PolynomialRing}$`
`\underline{\tt i23}` : q_1 = a_1
`\underline{\tt o23}` = `$a_{1}$`
`\underline{\tt o23}` : `${\mathbb Q}\mathopen{}[a_{1}\,{.}{.}\,a_{2},\,y_{1}\,{.}{.}\,y_{4},\,m\,{.}{.}\,n]$`
`\underline{\tt i24}` : q_2 = -a_1
`\underline{\tt o24}` = `$-a_{1}$`
`\underline{\tt o24}` : `${\mathbb Q}\mathopen{}[a_{1}\,{.}{.}\,a_{2},\,y_{1}\,{.}{.}\,y_{4},\,m\,{.}{.}\,n]$`
`\underline{\tt i25}` : q_3 = a_2
`\underline{\tt o25}` = `$a_{2}$`
`\underline{\tt o25}` : `${\mathbb Q}\mathopen{}[a_{1}\,{.}{.}\,a_{2},\,y_{1}\,{.}{.}\,y_{4},\,m\,{.}{.}\,n]$`
`\underline{\tt i26}` : q_4 = -a_2
`\underline{\tt o26}` = `$-a_{2}$`
`\underline{\tt o26}` : `${\mathbb Q}\mathopen{}[a_{1}\,{.}{.}\,a_{2},\,y_{1}\,{.}{.}\,y_{4},\,m\,{.}{.}\,n]$`
`\underline{\tt i27}` : l_1 = m*a_1
`\underline{\tt o27}` = `$a_{1}m$`
`\underline{\tt o27}` : `${\mathbb Q}\mathopen{}[a_{1}\,{.}{.}\,a_{2},\,y_{1}\,{.}{.}\,y_{4},\,m\,{.}{.}\,n]$`
`\underline{\tt i28}` : l_2 = n*a_1
`\underline{\tt o28}` = `$a_{1}n$`
`\underline{\tt o28}` : `${\mathbb Q}\mathopen{}[a_{1}\,{.}{.}\,a_{2},\,y_{1}\,{.}{.}\,y_{4},\,m\,{.}{.}\,n]$`
`\underline{\tt i29}` : l_3 = m*a_2
`\underline{\tt o29}` = `$a_{2}m$`
`\underline{\tt o29}` : `${\mathbb Q}\mathopen{}[a_{1}\,{.}{.}\,a_{2},\,y_{1}\,{.}{.}\,y_{4},\,m\,{.}{.}\,n]$`
`\underline{\tt i30}` : l_4 = n*a_2
`\underline{\tt o30}` = `$a_{2}n$`
`\underline{\tt o30}` : `${\mathbb Q}\mathopen{}[a_{1}\,{.}{.}\,a_{2},\,y_{1}\,{.}{.}\,y_{4},\,m\,{.}{.}\,n]$`
`\underline{\tt i31}` : scan(4, i -> e_(i+1) = s*q_(i+1)*y_(i+1)/( l_(i+1)-s*q_(i+1) )) 
`\underline{\tt i32}` : Rfunction = sum apply(4, i -> q_(i+1)*(y_(i+1) + e_(i+1))^2)
`\underline{\tt o32}` = `[very long expression]`
`\underline{\tt o32}` : `$\texttt{frac}\ R$`
`\underline{\tt i33}` : r2 = numerator Rfunction -- coeff's A, B, C in the paper
`\underline{\tt o33}` = `$(a_{1}y_{1}^{2}m^{2}+a_{2}y_{3}^{2}m^{2}-a_{1}y_{2}^{2}n^{2}-a_{2}y_{4}^{2}n^{2})s^{2}\\+(2\,a_{1}y_{1}^{2}m^{2}n+2\,a_{2}y_{3}^{2}m^{2}n+2\,a_{1}y_{2}^{2}m\,n^{2}+2\,a_{2}y_{4}^{2}m\,n^{2})s\\+a_{1}y_{1}^{2}m^{2}n^{2}-a_{1}y_{2}^{2}m^{2}n^{2}+a_{2}y_{3}^{2}m^{2}n^{2}-a_{2}y_{4}^{2}m^{2}n^{2}$`
`\underline{\tt o33}` : `$R$`
\end{lstlisting}

\newpage
\subsection{Theorem \texorpdfstring{\ref{thm: unique nu}}{4.3}}

{\footnotesize
\begin{lstlisting}[language=Macaulay2output]
`\underline{\tt i60}` : clearAll
`\underline{\tt i61}` : R = QQ[a_1..a_2,y_1..y_4,n][s] 
`\underline{\tt o61}` = `$R$`
`\underline{\tt o61}` : `$\texttt{PolynomialRing}$`
`\underline{\tt i62}` : q_1 = a_1
`\underline{\tt o62}` = `$a_{1}$`
`\underline{\tt o62}` : `${\mathbb Q}\mathopen{}[a_{1}\,{.}{.}\,a_{2},\,y_{1}\,{.}{.}\,y_{4},\,n]$`
`\underline{\tt i63}` : q_2 = -a_1
`\underline{\tt o63}` = `$-a_{1}$`
`\underline{\tt o63}` : `${\mathbb Q}\mathopen{}[a_{1}\,{.}{.}\,a_{2},\,y_{1}\,{.}{.}\,y_{4},\,n]$`
`\underline{\tt i64}` : q_3 = a_2
`\underline{\tt o64}` = `$a_{2}$`
`\underline{\tt o64}` : `${\mathbb Q}\mathopen{}[a_{1}\,{.}{.}\,a_{2},\,y_{1}\,{.}{.}\,y_{4},\,n]$`
`\underline{\tt i65}` : q_4 = -a_2
`\underline{\tt o65}` = `$-a_{2}$`
`\underline{\tt o65}` : `${\mathbb Q}\mathopen{}[a_{1}\,{.}{.}\,a_{2},\,y_{1}\,{.}{.}\,y_{4},\,n]$`
`\underline{\tt i66}` : l_1 = a_1 -- m has been put to 1
`\underline{\tt o66}` = `$a_{1}$`
`\underline{\tt o66}` : `${\mathbb Q}\mathopen{}[a_{1}\,{.}{.}\,a_{2},\,y_{1}\,{.}{.}\,y_{4},\,n]$`
`\underline{\tt i67}` : l_2 = n*a_1
`\underline{\tt o67}` = `$a_{1}n$`
`\underline{\tt o67}` : `${\mathbb Q}\mathopen{}[a_{1}\,{.}{.}\,a_{2},\,y_{1}\,{.}{.}\,y_{4},\,n]$`
`\underline{\tt i68}` : l_3 = a_2
`\underline{\tt o68}` = `$a_{2}$`
`\underline{\tt o68}` : `${\mathbb Q}\mathopen{}[a_{1}\,{.}{.}\,a_{2},\,y_{1}\,{.}{.}\,y_{4},\,n]$`
`\underline{\tt i69}` : l_4 = n*a_2
`\underline{\tt o69}` = `$a_{2}n$`
`\underline{\tt o69}` : `${\mathbb Q}\mathopen{}[a_{1}\,{.}{.}\,a_{2},\,y_{1}\,{.}{.}\,y_{4},\,n]$`
`\underline{\tt i70}` : scan(4, i -> e_(i+1) = s*q_(i+1)*y_(i+1)/( l_(i+1)-s*q_(i+1) )) 
`\underline{\tt i71}` : Rfunction = sum apply(4, i -> q_(i+1)*(y_(i+1) + e_(i+1))^2)
`\underline{\tt o71}` = `$\frac{(-a_{1}y_{2}^{2}n^{2}-a_{2}y_{4}^{2}n^{2}+a_{1}y_{1}^{2}+a_{2}y_{3}^{2})s^{2}+(2\,a_{1}y_{2}^{2}n^{2}+2\,a_{2}y_{4}^{2}n^{2}+2\,a_{1}y_{1}^{2}n+2\,a_{2}y_{3}^{2}n)s+a_{1}y_{1}^{2}n^{2}-a_{1}y_{2}^{2}n^{2}+a_{2}y_{3}^{2}n^{2}-a_{2}y_{4}^{2}n^{2}}{s^{4}+(2\,n-2)s^{3}+(n^{2}-4\,n+1)s^{2}+(-2\,n^{2}+2\,n)s+n^{2}}$`
`\underline{\tt o71}` : `$\texttt{frac}\ R$`
`\underline{\tt i72}` : r2 = numerator Rfunction 
`\underline{\tt o72}` = `$(-a_{1}y_{2}^{2}n^{2}-a_{2}y_{4}^{2}n^{2}+a_{1}y_{1}^{2}+a_{2}y_{3}^{2})s^{2}+(2\,a_{1}y_{2}^{2}n^{2}+2\,a_{2}y_{4}^{2}n^{2}+2\,a_{1}y_{1}^{2}n+2\,a_{2}y_{3}^{2}n)s+a_{1}y_{1}^{2}n^{2}-a_{1}y_{2}^{2}n^{2}+a_{2}y_{3}^{2}n^{2}-a_{2}y_{4}^{2}n^{2}$`
`\underline{\tt o72}` : `$R$`
`\underline{\tt i73}` : coeffs = coefficients r2
`\underline{\tt o73}` = `$((\!\begin{array}{ccc}
s^{2}&s&1
\end{array}\!),\,\begin{array}{l}\{2,\:0\}\vphantom{-a_{1}y_{2}^{2}n^{2}-a_{2}y_{4}^{2}n^{2}+a_{1}y_{1}^{2}+a_{2}y_{3}^{2}}\\\{1,\:0\}\vphantom{2\,a_{1}y_{2}^{2}n^{2}+2\,a_{2}y_{4}^{2}n^{2}+2\,a_{1}y_{1}^{2}n+2\,a_{2}y_{3}^{2}n}\\\{0,\:0\}\vphantom{a_{1}y_{1}^{2}n^{2}-a_{1}y_{2}^{2}n^{2}+a_{2}y_{3}^{2}n^{2}-a_{2}y_{4}^{2}n^{2}}\end{array}(\!\begin{array}{c}
\vphantom{\{2,\:0\}}-a_{1}y_{2}^{2}n^{2}-a_{2}y_{4}^{2}n^{2}+a_{1}y_{1}^{2}+a_{2}y_{3}^{2}\\
\vphantom{\{1,\:0\}}2\,a_{1}y_{2}^{2}n^{2}+2\,a_{2}y_{4}^{2}n^{2}+2\,a_{1}y_{1}^{2}n+2\,a_{2}y_{3}^{2}n\\
\vphantom{\{0,\:0\}}a_{1}y_{1}^{2}n^{2}-a_{1}y_{2}^{2}n^{2}+a_{2}y_{3}^{2}n^{2}-a_{2}y_{4}^{2}n^{2}
\end{array}\!))$`
`\underline{\tt o73}` : `$\texttt{Sequence}$`
`\underline{\tt i74}` : 
      S = QQ[a_1..a_2,y_1..y_4,n][sqdelta] -- We introduce the squareroot of delta as a variable. 
`\underline{\tt o74}` = `$S$`
`\underline{\tt o74}` : `$\texttt{PolynomialRing}$`
`\underline{\tt i75}` : 
      coeffs = sub( coeffs#1, S )
`\underline{\tt o75}` = `$\begin{array}{l}\{2,\:0\}\vphantom{-a_{1}y_{2}^{2}n^{2}-a_{2}y_{4}^{2}n^{2}+a_{1}y_{1}^{2}+a_{2}y_{3}^{2}}\\\{1,\:0\}\vphantom{2\,a_{1}y_{2}^{2}n^{2}+2\,a_{2}y_{4}^{2}n^{2}+2\,a_{1}y_{1}^{2}n+2\,a_{2}y_{3}^{2}n}\\\{0,\:0\}\vphantom{a_{1}y_{1}^{2}n^{2}-a_{1}y_{2}^{2}n^{2}+a_{2}y_{3}^{2}n^{2}-a_{2}y_{4}^{2}n^{2}}\end{array}(\!\begin{array}{c}
\vphantom{\{2,\:0\}}-a_{1}y_{2}^{2}n^{2}-a_{2}y_{4}^{2}n^{2}+a_{1}y_{1}^{2}+a_{2}y_{3}^{2}\\
\vphantom{\{1,\:0\}}2\,a_{1}y_{2}^{2}n^{2}+2\,a_{2}y_{4}^{2}n^{2}+2\,a_{1}y_{1}^{2}n+2\,a_{2}y_{3}^{2}n\\
\vphantom{\{0,\:0\}}a_{1}y_{1}^{2}n^{2}-a_{1}y_{2}^{2}n^{2}+a_{2}y_{3}^{2}n^{2}-a_{2}y_{4}^{2}n^{2}
\end{array}\!)$`
`\underline{\tt o75}` : `$\texttt{Matrix}$ $S^{3}\,\longleftarrow \,S^{1}$`
`\underline{\tt i76}` : A = coeffs_(0,0)
`\underline{\tt o76}` = `$-a_{1}y_{2}^{2}n^{2}-a_{2}y_{4}^{2}n^{2}+a_{1}y_{1}^{2}+a_{2}y_{3}^{2}$`
`\underline{\tt o76}` : `$S$`
`\underline{\tt i77}` : B = coeffs_(1,0)
`\underline{\tt o77}` = `$2\,a_{1}y_{2}^{2}n^{2}+2\,a_{2}y_{4}^{2}n^{2}+2\,a_{1}y_{1}^{2}n+2\,a_{2}y_{3}^{2}n$`
`\underline{\tt o77}` : `$S$`
`\underline{\tt i78}` : C = coeffs_(2,0) 
`\underline{\tt o78}` = `$a_{1}y_{1}^{2}n^{2}-a_{1}y_{2}^{2}n^{2}+a_{2}y_{3}^{2}n^{2}-a_{2}y_{4}^{2}n^{2}$`
`\underline{\tt o78}` : `$S$`
`\underline{\tt i79}` : Delta = B^2 - 4*A*C 
`\underline{\tt o79}` = `$4\,a_{1}^{2}y_{1}^{2}y_{2}^{2}n^{4}+4\,a_{1}a_{2}y_{2}^{2}y_{3}^{2}n^{4}+4\,a_{1}a_{2}y_{1}^{2}y_{4}^{2}n^{4}+4\,a_{2}^{2}y_{3}^{2}y_{4}^{2}n^{4}+8\,a_{1}^{2}y_{1}^{2}y_{2}^{2}n^{3}+8\,a_{1}a_{2}y_{2}^{2}y_{3}^{2}n^{3}+8\,a_{1}a_{2}y_{1}^{2}y_{4}^{2}n^{3}+8\,a_{2}^{2}y_{3}^{2}y_{4}^{2}n^{3}+4\,a_{1}^{2}y_{1}^{2}y_{2}^{2}n^{2}+4\,a_{1}a_{2}y_{2}^{2}y_{3}^{2}n^{2}+4\,a_{1}a_{2}y_{1}^{2}y_{4}^{2}n^{2}+4\,a_{2}^{2}y_{3}^{2}y_{4}^{2}n^{2}$`
`\underline{\tt o79}` : `$S$`
`\underline{\tt i80}` : delta = numerator ( ( Delta ) / ( 4*n^2*(n+1)^2 ) ) -- We factor out 4*n^2*(n+1)^2 from Delta to get delta.
`\underline{\tt o80}` = `$a_{1}^{2}y_{1}^{2}y_{2}^{2}+a_{1}a_{2}y_{2}^{2}y_{3}^{2}+a_{1}a_{2}y_{1}^{2}y_{4}^{2}+a_{2}^{2}y_{3}^{2}y_{4}^{2}$`
`\underline{\tt o80}` : `$S$`
`\underline{\tt i81}` : 
      splus = ( - B + 2*n*( n + 1 )*sqdelta ) / ( 2*A )
`\underline{\tt o81}` = `$\frac{(-n^{2}-n)\mathit{sqdelta}+a_{1}y_{2}^{2}n^{2}+a_{2}y_{4}^{2}n^{2}+a_{1}y_{1}^{2}n+a_{2}y_{3}^{2}n}{a_{1}y_{2}^{2}n^{2}+a_{2}y_{4}^{2}n^{2}-a_{1}y_{1}^{2}-a_{2}y_{3}^{2}}$`
`\underline{\tt o81}` : `$\texttt{frac}\ S$`
`\underline{\tt i82}` : sminus = ( - B - 2*n*( n + 1 )*sqdelta ) / ( 2*A )
`\underline{\tt o82}` = `$\frac{(n^{2}+n)\mathit{sqdelta}+a_{1}y_{2}^{2}n^{2}+a_{2}y_{4}^{2}n^{2}+a_{1}y_{1}^{2}n+a_{2}y_{3}^{2}n}{a_{1}y_{2}^{2}n^{2}+a_{2}y_{4}^{2}n^{2}-a_{1}y_{1}^{2}-a_{2}y_{3}^{2}}$`
`\underline{\tt o82}` : `$\texttt{frac}\ S$`
`\underline{\tt i83}` : scan(4, i -> l_(i+1) = sub( l_(i+1), S ))
`\underline{\tt i84}` : scan(4, i -> q_(i+1) = sub( q_(i+1), S ))
`\underline{\tt i85}` : scan(4, i -> eplus_(i+1) =  splus*q_(i+1)*y_(i+1) / ( l_(i+1) - splus*q_(i+1) ) )
`\underline{\tt i86}` : scan(4, i -> eminus_(i+1) =  sminus*q_(i+1)*y_(i+1) / ( l_(i+1) - sminus*q_(i+1) ) )
`\underline{\tt i87}` : 
      subdeltasquare = (polynom) -> ( -- This function replaces sqdelta^2 with delta
      deg = degree( sqdelta, polynom );
      coeffspoly = for i from 0 to deg list coefficient( sqdelta^i, polynom );
      subpoly = 0;
      
      for i from 0 to deg do (
      if mod( i, 2 ) == 0 then (
      subpoly = subpoly + coeffspoly_(i) * delta^( numerator( i / 2 ) );       
      ) else (
      subpoly = subpoly + coeffspoly_(i) * delta^( numerator( ( i - 1 ) / 2 ) ) * sqdelta; 
      );
      );
      subpoly
      )
`\underline{\tt o87}` = `$\texttt{subdeltasquare}$`
`\underline{\tt o87}` : `$\texttt{FunctionClosure}$`
`\underline{\tt i88}` : 
      conjugaterule = (rational) -> ( -- This function uses the conjugate rule to get rid of sqdelta in the denominator
      ratnum = numerator rational;
      ratden = denominator rational;
      
      ratnum = ratnum * (coefficient( sqdelta, ratden )*sqdelta - coefficient( sqdelta^0, ratden));
      ratden = ratden * (coefficient( sqdelta, ratden )*sqdelta - coefficient( sqdelta^0, ratden));
      subdeltasquare(ratnum) / subdeltasquare(ratden)
      )
`\underline{\tt o88}` = `$\texttt{conjugaterule}$`
`\underline{\tt o88}` : `$\texttt{FunctionClosure}$`
`\underline{\tt i89}` : 
      weightedsumplus = sum apply(4, i -> l_(i+1)*( eplus_(i+1) )^2 ) -- Weighted sum with the + critical point
`\underline{\tt o89}` = `[very long expression]`
`\underline{\tt o89}` : `$\texttt{frac}\ S$`
`\underline{\tt i90}` : weightedsumminus = sum apply(4, i -> l_(i+1)*( eminus_(i+1) )^2 ) -- Weighted sum with the - critical point
`\underline{\tt o90}` = `[very long expression]`
`\underline{\tt o90}` : `$\texttt{frac}\ S$`
`\underline{\tt i91}` : sumplus = sum apply(4, i -> ( eplus_(i+1) )^2 ) -- Non-weighted sum with the + critical point
`\underline{\tt o91}` = `[very long expression]`
`\underline{\tt o91}` : `$\texttt{frac}\ S$`
`\underline{\tt i92}` : summinus = sum apply(4, i -> ( eminus_(i+1) )^2 ) -- Non-weighted sum with the - critical point
`\underline{\tt o92}` = `[very long expression]`
`\underline{\tt o92}` : `$\texttt{frac}\ S$`
`\underline{\tt i93}` : 
      -- We replace squares of sqdelta with delta and use the conjugate rule to remove sqdelta from the denominator:
      weightedsumplus = conjugaterule ( subdeltasquare(numerator weightedsumplus) / subdeltasquare(denominator weightedsumplus) ) 
`\underline{\tt o93}` = `$\frac{-2\,n\,\mathit{sqdelta}+a_{1}y_{1}^{2}n+a_{1}y_{2}^{2}n+a_{2}y_{3}^{2}n+a_{2}y_{4}^{2}n}{n+1}$`
`\underline{\tt o93}` : `$\texttt{frac}\ S$`
`\underline{\tt i94}` : weightedsumminus = conjugaterule ( subdeltasquare(numerator weightedsumminus) / subdeltasquare(denominator weightedsumminus) )
`\underline{\tt o94}` = `$\frac{2\,n\,\mathit{sqdelta}+a_{1}y_{1}^{2}n+a_{1}y_{2}^{2}n+a_{2}y_{3}^{2}n+a_{2}y_{4}^{2}n}{n+1}$`
`\underline{\tt o94}` : `$\texttt{frac}\ S$`
`\underline{\tt i95}` : sumplus = conjugaterule ( subdeltasquare(numerator sumplus) / subdeltasquare(denominator sumplus) )
`\underline{\tt o95}` = `[very long expression]`
`\underline{\tt o95}` : `$\texttt{frac}\ S$`
`\underline{\tt i96}` : summinus = conjugaterule ( subdeltasquare(numerator summinus) / subdeltasquare(denominator summinus) )
`\underline{\tt o96}` = `[very long expression]`
`\underline{\tt o96}` : `$\texttt{frac}\ S$`
`\underline{\tt i97}` : 
      -- We get the identities from the statement by observing weighhtedsumplus and weightedsumminus directly.
      -- To examine the non-weighted sums, we first look at their denominators:
      factor denominator sumplus
`\underline{\tt o97}` = `$(n+1)^{2}(a_{1}y_{2}^{2}+a_{2}y_{4}^{2})(a_{1}y_{1}^{2}+a_{2}y_{3}^{2})$`
`\underline{\tt o97}` : `$\texttt{Expression}$ of class $\texttt{Product}$`
`\underline{\tt i98}` : factor denominator summinus
`\underline{\tt o98}` = `$(n+1)^{2}(a_{1}y_{2}^{2}+a_{2}y_{4}^{2})(a_{1}y_{1}^{2}+a_{2}y_{3}^{2})$`
`\underline{\tt o98}` : `$\texttt{Expression}$ of class $\texttt{Product}$`
`\underline{\tt i99}` : 
      -- And next their numerators:
      numsumplus = factor numerator sumplus
`\underline{\tt o99}` = `$(a_{1}y_{1}^{2}y_{2}^{2}n^{2}+a_{1}y_{2}^{2}y_{3}^{2}n^{2}+a_{2}y_{1}^{2}y_{4}^{2}n^{2}+a_{2}y_{3}^{2}y_{4}^{2}n^{2}+a_{1}y_{1}^{2}y_{2}^{2}+a_{2}y_{2}^{2}y_{3}^{2}+a_{1}y_{1}^{2}y_{4}^{2}+a_{2}y_{3}^{2}y_{4}^{2})\\ \cdot(2\,\mathit{sqdelta}-a_{1}y_{1}^{2}-a_{1}y_{2}^{2}-a_{2}y_{3}^{2}-a_{2}y_{4}^{2})(-1)$`
`\underline{\tt o99}` : `$\texttt{Expression}$ of class $\texttt{Product}$`
`\underline{\tt i100}` : numsumminus = factor numerator summinus
`\underline{\tt o100}` = `$(a_{1}y_{1}^{2}y_{2}^{2}n^{2}+a_{1}y_{2}^{2}y_{3}^{2}n^{2}+a_{2}y_{1}^{2}y_{4}^{2}n^{2}+a_{2}y_{3}^{2}y_{4}^{2}n^{2}+a_{1}y_{1}^{2}y_{2}^{2}+a_{2}y_{2}^{2}y_{3}^{2}+a_{1}y_{1}^{2}y_{4}^{2}+a_{2}y_{3}^{2}y_{4}^{2})\\ \cdot(2\,\mathit{sqdelta}+a_{1}y_{1}^{2}+a_{1}y_{2}^{2}+a_{2}y_{3}^{2}+a_{2}y_{4}^{2})$`
`\underline{\tt o100}` : `$\texttt{Expression}$ of class $\texttt{Product}$`
`\underline{\tt i101}` : 
       -- A factor of alpha^+, respectively alpha^-, comes out of the sum.
       -- Factoring these out, the numerators are the same: S*n^2 + T. 
       -- We next factor the coefficient S and T
       T = QQ[a_1..a_2,y_1..y_4][n] 
`\underline{\tt o101}` = `$T$`
`\underline{\tt o101}` : `$\texttt{PolynomialRing}$`
`\underline{\tt i102}` : 
       numsumplus = sub(numsumplus#0#0, T)
`\underline{\tt o102}` = `$(a_{1}y_{1}^{2}y_{2}^{2}+a_{1}y_{2}^{2}y_{3}^{2}+a_{2}y_{1}^{2}y_{4}^{2}+a_{2}y_{3}^{2}y_{4}^{2})n^{2}+a_{1}y_{1}^{2}y_{2}^{2}+a_{2}y_{2}^{2}y_{3}^{2}+a_{1}y_{1}^{2}y_{4}^{2}+a_{2}y_{3}^{2}y_{4}^{2}$`
`\underline{\tt o102}` : `$T$`
`\underline{\tt i103}` : S = coefficient(n^2, numsumplus)
`\underline{\tt o103}` = `$a_{1}y_{1}^{2}y_{2}^{2}+a_{1}y_{2}^{2}y_{3}^{2}+a_{2}y_{1}^{2}y_{4}^{2}+a_{2}y_{3}^{2}y_{4}^{2}$`
`\underline{\tt o103}` : `${\mathbb Q}\mathopen{}[a_{1}\,{.}{.}\,a_{2},\,y_{1}\,{.}{.}\,y_{4}]$`
`\underline{\tt i104}` : T = coefficient(n^0, numsumplus)
`\underline{\tt o104}` = `$a_{1}y_{1}^{2}y_{2}^{2}+a_{2}y_{2}^{2}y_{3}^{2}+a_{1}y_{1}^{2}y_{4}^{2}+a_{2}y_{3}^{2}y_{4}^{2}$`
`\underline{\tt o104}` : `${\mathbb Q}\mathopen{}[a_{1}\,{.}{.}\,a_{2},\,y_{1}\,{.}{.}\,y_{4}]$`
`\underline{\tt i105}` : 
       -- Here we see that S and T are the same and in the paper:
       factor S
`\underline{\tt o105}` = `$(y_{1}^{2}+y_{3}^{2})(a_{1}y_{2}^{2}+a_{2}y_{4}^{2})$`
`\underline{\tt o105}` : `$\texttt{Expression}$ of class $\texttt{Product}$`
`\underline{\tt i106}` : factor T
`\underline{\tt o106}` = `$(y_{2}^{2}+y_{4}^{2})(a_{1}y_{1}^{2}+a_{2}y_{3}^{2})$`
`\underline{\tt o106}` : `$\texttt{Expression}$ of class $\texttt{Product}$`
\end{lstlisting}
}

}
\end{document}